\documentclass[11pt,letterpaper,twocolumn,teaser]{planstyle}
\usepackage[T1]{fontenc}
\usepackage{microtype}           
\usepackage{nicefrac}            
\usepackage{xspace}              
\usepackage{amsmath, amssymb, amsfonts, amsthm, mathtools, mathrsfs}
\usepackage{dsfont}              
\usepackage{fdsymbol}            

\usepackage{booktabs}            
\usepackage{nicematrix}         
\usepackage{multirow}
\usepackage{makecell}
\usepackage{bigstrut}
\usepackage{enumitem}            
\setitemize{label=\textbullet, leftmargin=*, nolistsep}
\usepackage{graphicx}
\usepackage{adjustbox}           
\usepackage{float}               
\usepackage{wrapfig}             
\usepackage{subcaption}          
\captionsetup[sub]{justification=centering} 
\expandafter\def\csname ver@subfig.sty\endcsname{}  
\usepackage[dvipsnames]{xcolor}    
\usepackage{color}
\usepackage{fontawesome5}        
\usepackage{pifont}              
\usepackage{hyperref}            
\hypersetup{colorlinks=true, citecolor=LightBlue}
\usepackage[all]{hypcap}         
\usepackage{cleveref}            
\usepackage{url}                 
\usepackage{algorithm}
\usepackage{algorithmicx}
\usepackage{algpseudocode}
\usepackage[normalem]{ulem}      
\usepackage{lipsum}              
\usepackage{rotating}            
\usepackage{stackengine}         
\usepackage{caption}             
\usepackage{listings}            
\usepackage{soul}                
\usepackage{gradient-text}       
\usepackage{pgfplots}            
\pgfplotsset{compat=newest}
\usepackage{pgfplotstable}       
\usepackage{tikz}                
\usepackage{svg}                 

\usepackage[comma,numbers,sort,compress]{natbib}
\definecolor{demphcolor}{RGB}{125,125,125}    

\hypersetup{
    colorlinks = true,
    citecolor = {YaleBlue},
}
\lstset{
  breaklines=true,
  basicstyle=\ttfamily\small,
  frame=single
}
\setlength\parindent{0pt}

\tcbuselibrary{breakable, skins}
\newtcolorbox{planbox}[1]{
  enhanced,
  breakable,
  colback=white,
  colframe=IllinoisOrange!80,
  coltitle=IllinoisBlue,
  fonttitle=\bfseries\sffamily,
  title=#1,
  titlerule=0.8pt,
  boxrule=1pt,
  left=3mm, right=3mm, top=2mm, bottom=2mm,
  boxsep=1mm,
  before upper=\smallskip,
}



\newcommand{\ie}{\textit{i.e.},\xspace}      
\newcommand{\eg}{\textit{e.g.},\xspace}      
\newcommand{\etc}{\textit{etc}.\xspace}      

\crefname{equation}{Eq.}{Eqs.}
\crefformat{section}{\S#2#1#3}
\crefformat{subsection}{\S#2#1#3}
\crefformat{subsubsection}{\S#2#1#3}
\crefrangeformat{section}{\S\S#3#1#4 to~#5#2#6}
\crefmultiformat{section}{\S\S#2#1#3}{ and~#2#1#3}{, #2#1#3}{ and~#2#1#3}

\newcommand{\cmark}{\textcolor{ForestGreen}{\ding{51}}}  
\newcommand{\xmark}{\textcolor{red}{\ding{55}}}     


\definecolor{fire0}{HTML}{FFF2B2} 
\definecolor{fire1}{HTML}{C1121F} 
\definecolor{fire2}{HTML}{FFB347} 
\definecolor{fire3}{HTML}{FF8A3D} 
\definecolor{fire4}{HTML}{FF4D00} 
\definecolor{fire5}{HTML}{FF4A3A} 
\definecolor{fire6}{HTML}{CE0A18} 

\definecolor{UTAustin}{HTML}{BF5700}
\definecolor{pyraAmber}{HTML}{FFB347} 
\definecolor{pyraOrange}{HTML}{FF6B3D} 
\definecolor{pyraRed}{HTML}{E63946} 

\definecolor{pyraPurple}{HTML}{824db5}
\definecolor{pyraYellow}{HTML}{d8cd7c}
\definecolor{pyraGreen}{HTML}{54931c}
\definecolor{pyraBlue}{HTML}{36aece}
\definecolor{pyraGray}{HTML}{b3b3b3}
\definecolor{pyraPink}{HTML}{c1447d}

\newtheorem{proposition}{Proposition}

\newcommand{\affmark}[2]{
  {\textsuperscript{\textcolor{#1}{\raisebox{0.15ex}{\hspace{-0.1em}\scriptsize$#2$}}}}%
}

\newcommand{\AffUIUC}{\affmark{IllinoisOrange}{\blacklozenge}}
\newcommand{\AffInd}{\affmark{Blue}{\clubsuit}}
\newcommand{\AffUT}{\affmark{UTAustin}{\varheartsuit}}
\newcommand{\AffGoogle}{\affmark{ForestGreen}{\spadesuit}}

\newcommand{\modelname}{\textbf{{\textcolor{pyraPurple}{P}\textcolor{pyraPink}{y}\textcolor{pyraGreen}{r}\textcolor{pyraBlue}{a}\textcolor{pyraGray}{Tok}}}\xspace}
\newcommand{\modelnamenc}{PyraTok\xspace}
\newcommand{\modelnamecpa}{LaPQ\xspace}

\title{%
  \begin{tabular}{@{}c@{\hspace{8pt}}l@{}}
    \raisebox{-1.7em}{\includegraphics[width=2cm]{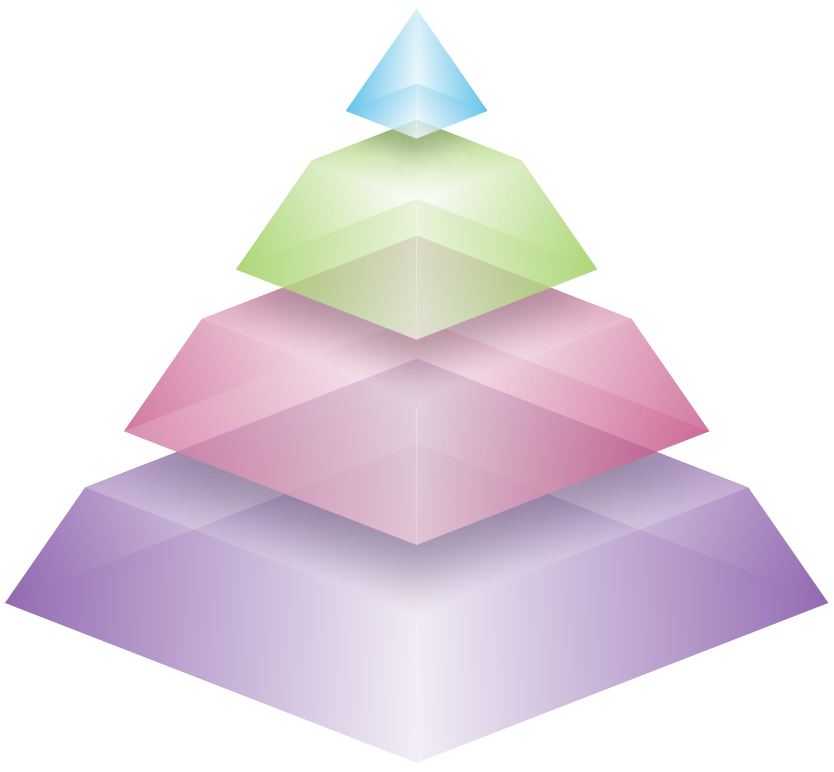}} &
    \begin{tabular}[t]{@{}l@{}}
      \textbf{\modelname: Language-Aligned \textcolor{pyraPurple}{P}\textcolor{pyraPink}{y}\textcolor{pyraGreen}{r}\textcolor{pyraBlue}{a}midal} {\textcolor{pyraGray}{Tok}enizer} \\
      \textbf{for Video Understanding and Generation}
    \end{tabular}
  \end{tabular}%
}

\author{
\vspace{-0.5cm}
\begin{tabular}{c}
\textbf{Onkar Susladkar\AffUIUC \quad Tushar Prakash\AffInd \quad Adheesh Juvekar\AffUIUC \quad Kiet A. Nguyen\AffUIUC}\\ 
\textbf{Dong-Hwan Jang\AffUIUC \quad Inderjit S Dhillon\AffUT\AffGoogle \quad Ismini Lourentzou\AffUIUC}
\end{tabular}
}

\affil{
\AffUIUC University of Illinois Urbana-Champaign \quad
\AffInd Independent Researcher \quad
\AffUT UTAustin \quad
\AffGoogle Google}

\begin{document}
\newcommand{\FigAbstract}{
\twocolumn[{
\maketitle
\begin{center}
    \captionsetup{type=figure} \includegraphics[width=\linewidth]{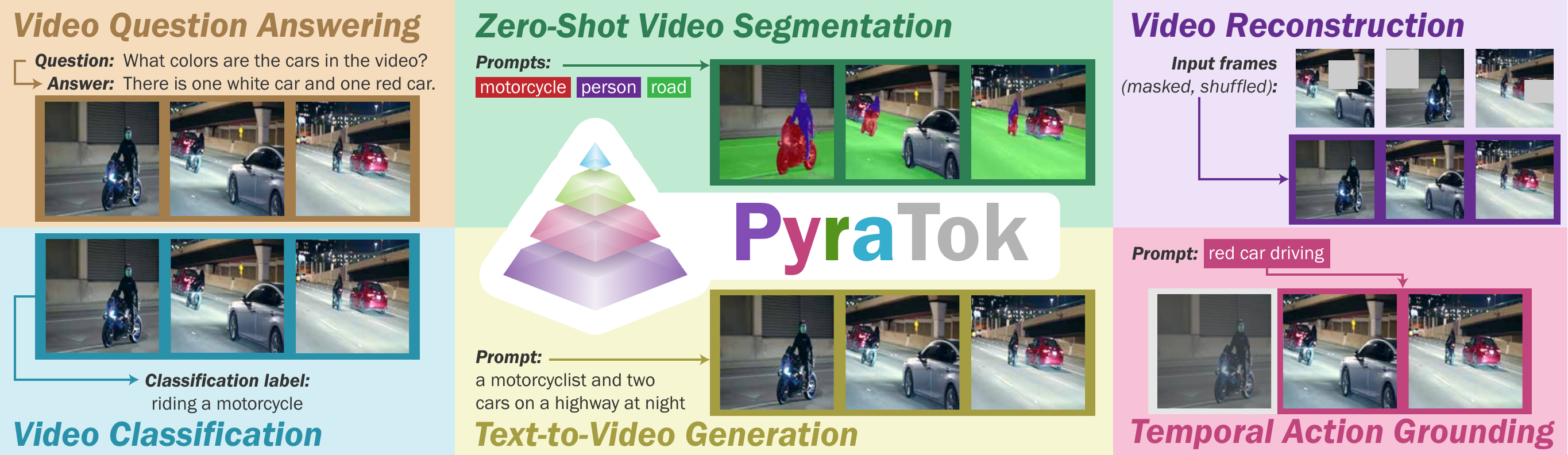}  
    \captionof{figure}{Overview of \modelnamenc. Given a video and text prompt, our multi-scale semantically aligned quantizer-based VAE encodes compact latents, facilitating high-quality reconstruction and a wide range of video-language understanding tasks.}\label{fig:abstract_diagram}
\end{center}
}]}

\newcommand{\FigIntro}{
\begin{figure}[t!]
\centering
\includegraphics[width=0.99\linewidth]{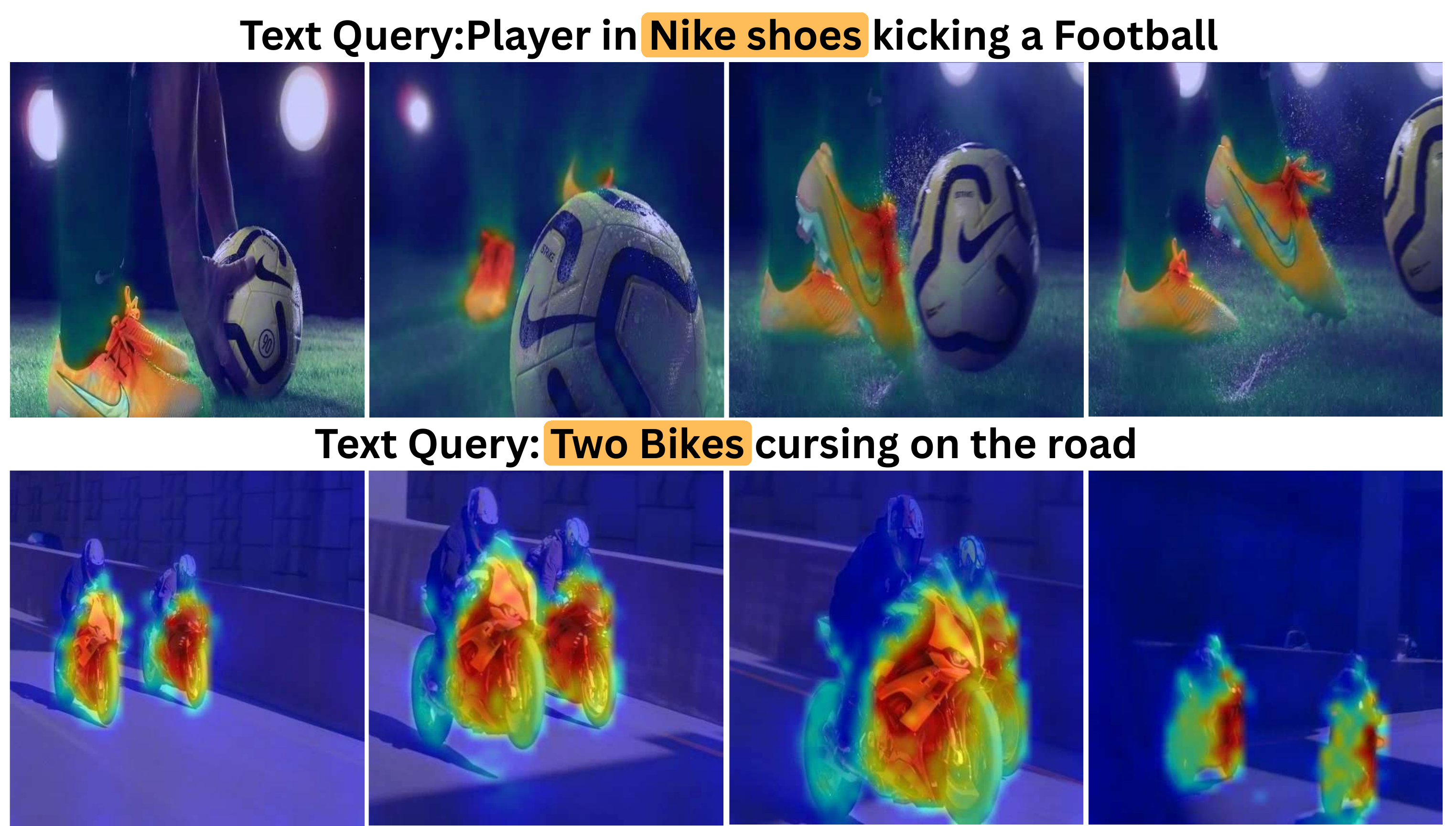}
    \caption{\textbf{\modelnamenc attention maps illustrating fine-grained cross-modal alignment.} Highlighted regions indicate language-guided semantic localization (\eg Nike shoes, bikes).\looseness-1}
    \label{fig:activation_map}
\end{figure}
\vspace{-0.3cm}
}

\newcommand{\FigArch}{
\begin{figure*}[t!]
    \centering    \includegraphics[width=0.99\linewidth]{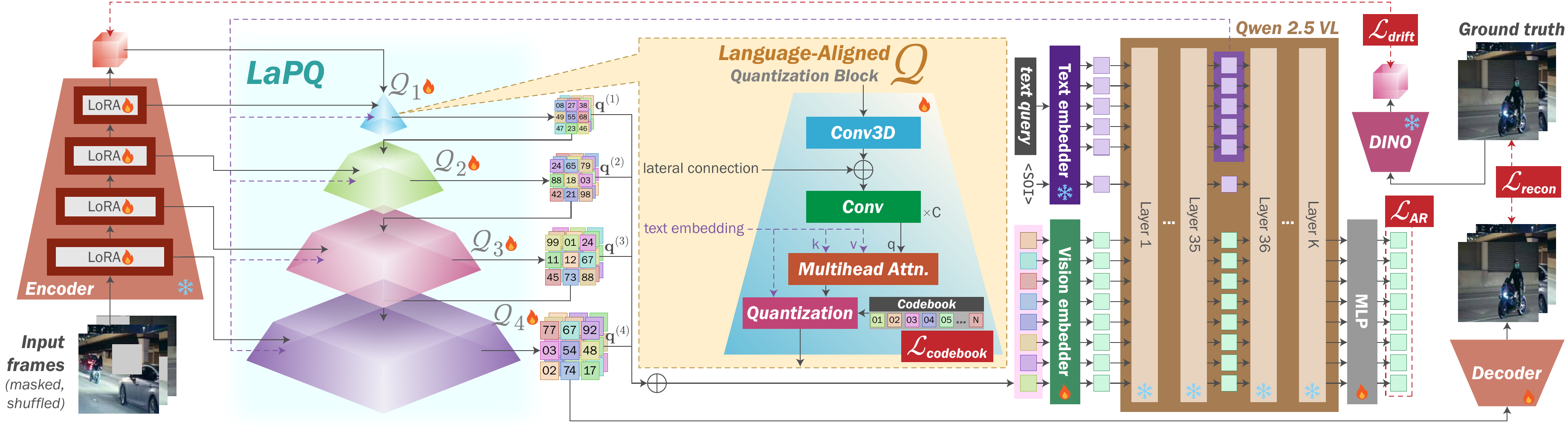}
    \caption{\textbf{Overview of the proposed \modelname{} architecture.} Masked video frames are encoded and quantized at multiple scales via Language-aligned Pyramidal Quantization (LaPQ) blocks guided by text embeddings. The resulting multi-scale discrete tokens are aligned through a vision-language model for semantic consistency, enabling high-fidelity and text-aware video reconstruction.}
    \label{fig:PVQ-VAE-arch}
    \vspace{-0.2cm}
\end{figure*}} 

\newcommand{\FigPCA}{
\begin{figure}[t!]
\includegraphics[width=0.99\linewidth]{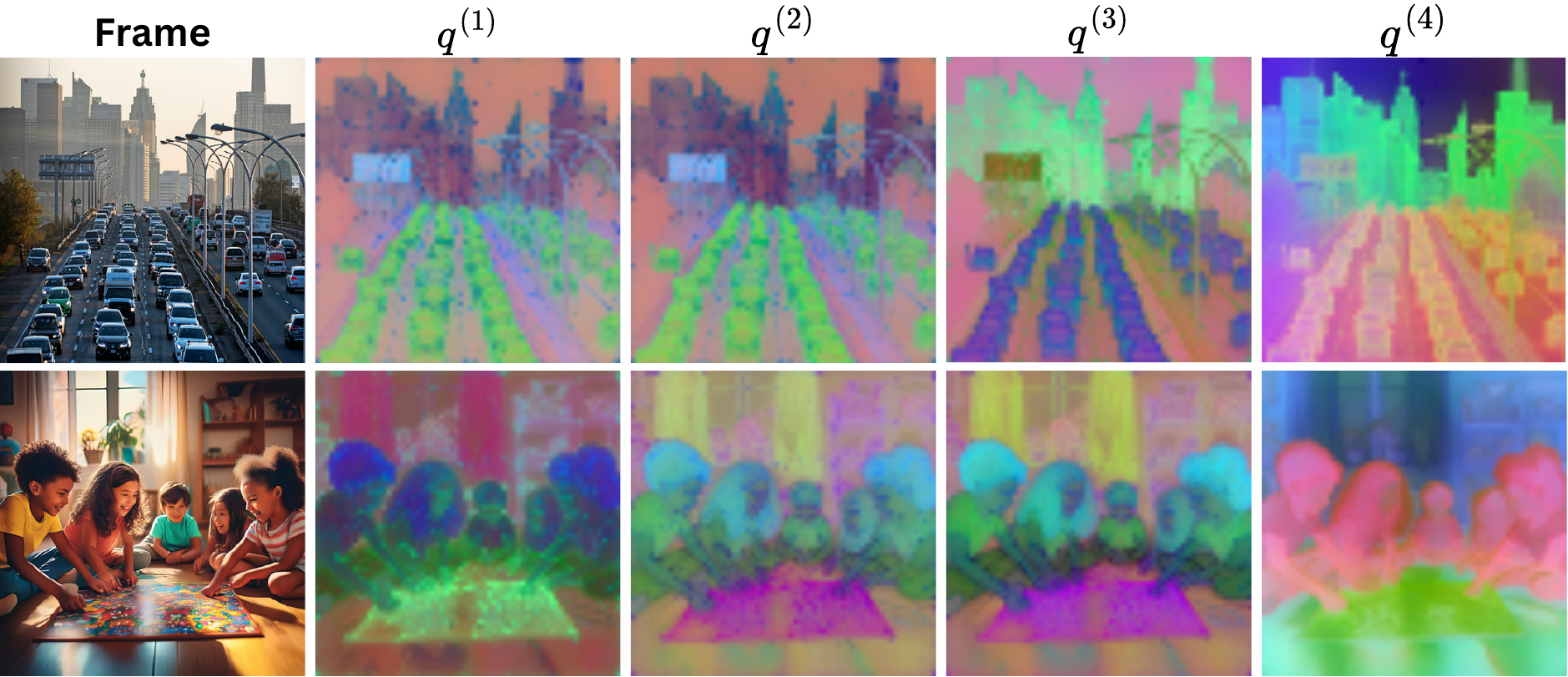}
    \caption{\textbf{PCA projections of quantized tokens from each \modelnamecpa's stage.} Columns ($q^{(1)}$–$q^{(4)}$) show hierarchical outputs capturing progressively refined and semantically aligned regions.}
    \label{fig:pca}
    \vspace{-0.3cm}
\end{figure}} 

\newcommand{\FigVideoRecon}{
 \begin{figure}[htbp]
    \includegraphics[width = \linewidth]{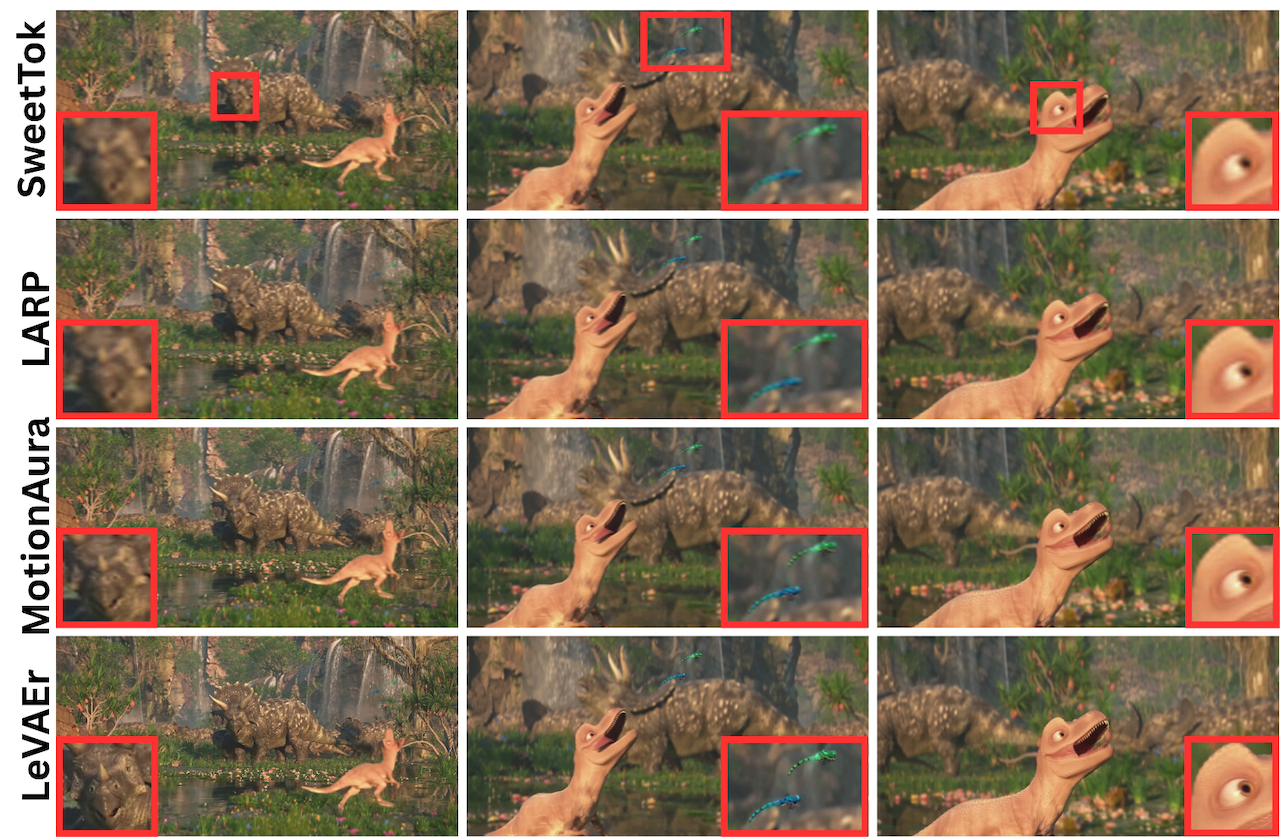}
    \caption{\textbf{Video frame reconstruction comparison on a dynamic scene.}}
    \label{fig:frames_recon}
\end{figure}}

\newcommand{\FigFrameRecon}{
\begin{figure*}
    \centering
    \includegraphics[width=0.9\linewidth]{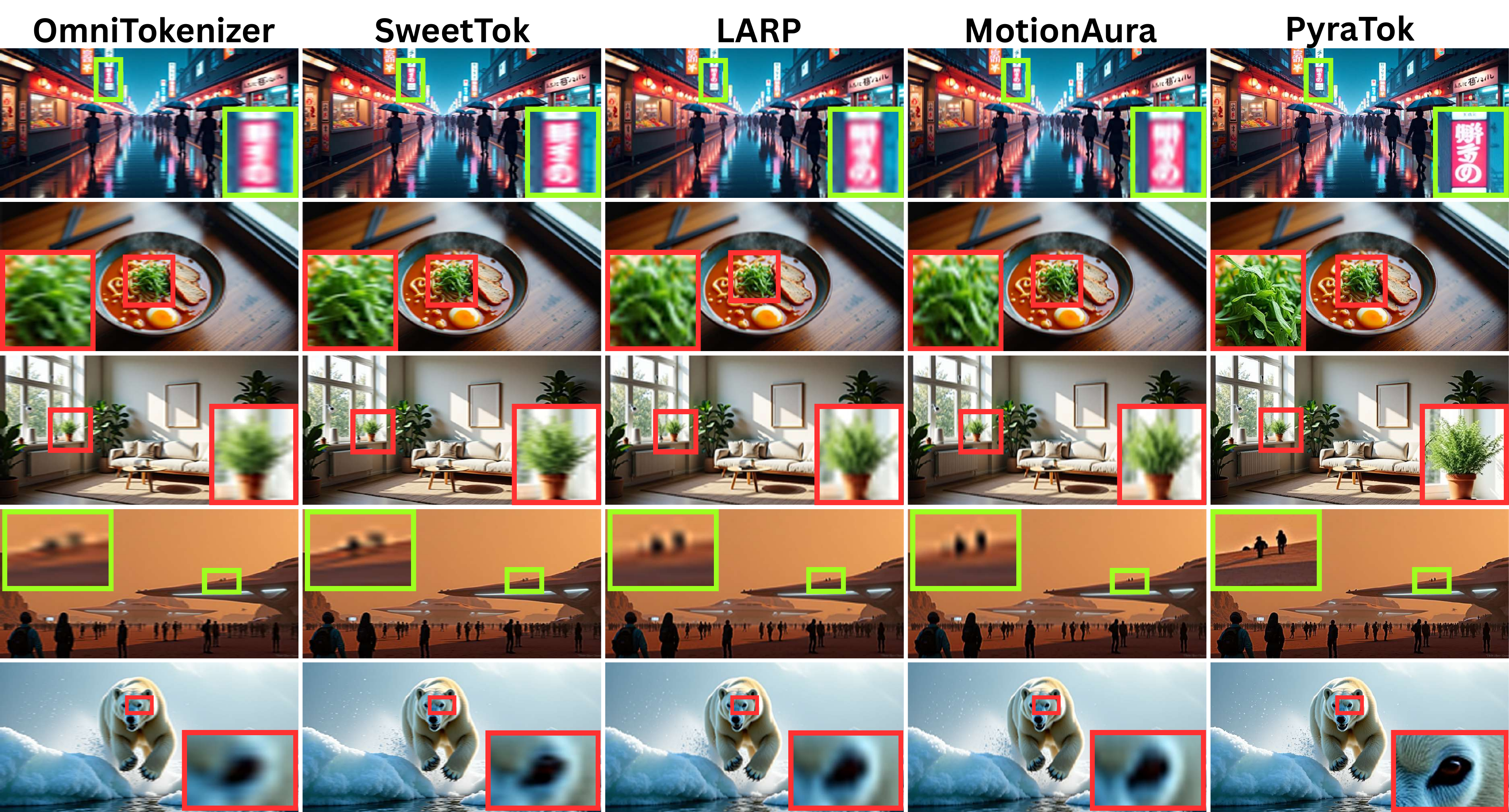}
    \caption{\textbf{Frame reconstruction qualitative comparison.} \modelnamenc generates sharper details, clearer textures, and better spatial structure than baselines, demonstrating better fidelity and semantic consistency.}
    \label{fig:video_reconstruction}
    \vspace{-0.4cm}
\end{figure*}}

\newcommand{\FigTSNE}{
\begin{figure}[t!]
\centering
\includegraphics[width=0.9\linewidth]{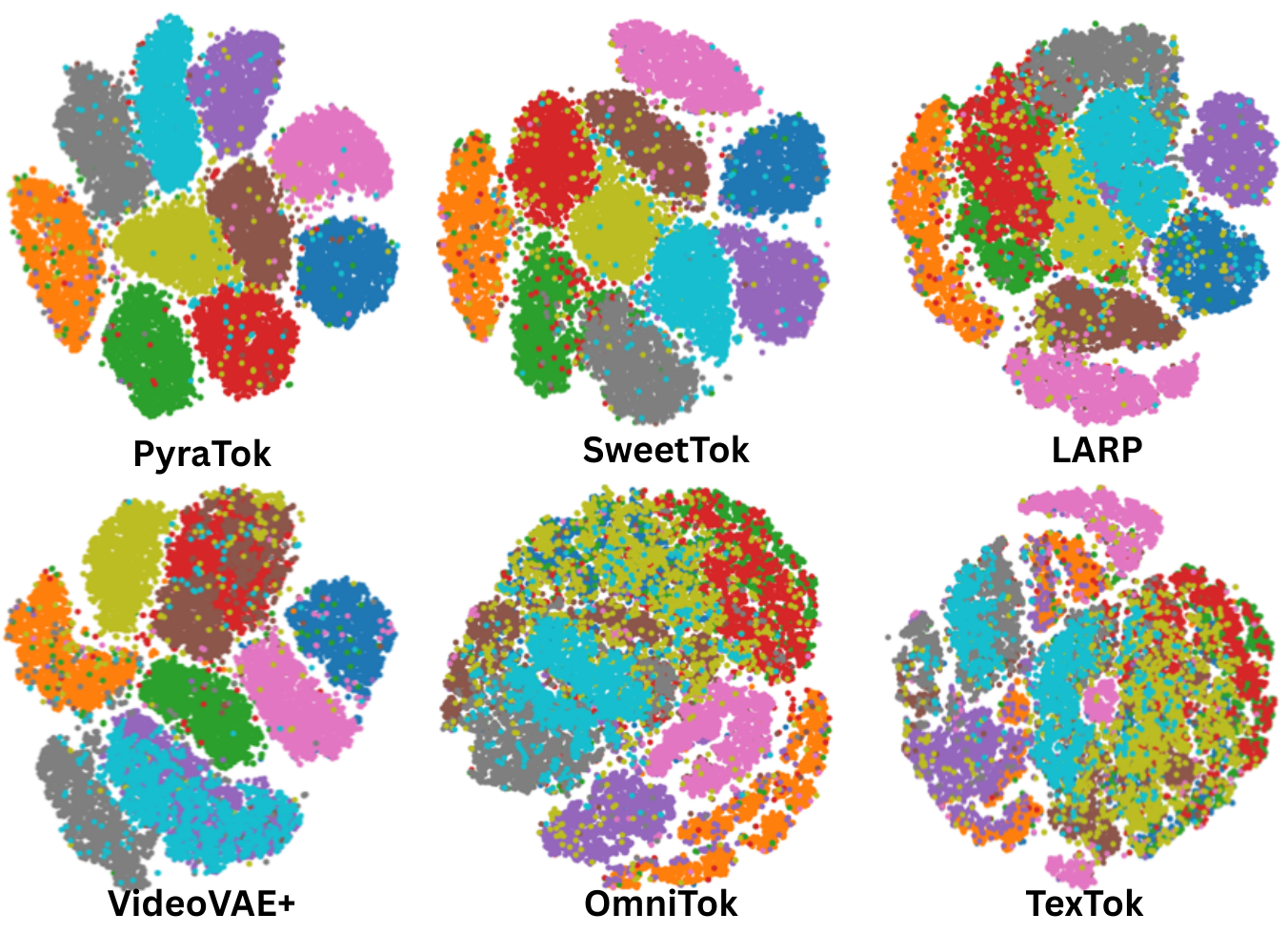}
\caption{\textbf{t-SNE visualization} showing \modelnamenc with more distinct, well-separated clusters, indicating improved semantic organization.
}
\label{fig:t_sne_plots}
\end{figure}}

\newcommand{\FigTV}{
\begin{figure}[t!]
    \centering  \includegraphics[width=0.95\linewidth]{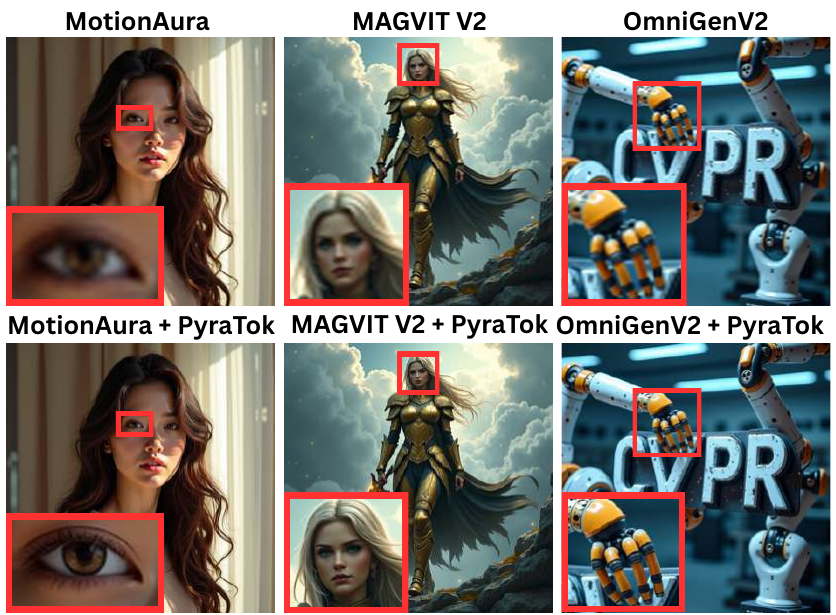}
    \caption{\textbf{T2V generation across various backbones.} Integrating \modelnamenc enhances detail, sharpness, and spatial consistency.}
    \label{fig:video_generation}
    \vspace{-0.2cm}
\end{figure}}

\newcommand{\FigAction}{
\begin{figure}[t!]
    \centering 
    \includegraphics[width=0.99\linewidth]{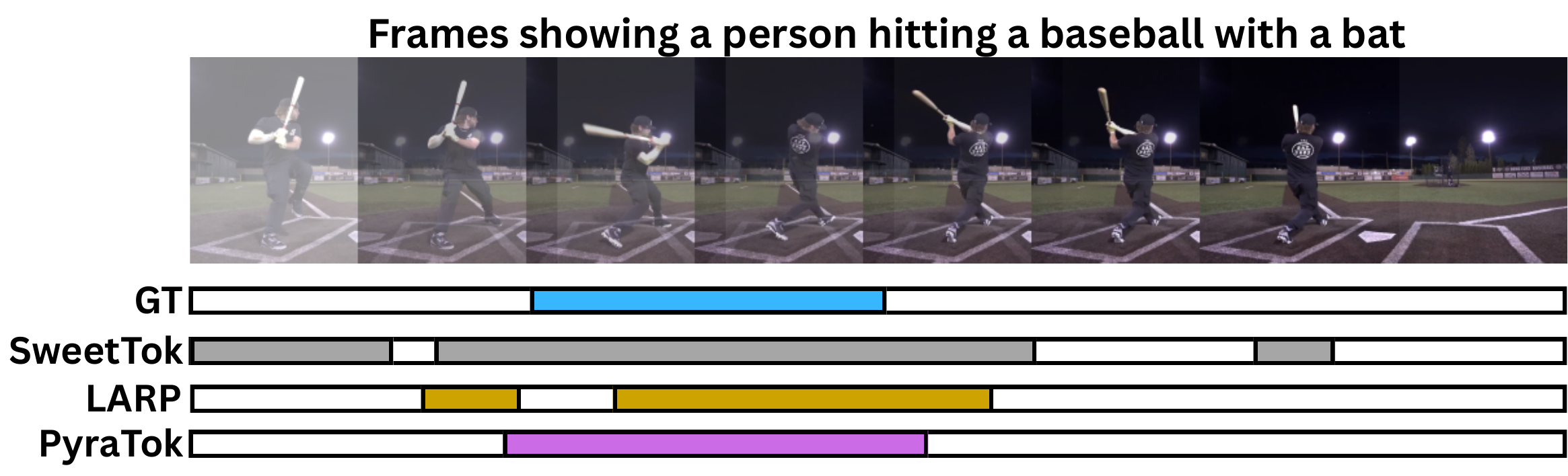}
    \caption{\textbf{Video action localization results.} \modelnamenc aligns action boundaries more accurately.}
    \label{fig:video_action} 
\end{figure}}

\newcommand{\FigSeg}{
\begin{figure}[t!]
    \centering   
    \includegraphics[width=0.99\linewidth, height=5cm]{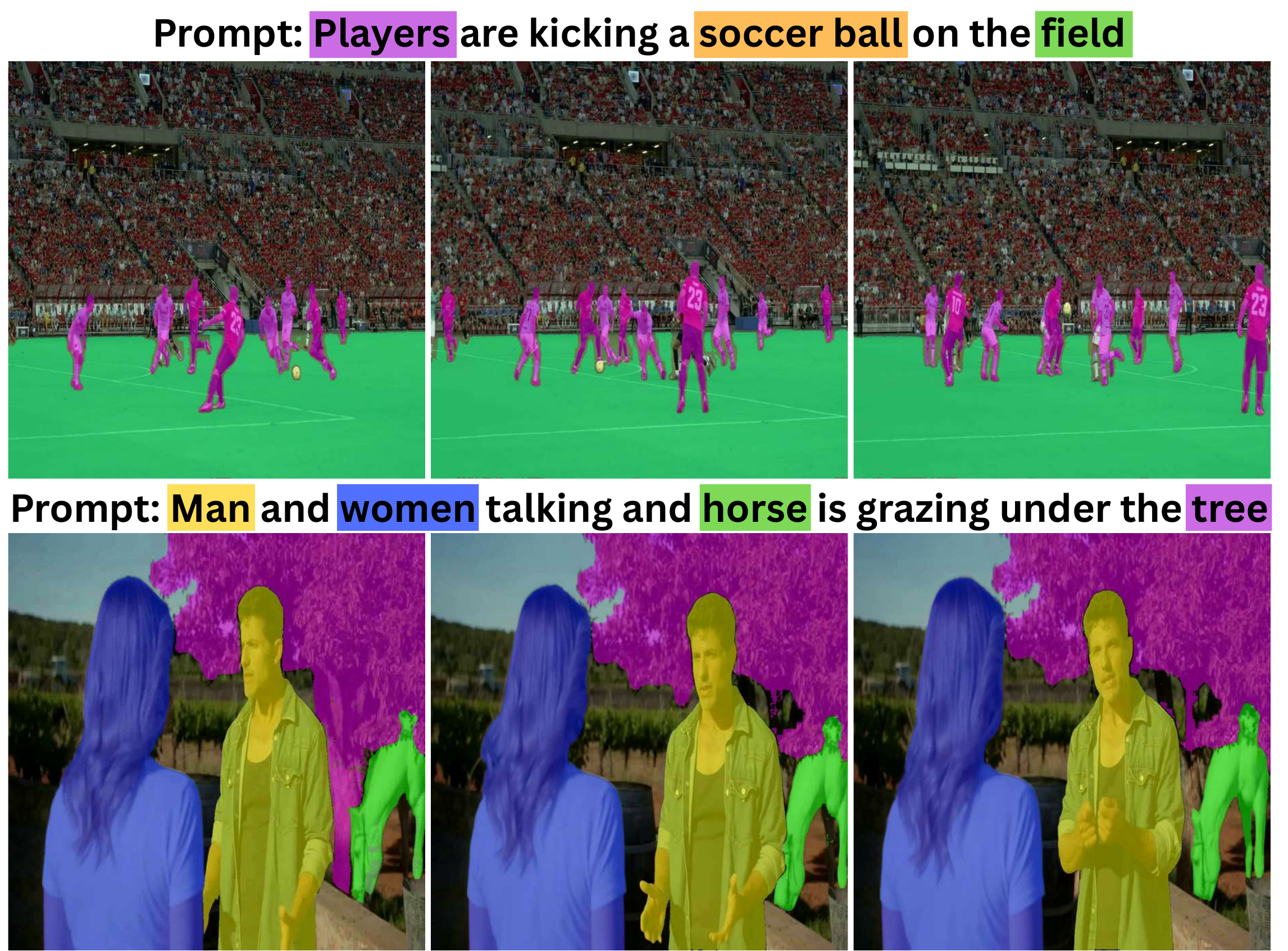}
    \caption{\textbf{Zero-shot segmentation results} showing \modelnamenc{}'s precise text-guided segmentation of multiple objects in complex scenes.}
    \label{fig:video_seg}
    \vspace{-0.2cm}
\end{figure}}

\newcommand{\FigCodebook}{
\begin{figure}[t!]
\includegraphics[width=\linewidth]{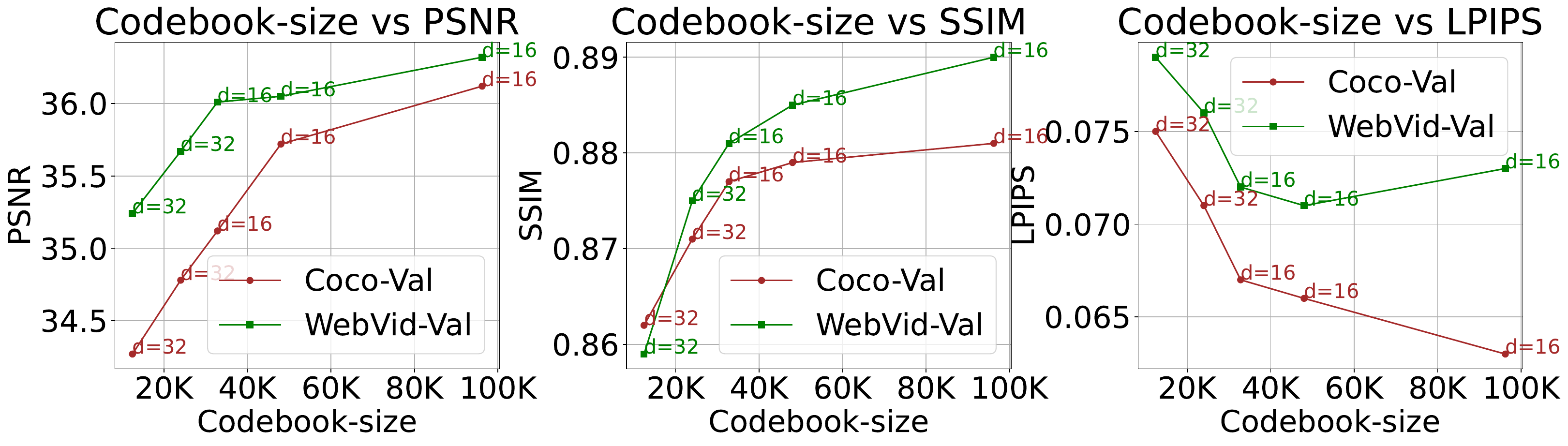}
    \caption{\textbf{Effect of codebook size on reconstruction quality.\looseness-1}}
    \label{fig:codebook_vs_loss} 
    \vspace{-0.3cm}
\end{figure}}

\newcommand{\FigSupFramebig}{
\begin{figure*}[t!]
    \centering
    \includegraphics[width=0.9\linewidth]{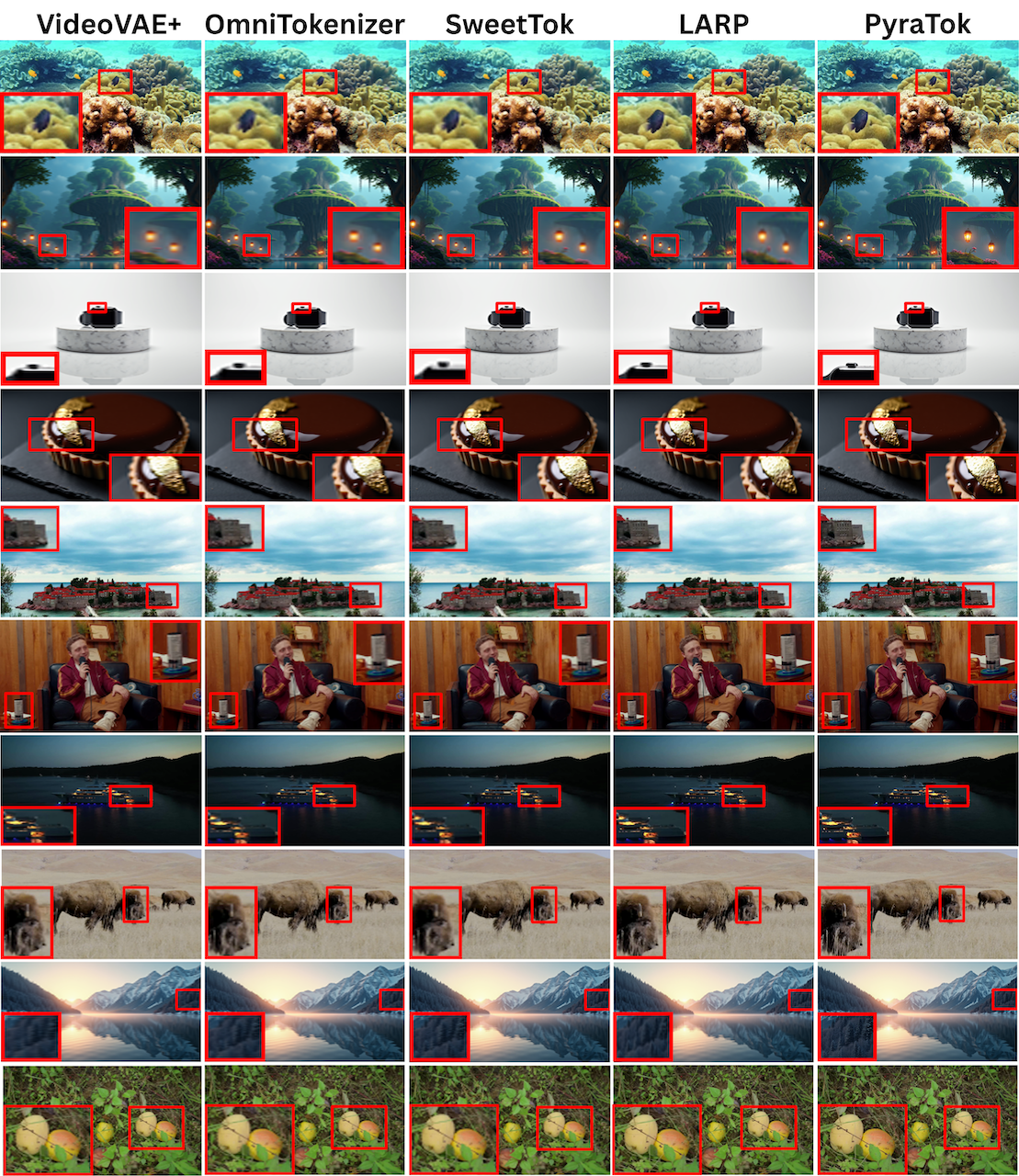}
    \caption{\textbf{Qualitative comparison of single-frame reconstruction across diverse scenes}, including underwater environments, fantasy landscapes, product renders, food close-ups, urban views, interview settings, night scenes, wildlife, mountain vistas, and natural textures. Each row shows outputs from one method for the same input frame, with red boxes highlighting fine details, such as small objects, textures, reflections, and thin structures, used to compare reconstruction fidelity, sharpness, and color consistency. \modelnamenc{} preserves fine details reliably and delivers consistent, high-quality reconstructions across all scene types. Discussion in \ref{app:recon}.}
    \label{fig:frame_reconcs_sup}
\end{figure*}}

\newcommand{\FigSupvideoone}{
\begin{figure*}[t!]
    \centering
    \includegraphics[width=0.99\linewidth]{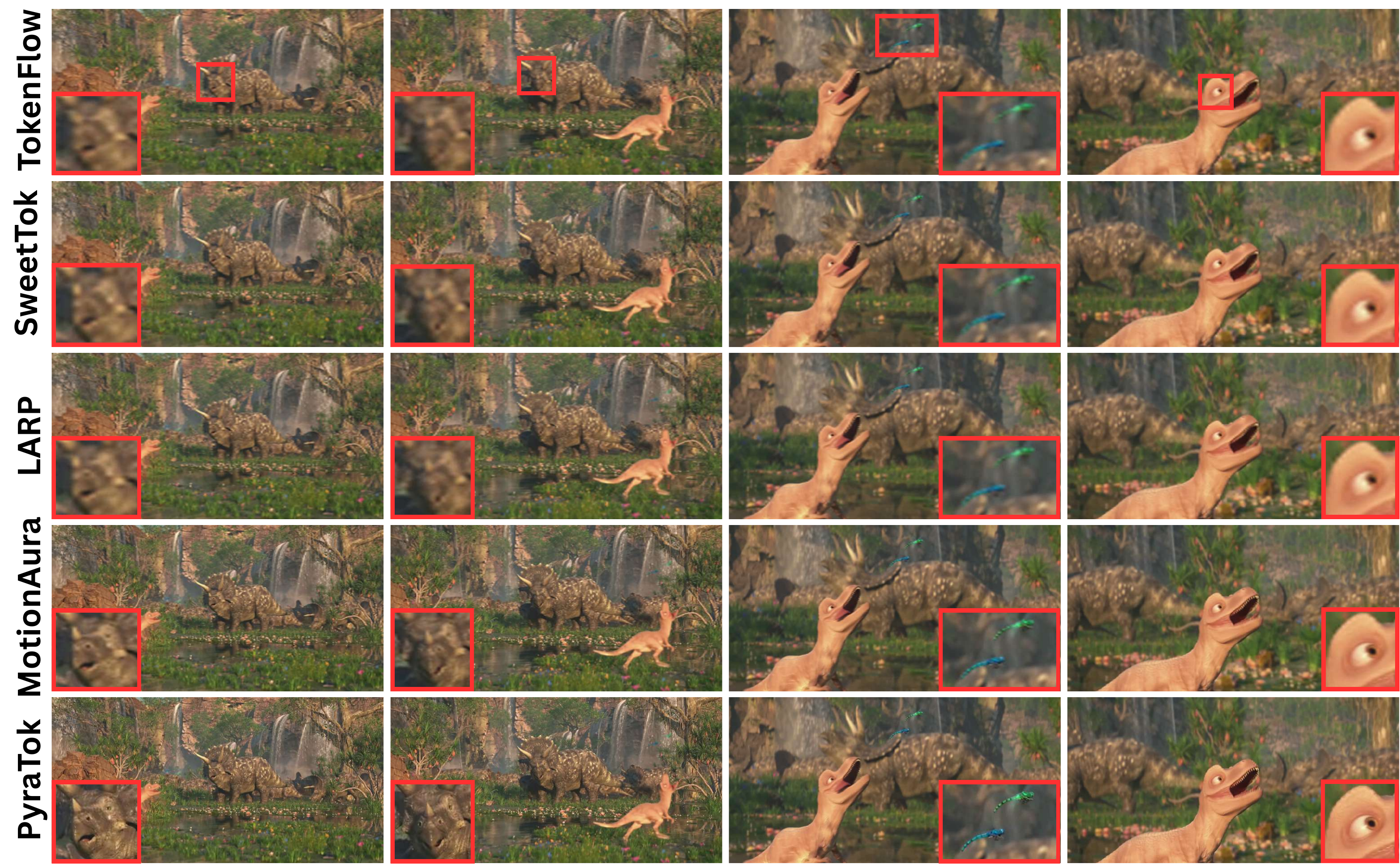}
    \caption{\textbf{Qualitative comparison of video reconstruction methods} on a fast-moving dinosaur sequence containing dense foliage, small background animals, and detailed facial motions. Each row shows outputs from one method on the same frames. Red boxes highlight challenging regions such as vegetation, moving creatures, and fine facial details, where differences in sharpness, temporal coherence, and motion fidelity are most apparent. Discussion in \ref{app:recon}.}
    \label{fig:video_recon_one}
\end{figure*}}

\newcommand{\FigSupvideotwo}{
\begin{figure*}[t!]
    \centering
    \includegraphics[width=0.99\linewidth]{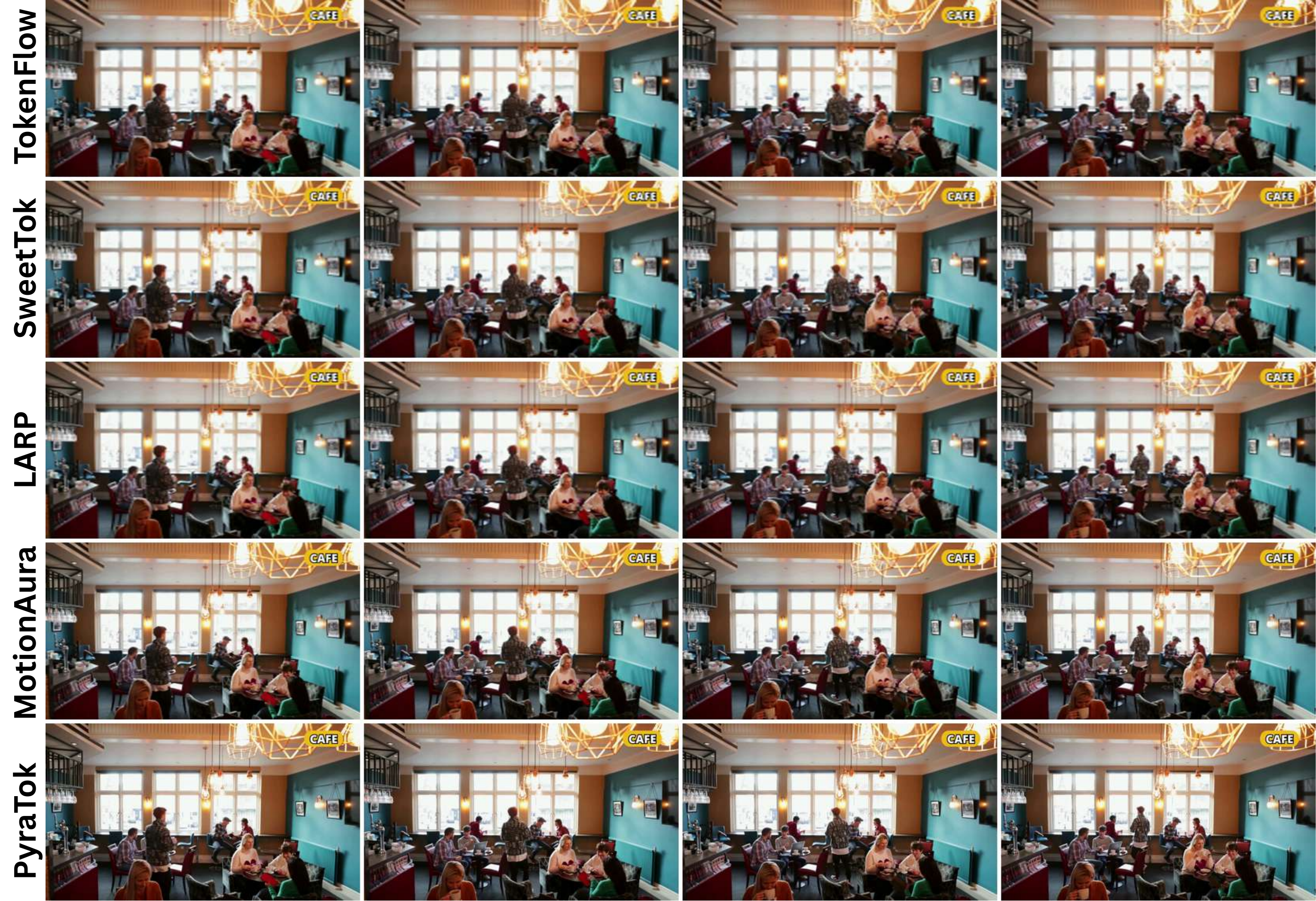}
    \caption{\textbf{Qualitative comparison of \modelnamenc{} with other video reconstruction methods} on a dynamic café scene containing multiple people, complex indoor lighting, and detailed textures. Each row shows outputs from one method on the same frames. The scene highlights challenges such as preserving facial details, clothing patterns, reflections, and background structures, allowing visual comparison of reconstruction sharpness and temporal consistency.  Details in \ref{app:recon}.}
    \label{fig:video_recon_two}
   
\end{figure*}}

\newcommand{\FigSupvideothree}{
\begin{figure*}[t!]
    \centering
    \includegraphics[width=0.99\linewidth]{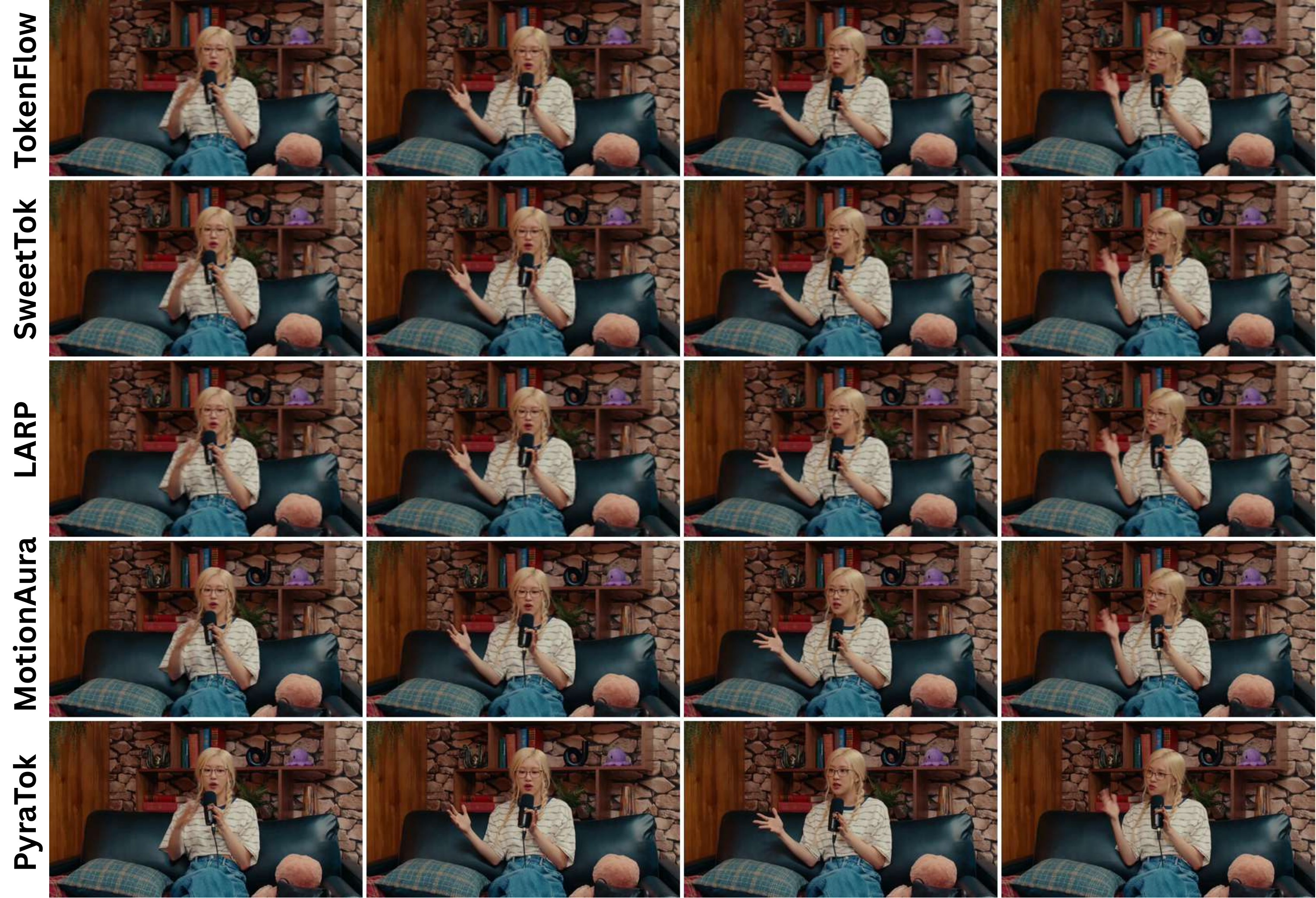}
    \caption{\textbf{Qualitative comparison of \modelnamenc{} with other video reconstruction methods} on an indoor interview scene featuring expressive hand motions, detailed facial appearance, and complex background textures, revealing differences in preserving facial clarity, hand motion coherence, and fine background details such as books, fabrics, and stone textures.  Details in \ref{app:recon}.}
    \label{fig:video_recon_three}
\end{figure*}}

\newcommand{\FigSupvideofour}{
\begin{figure*}[t!]
    \centering
    \includegraphics[width=0.99\linewidth]{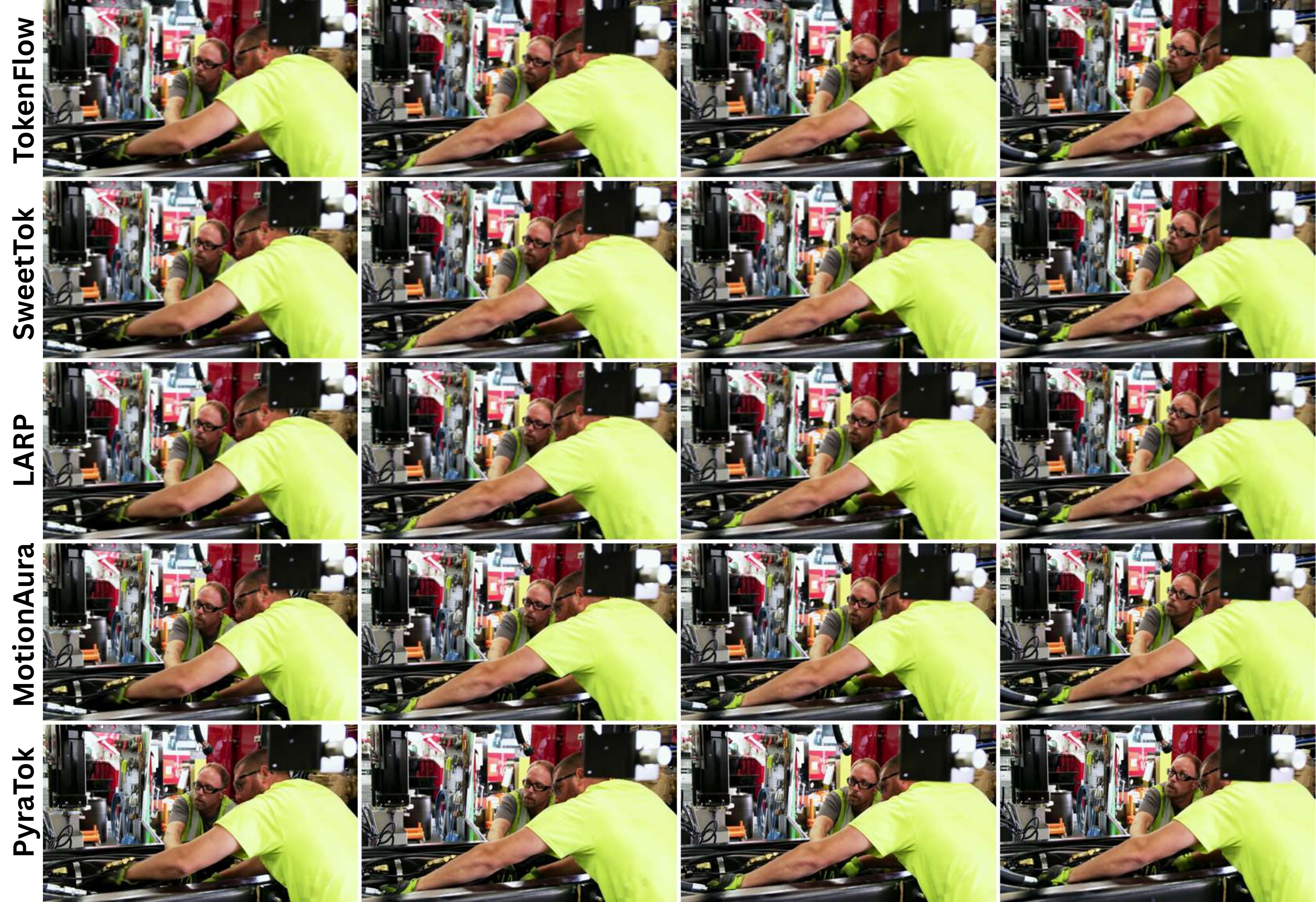}
    \caption{\textbf{Qualitative comparison of \modelnamenc{} with other video reconstruction methods} on a workshop scene involving fast arm movements, reflective machinery, and detailed background clutter, highlighting differences in preserving motion clarity, fine textures on tools and equipment, and the stability of subtle visual details under rapid motion. Details in \ref{app:recon}.}
    \label{fig:video_recon_four}
\end{figure*}}

\newcommand{\FigSupsegone}{
\begin{figure*}[t!]
    \centering
    \includegraphics[width=0.99\linewidth]{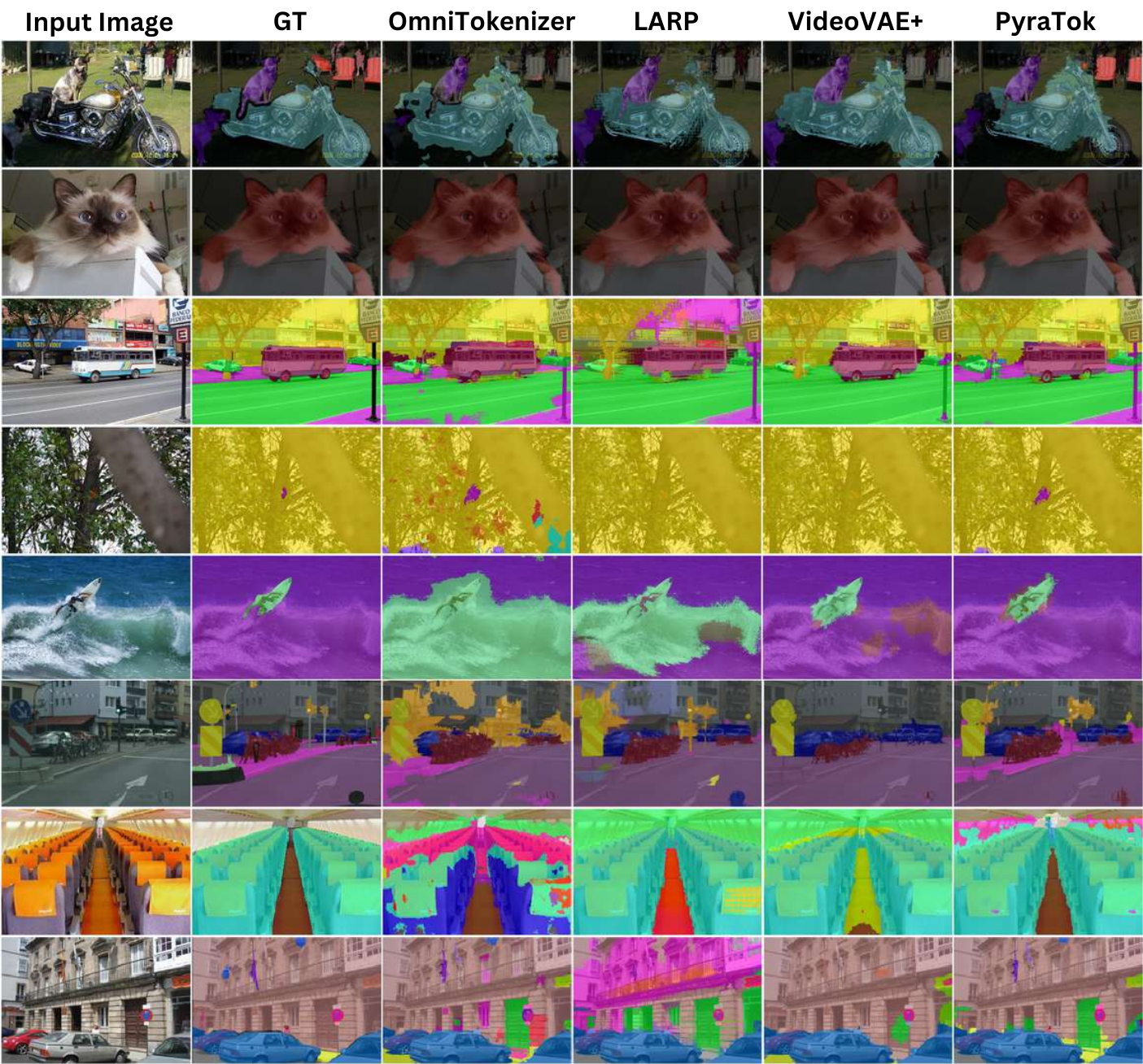}
    \caption{\textbf{Zero-shot semantic segmentation comparison across various scenes.} Results illustrate \modelnamenc{}’s ability to recover fine object boundaries, preserve small structures, and produce semantically coherent segmentations across diverse domains. Details in \ref{app:seg}.}
    \label{fig:seg_sup_one}
\end{figure*}}

\newcommand{\FigSupsegtwo}{
\begin{figure*}[t!]
    \centering
    \includegraphics[width=0.9\linewidth]{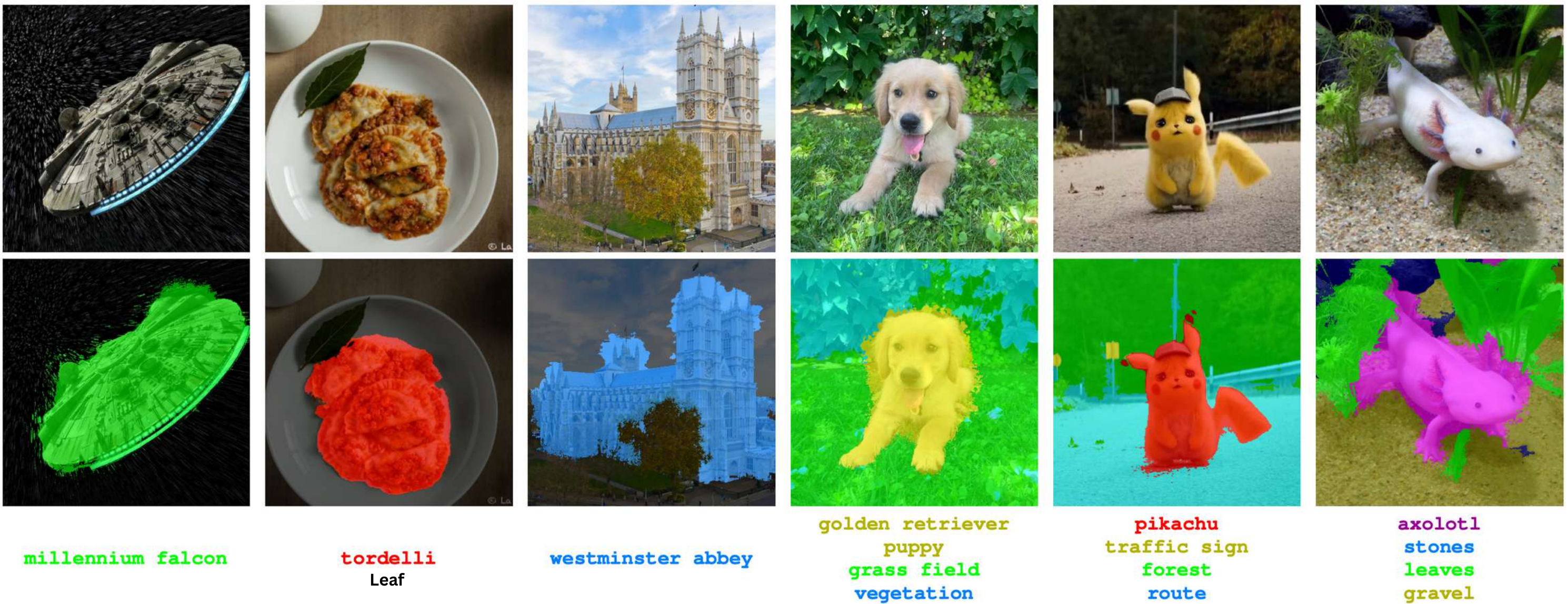}
    \caption{\textbf{Zero-shot semantic segmentation examples produced by \modelnamenc{} using only text prompts.} Each column shows an input image, the corresponding segmentation mask predicted by \modelnamenc{}, and the set of text labels used. Results span diverse object types, demonstrating \modelnamenc{}’s ability to segment both rare and common entities without task-specific training. Details in \ref{app:seg}.}
    \label{fig:seg_sup_two}
\end{figure*}}

\newcommand{\FigSupFrameTVone}{
\begin{figure*}[t!]
    \centering
    \includegraphics[width=0.72\linewidth]{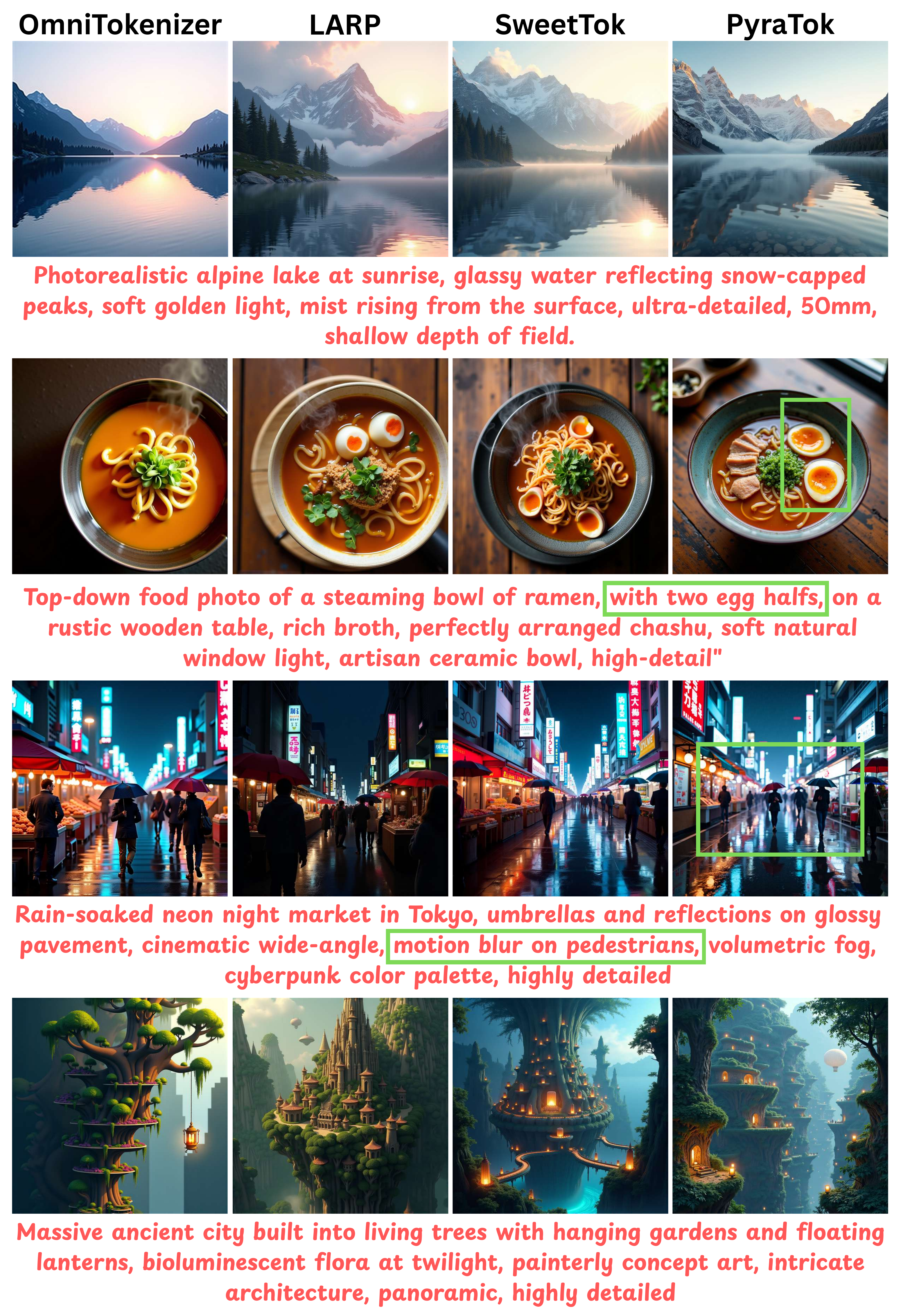}
    \caption{\textbf{Text-to-video qualitative comparison}, showing a representative frame from each generated clip across diverse prompt categories, including photorealistic landscapes, detailed food scenes, night-market environments, and stylized concept art. Although only a single frame per video is shown, the green boxes highlight fine-grained details faithfully produced by \modelnamenc{}, such as the correct depiction of “two half eggs” in the ramen scene and realistic motion blur on pedestrians in the night-market prompt, illustrating \modelnamenc{}’s ability to accurately interpret and render subtle textual attributes in T2V generation. Discussion in \ref{app:t2v}.}    
    \label{fig:frame_tv_one} 
\end{figure*}}

\newcommand{\FigSupFrameTVtwo}{
\begin{figure*}[t!]
    \centering
    \includegraphics[width=0.75\linewidth]{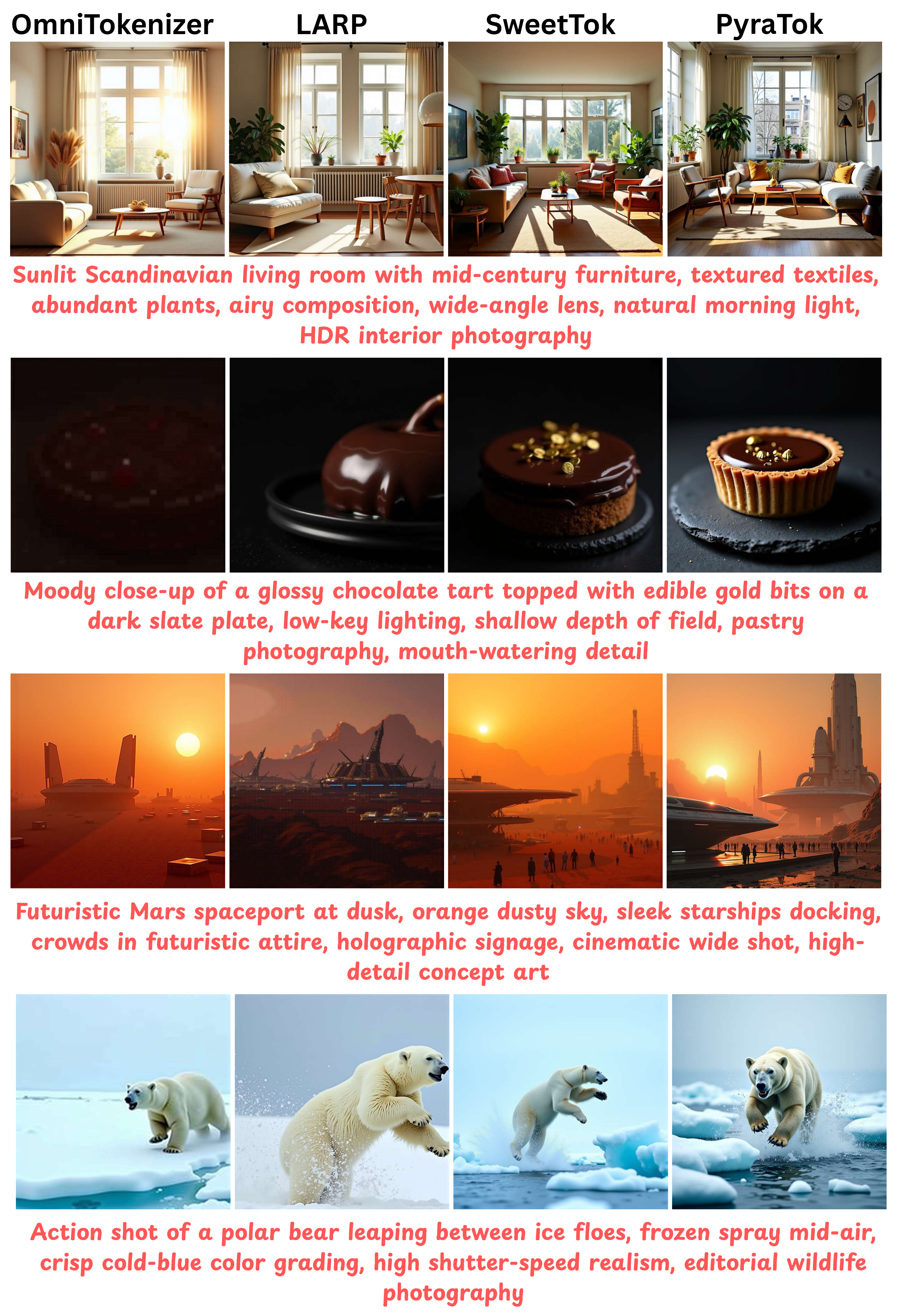}
    \caption{\textbf{Text-to-video generation comparisons}, showing a representative frame from each generated clip across a diverse set of prompts, including interior scenes, food close-ups, sci-fi concept art, and wildlife action. \modelnamenc{} consistently captures fine-grained details, accurate lighting, textures, object geometry, and scene composition, demonstrating strong prompt alignment and high-fidelity generation across varied visual domains. Discussion in \ref{app:t2v}.
    }
    \label{fig:frame_tv_two}
\end{figure*}}

\newcommand{\FigSupHDTV}{
\begin{figure*}[t!]
    \centering
    \includegraphics[width=0.99\linewidth]{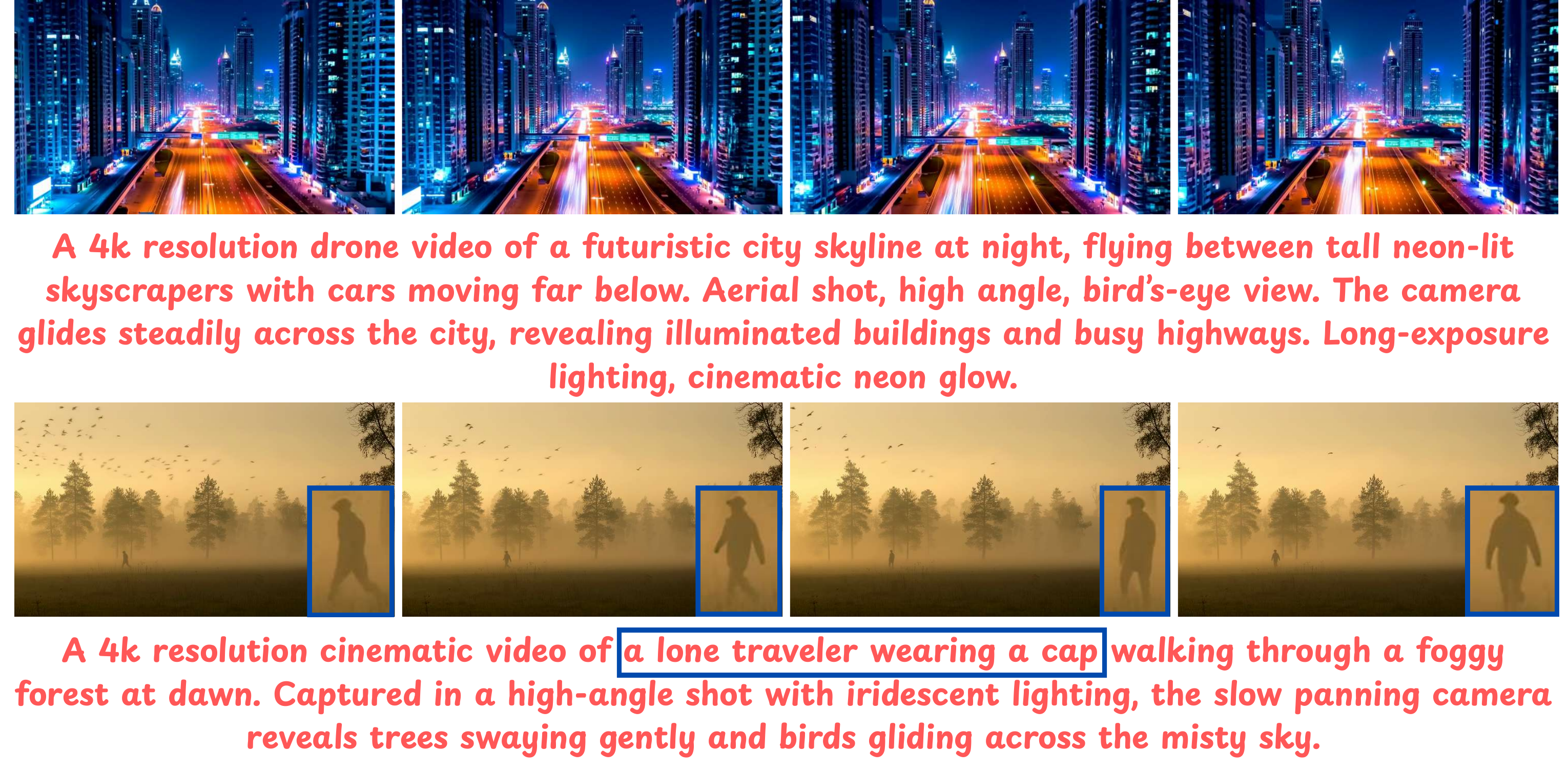}
    \caption{\textbf{Text-to-video generation results at 4K resolution using \modelnamenc{}, shown as representative frames from two distinct prompts.} The first example depicts a futuristic neon-lit city captured by a flying drone, where \modelnamenc{} maintains crisp details, stable long-exposure lighting, and smooth camera motion. The second example illustrates a foggy forest at dawn featuring “a lone traveler wearing a cap.” Even though the person occupies only a tiny fraction of the scene, \modelnamenc{} accurately renders fine-grained details, such as the cap on the traveler’s head, demonstrating strong text-alignment and high-resolution consistency in large-scale, wide-angle video generation. Discussion in \ref{app:t2v}.}
    \label{fig:HDTV}
\end{figure*}}

\newcommand{\FigSupmagvae}{
\begin{figure*}[t!]
    \centering
    \includegraphics[width=0.99\linewidth]{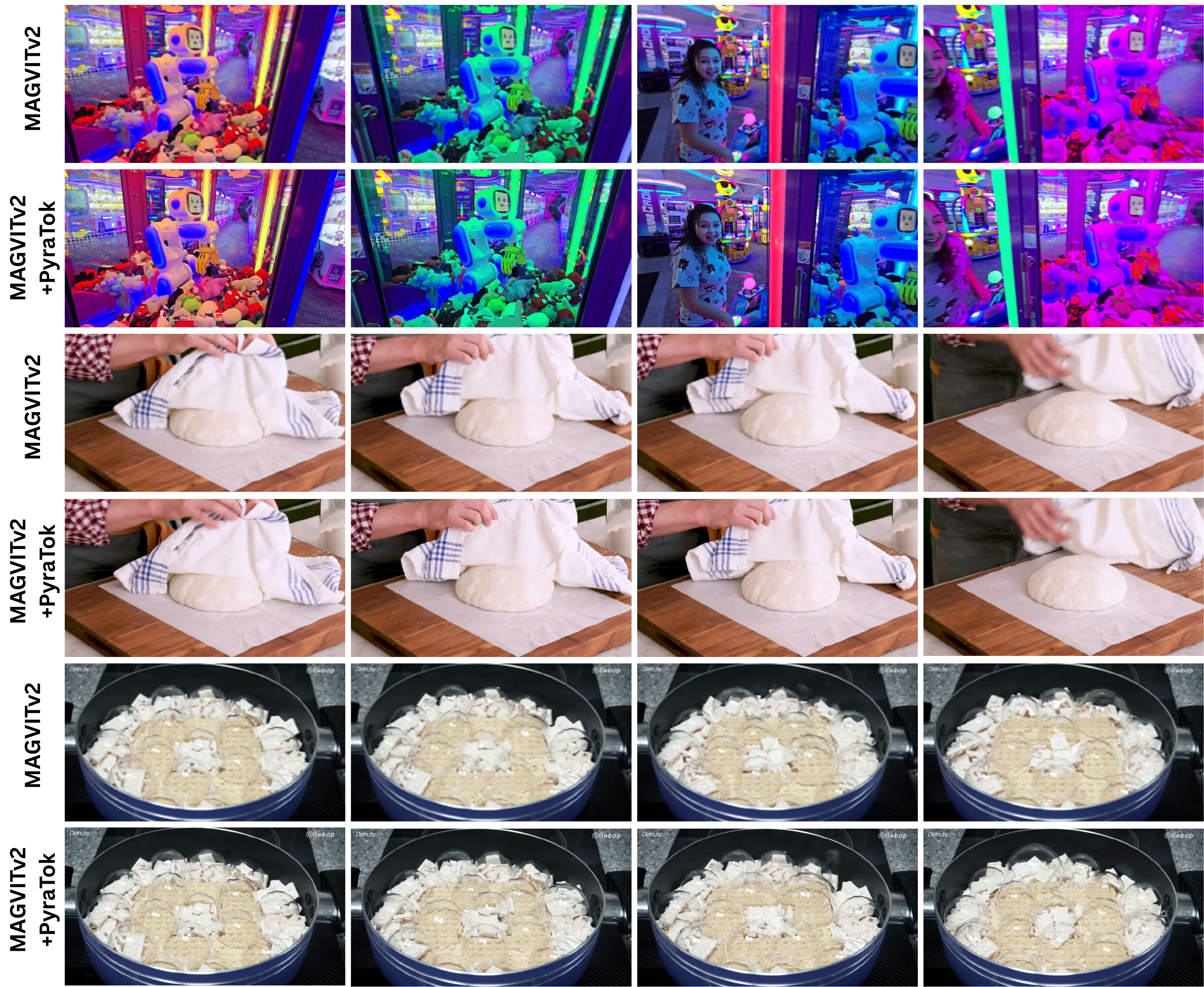}
    \caption{\textbf{Comparison of video reconstruction quality when replacing the default MAGVIT-V2~\cite{yu2024language} VAE with our \modelnamenc{} VAE.} Each pair of rows shows frames generated by the original MAGVIT-V2 (top) and the enhanced MAGVIT-V2 + \modelnamenc{} configuration (bottom). Across diverse scenes, including arcade environments with complex lighting, close-up dough preparation, and detailed cooking sequences, \modelnamenc{} improves visual sharpness, color consistency, and fine-detail preservation, demonstrating its effectiveness as a drop-in VAE replacement for higher-quality video generation. Discussion in \ref{app:t2vprior}.}
    \label{fig:our_mag}
\end{figure*}}

\newcommand{\FigSupomnivae}{
\begin{figure*}[t!]
    \centering
    \includegraphics[width=0.99\linewidth]{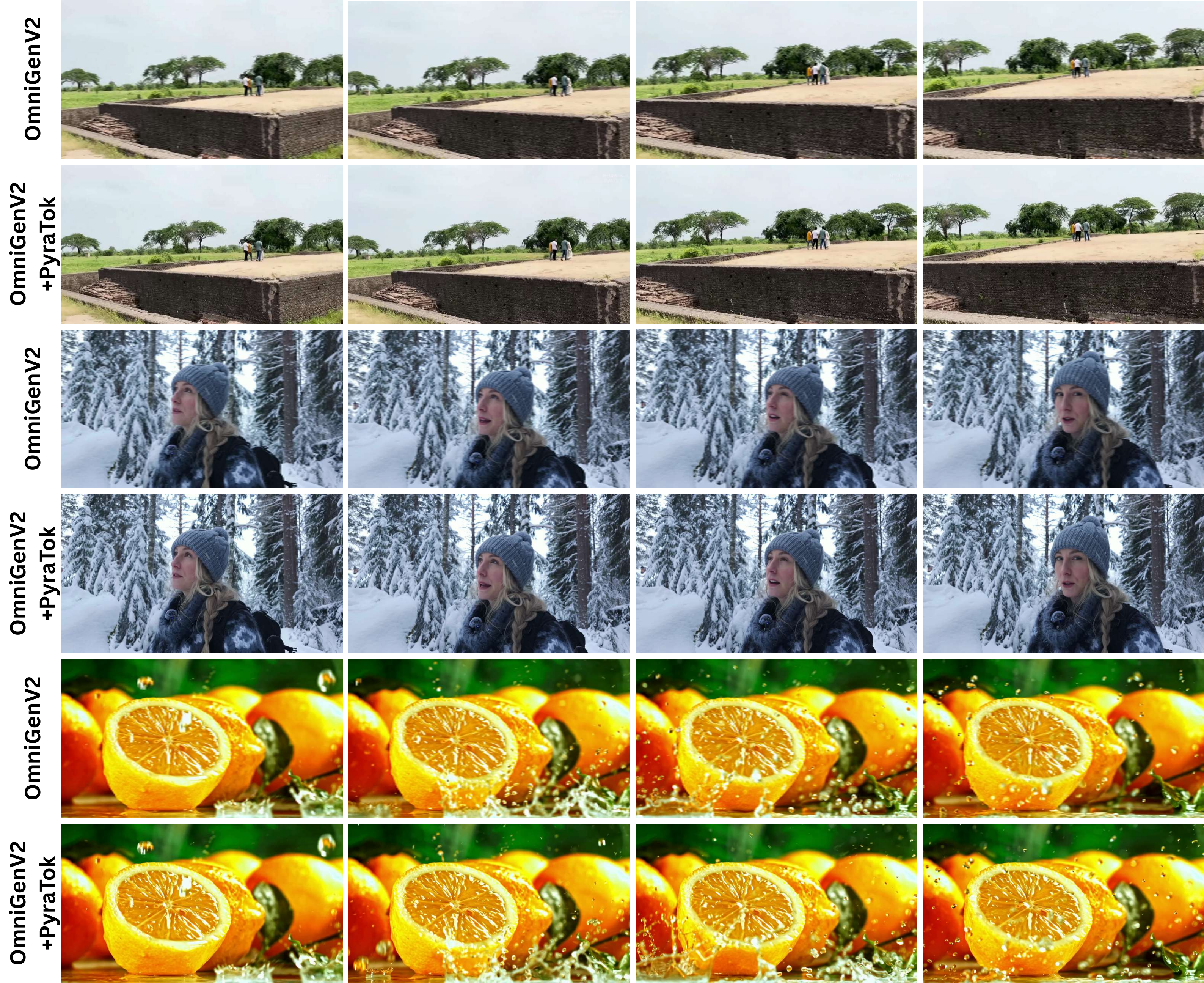}
    \caption{\textbf{Comparison of video generation quality when replacing the default VAE of OmniGen-V2~\cite{wu2025omnigen2} with our \modelnamenc{} VAE.} For each scene, the top row shows frames produced by the original OmniGen-V2, while the bottom row shows frames from OmniGen-V2 + \modelnamenc{}. \modelnamenc{} improves texture sharpness, color fidelity, and fine-detail preservation, demonstrating its effectiveness as a universal, high-quality VAE substitute for diverse video generation pipelines. Discussion in \ref{app:t2vprior}.}
    \label{fig:our_omni}
\end{figure*}}

\newcommand{\FigSupmotionvae}{
\begin{figure*}[t!]
    \centering
    \includegraphics[width=0.99\linewidth]{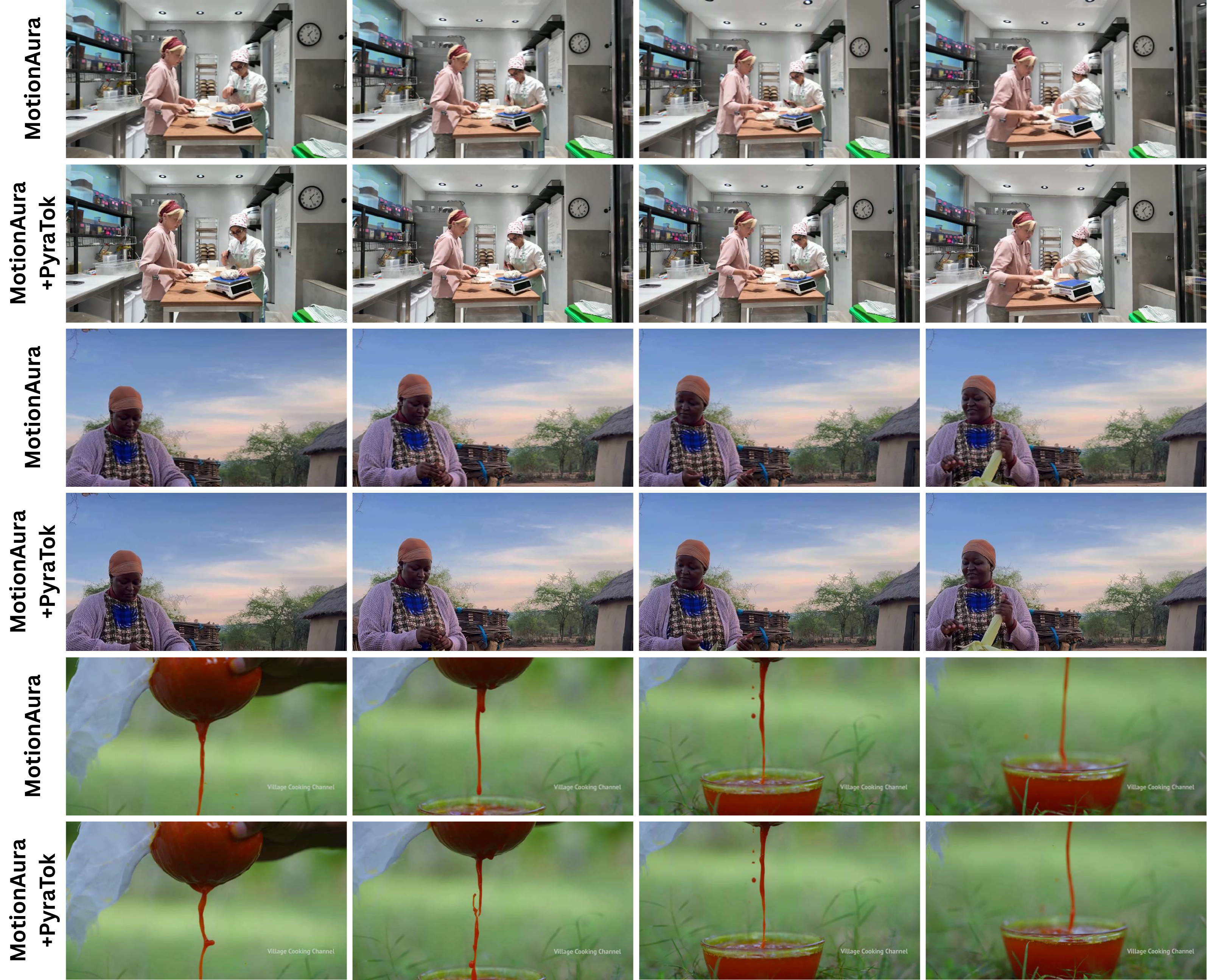}
    \caption{\textbf{Comparison of video generation quality when substituting the default VAE in MotionAura~\cite{susladkar2025motionaura} with our \modelnamenc{} VAE.} For each example, the top row shows frames produced by the original MotionAura, while the bottom row shows results from MotionAura + \modelnamenc{}. Across kitchen scenes, outdoor human activity, and close-up liquid motion, \modelnamenc{} enhances sharpness, preserves fine textures, and improves temporal consistency—demonstrating its effectiveness as a high-quality VAE replacement for improving realism and detail in MotionAura-generated videos. Discussion in \ref{app:t2vprior}.}
    \label{fig:our_motion}
\end{figure*}}

\newcommand{\FigSupactionone}{
\begin{figure*}[t!]
    \centering
    \includegraphics[width=0.85\linewidth]{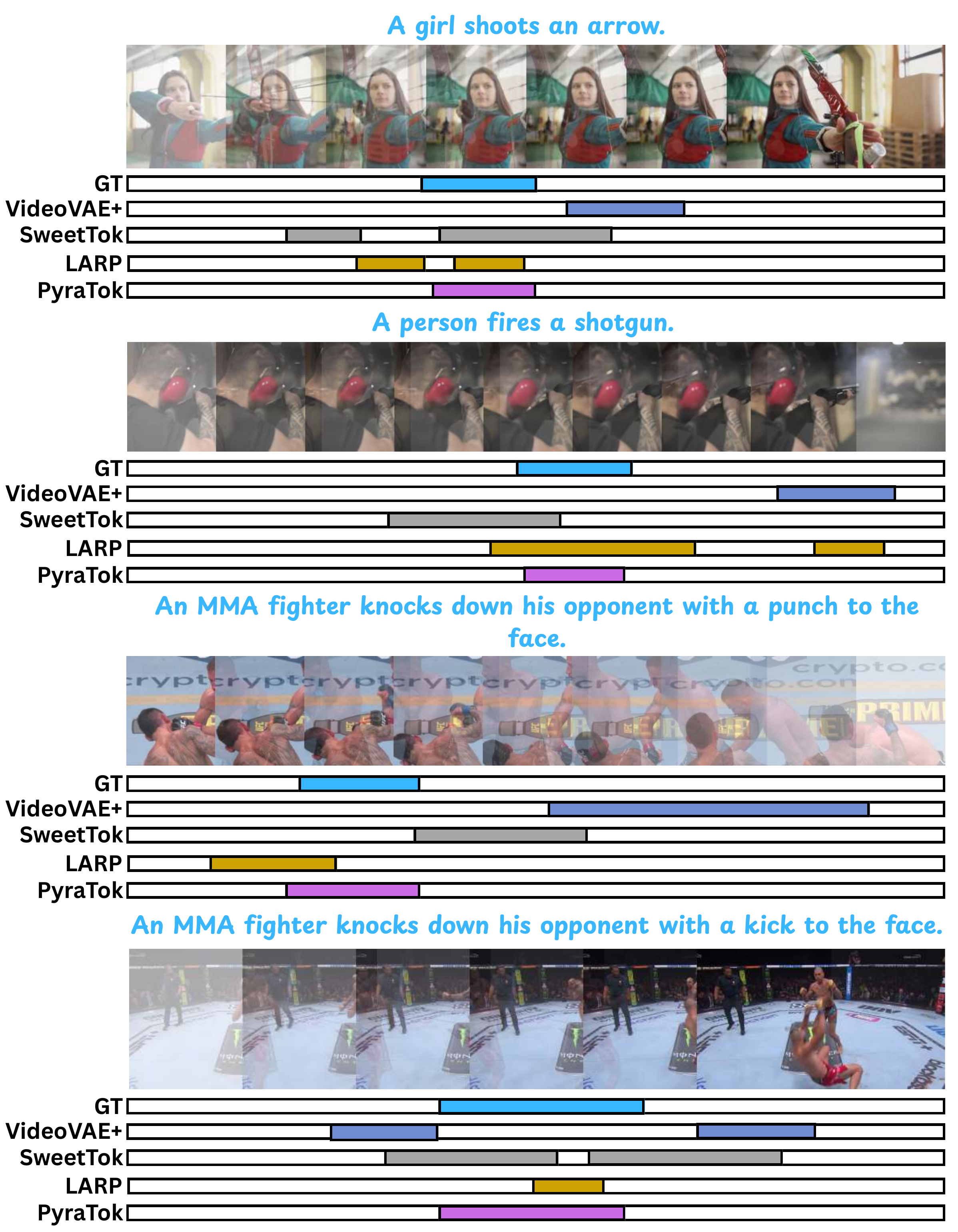}
    \caption{\textbf{Action localization results comparing \modelnamenc{} with several baselines.} For each prompt, the top row shows sampled video frames, followed by temporal action segments for the ground truth and predictions from each method. \modelnamenc{} produces action intervals that align more closely with the ground-truth boundaries, demonstrating improved temporal precision and robustness across diverse actions. Details in \ref{app:action}.\looseness-1}
    \label{fig:action_one}
\end{figure*}}

\newcommand{\FigSupactiontwo}{
\begin{figure*}[t!]
    \centering
    \includegraphics[width=0.85\linewidth]{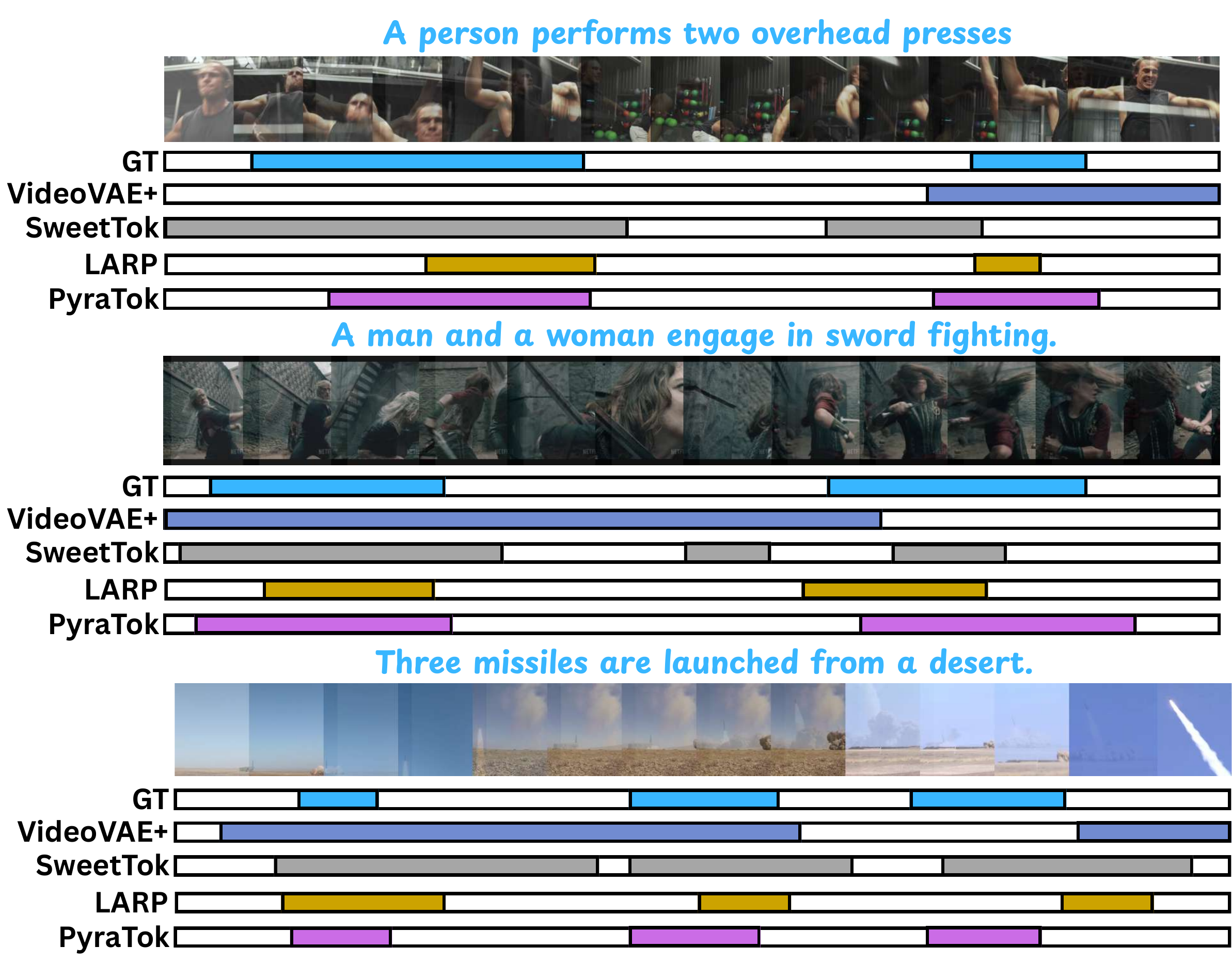}
    \caption{\textbf{Additional action localization comparisons across diverse scenarios.} Each example shows sampled frames followed by ground-truth action intervals and model predictions. \modelnamenc{} consistently yields temporally aligned and coherent action segments, reducing fragmentation and improving boundary accuracy compared to prior baselines. Details in \ref{app:action}.}
    \label{fig:action_two}
\end{figure*}}

\newcommand{\FigSupvideounderstandVUone}{
\begin{figure*}[t!]
    \centering
    \includegraphics[width=0.73\linewidth]{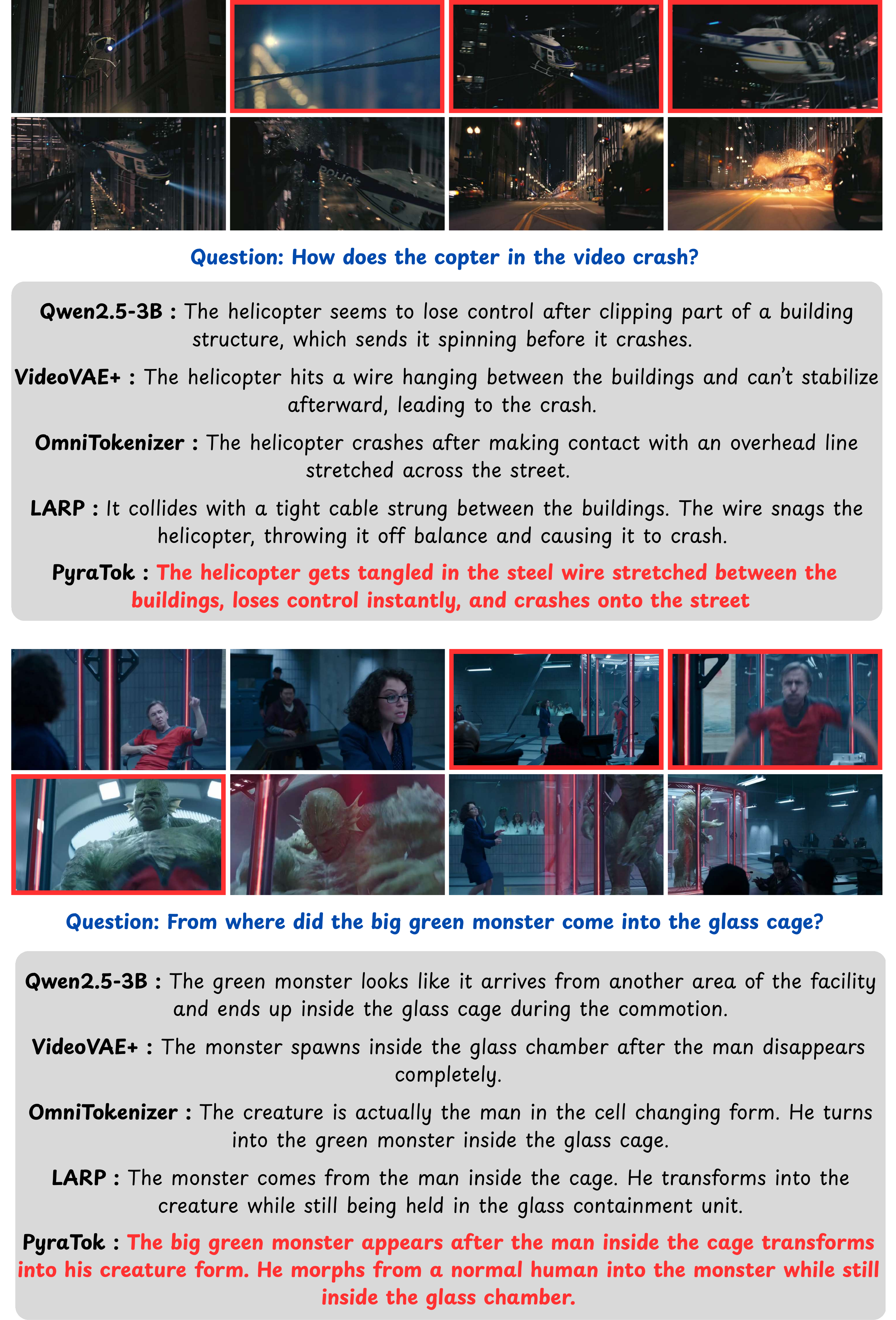}
    \caption{\textbf{Qualitative comparison of video understanding across two challenging temporal-reasoning questions.} For each example, we show representative frames from the input video and the corresponding answers produced by different models. Baselines typically provide generic or partially plausible descriptions that miss key causal events. In contrast, \modelnamenc{} produces precise, temporally grounded explanations that correctly capture fine-grained interactions, entity tracking, and scene dynamics. These examples demonstrate \modelnamenc{}’s ability to support detailed video comprehension and causal reasoning from short video sequences. Discussion in \ref{app:vqa}.}
    \label{fig:video_understand_supp_vu_one}
\end{figure*}}

\newcommand{\FigSupvideounderstandVUtwo}{
\begin{figure*}[t!]
    \centering
    \includegraphics[width=0.8\linewidth]{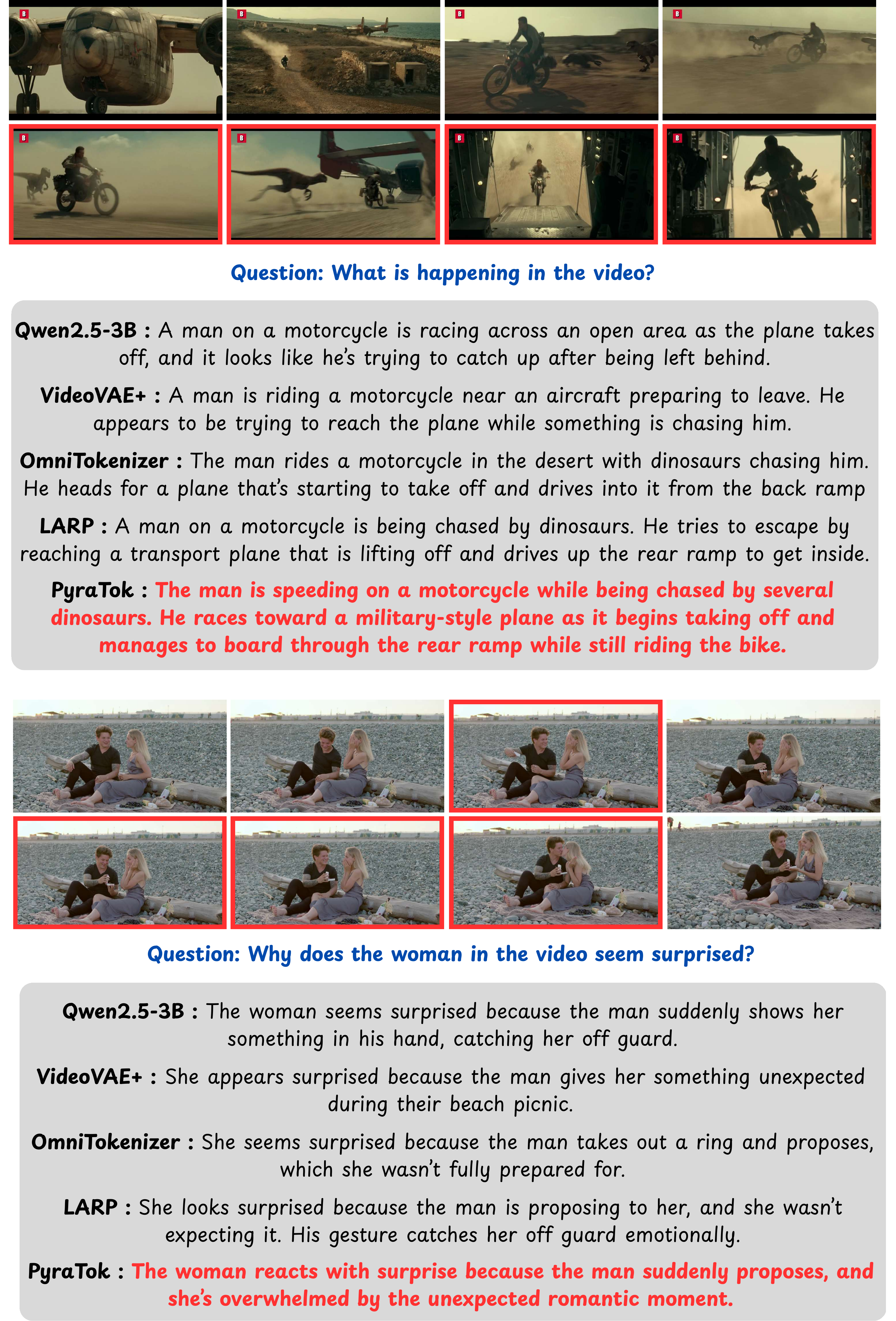}
    \caption{\textbf{Qualitative comparison of video reasoning ability across models.} Models describe major actions in two dynamic scenes (a motorcycle escape from dinosaurs and a surprise beach proposal). \modelnamenc{} delivers the most precise and context-aware answers across both scenarios. Discussion in \ref{app:vqa}.}
    \label{fig:video_understand_supp_vu_two}
\end{figure*}}

\newcommand{\FigSupvideounderstandVUthree}{
\begin{figure*}[t!]
    \centering
    \includegraphics[width=0.73\linewidth]{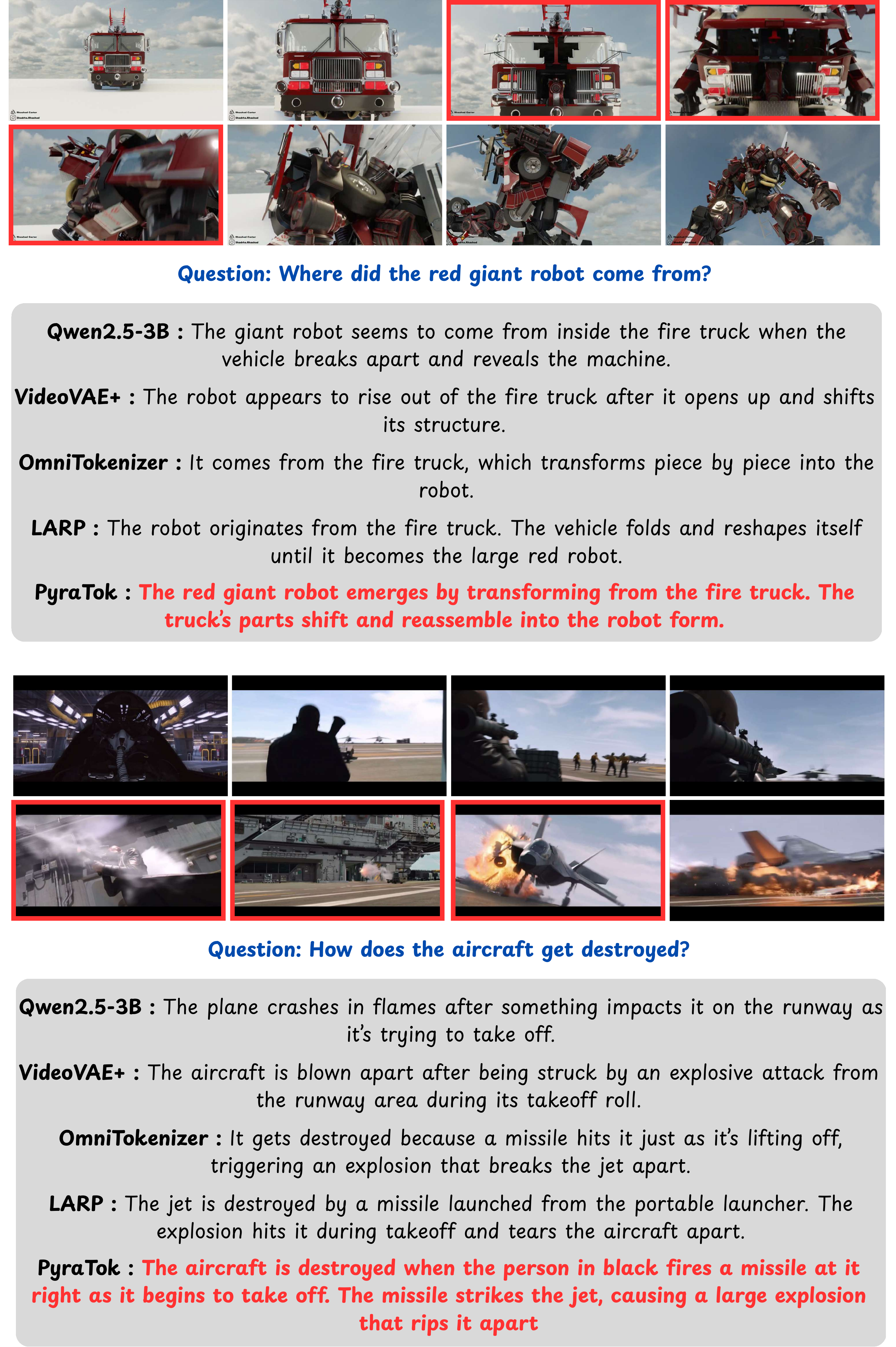}
    \caption{\textbf{Qualitative comparison of video understanding on two transformation- and action-level reasoning tasks.} Baseline methods provide generic or underspecified descriptions (\eg stating that the robot ``comes from the fire truck''), often missing key causal events, responsible agents, and transformation mechanics. In contrast, \modelnamenc{} produces precise, temporally grounded explanations that correctly identify object transformations, causal triggers, and scene dynamics, such as the fire truck’s parts reassembling into the robot or the person in black firing the missile that destroys the aircraft. Discussion in \ref{app:vqa}.}
    \label{fig:video_understand_supp_vu_three}
\end{figure*}}

\newcommand{\FigSuptrainingplot}{
\begin{figure}[H]
    \centering
    \includegraphics[width=0.9\linewidth]{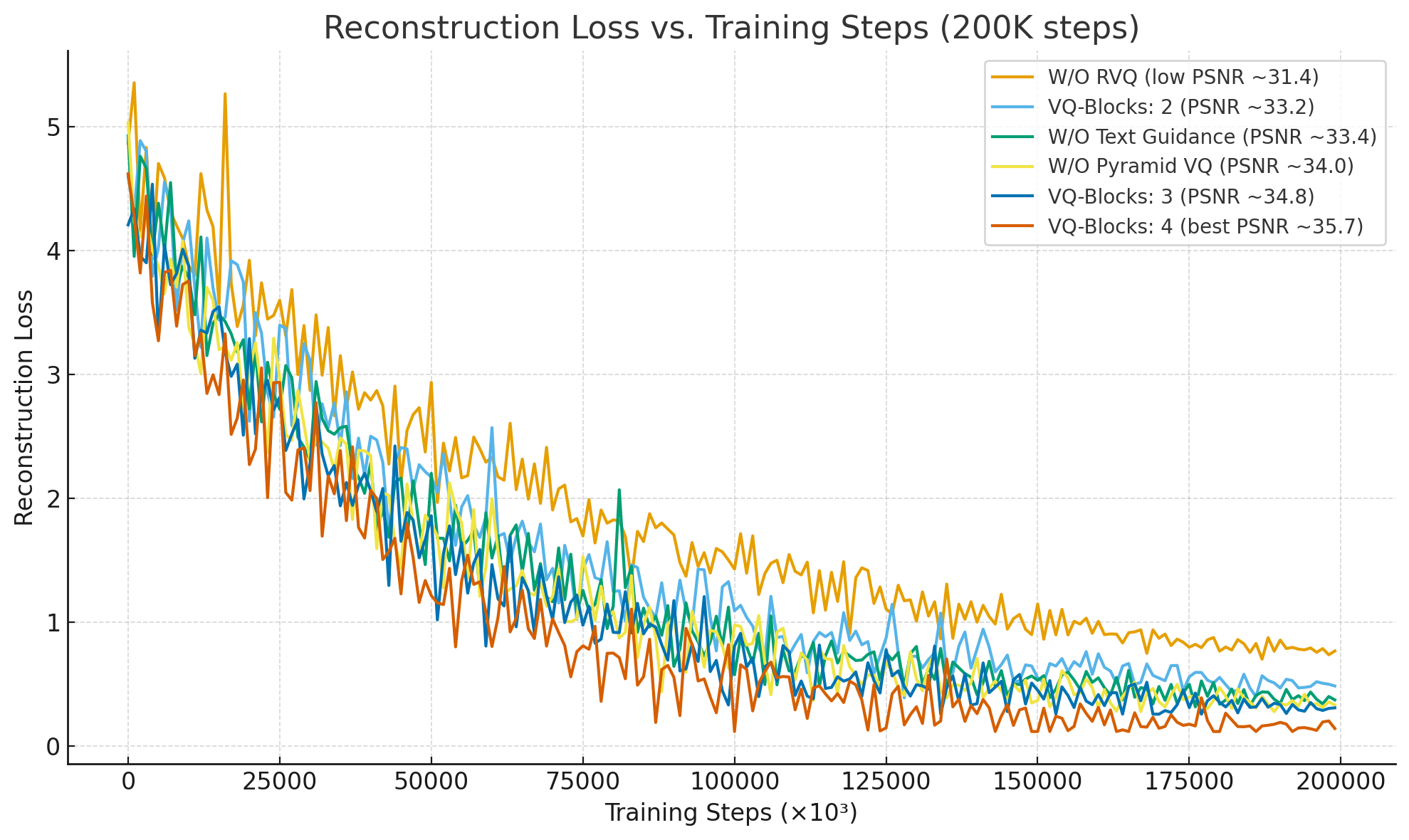}
    \caption{\textbf{Reconstruction loss over 200K training steps.} The best PSNR configuration (VQ-Blocks: 4) converges at a loss of {0.12}, while other ablation variants stabilize above {0.25}.
    }
    \label{fig:recon_loss_200k}  
\end{figure}
}

\newcommand{\FigSupcodebookutio}{
\begin{figure}[H]
    \centering
    \includegraphics[width=0.9\linewidth]{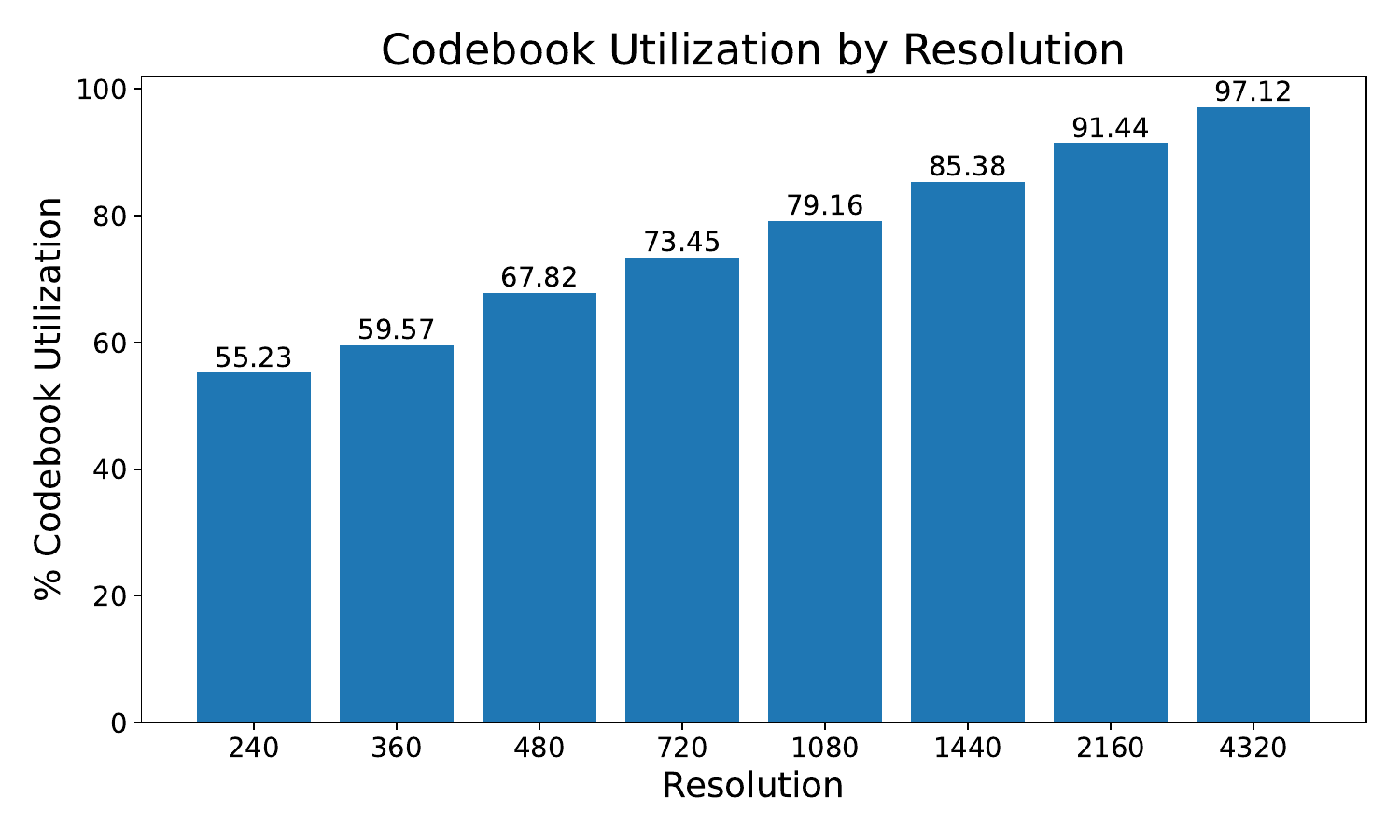}
    \caption{\textbf{Codebook utilization as a function of input resolution.} Higher resolutions activate a larger fraction of the vocabulary, indicating that \modelnamenc{} effectively exploits the increased spatial support to encode more diverse semantics.
}
\label{fig:codebook_resolution}
\vspace{-0.3cm}
\end{figure}}

\newcommand{\TableRecon}{
\begin{table*}[t!]
\scriptsize
\centering
\caption{\textbf{Reconstruction quality comparison.} Latency measured on 25 frames (256$\times$256) using a single V100 GPU. Best highlighted with \textbf{bold} and second-best \ul{underlined}.}
\vspace{0.1cm}
\resizebox{0.99\linewidth}{!}{
\begin{tabular}{lcc|ccc|ccc}
\toprule
\multirow{2}{*}{\textbf{Methods}} & \multirow{2}{*}{\textbf{Params}} & \multirow{2}{*}{\textbf{Latency}} &
\multicolumn{3}{c|}{\textbf{WebVid-10M}} & \multicolumn{3}{c}{\textbf{COCO-Val}}  \\[2pt]
 & & (ms) & PSNR ($\uparrow$) & SSIM ($\uparrow$) & LPIPS ($\downarrow$) & PSNR ($\uparrow$) & SSIM ($\uparrow$) & LPIPS ($\downarrow$) \\
\hline
CogVideoX~\cite{yang2025cogvideox} & 288M & 712 & 29.92 & 0.811 & 0.141 & 30.11 & 0.833 & 0.111 \\
3D-MBQ-VAE~\cite{susladkar2025motionaura} & 317M & 650 & 33.00 & 0.848 & 0.092 & 32.11 & 0.858 & 0.108 \\
WAN 2.2~\cite{wan2025wan} & 222M & 449 & 32.94 & 0.841 & 0.101 & 33.43 & 0.861 & 0.103 \\
OmniTokenizer~\cite{wang2024omnitokenizer} & 82M & 444 & 32.03 & 0.812 & 0.152 & 32.09 & 0.845 & 0.141 \\
LARP~\cite{wang2025larp} & 183M & 689 & 33.03 & 0.851 & 0.091 & 34.26 & 0.853 & 0.089 \\
\hline
TokenFlow~\cite{qu2025tokenflow} & 176M & 600 & 28.21 & 0.799 & 0.189 & 30.11 & 0.811 & 0.177 \\
VideoVae+~\cite{xing2024large} & 192M & 555 & 29.17 & 0.812 & 0.201 & 31.45 & 0.832 & 0.162 \\
TexTok~\cite{zha2025language} & 173M & 661 & 27.42 & 0.831 & 0.222 & 29.29 & 0.841 & 0.181 \\
LG-VQ~\cite{guotao2024lg} & 168M & 598 & 30.23 & 0.807 & 0.173 & 31.32 & 0.836 & 0.152 \\
TokLIP~\cite{lin2025toklip} & 207M & 604 & 31.28 & 0.837 & 0.152 & \uline{33.42} & \uline{0.849} & \uline{0.105} \\
SweetTok~\cite{tan2025sweettok} & 128M & 432 & \uline{32.32} & \uline{0.842} & \uline{0.137} & 32.78 & 0.847 & 0.123 \\
\hline
\rowcolor{purple!5} \modelname\textbf{(Ours)}& 192M & 492 & \textbf{35.72} & \textbf{0.879} & \textbf{0.066} & \textbf{36.05} & \textbf{0.885} & \textbf{0.071} \\
\hline
\end{tabular}
}
\label{tab:quantitative_comparison}
\end{table*}}

\newcommand{\TableCompression}{
\begin{table}[t!]
\centering
\caption{\textbf{Video compression at 0.034 bitrate.}}
\resizebox{0.9\linewidth}{!}{
\begin{tabular}{lccc}
\toprule
\textbf{Methods} & PSNR ($\uparrow$) & SSIM ($\uparrow$) & LPIPS ($\downarrow$) \\
\midrule
HEVC~\cite{sullivan2012overview} & 30.10 & 0.943 & 0.199 \\
VCC~\cite{bross2021overview} & 32.65 & 0.966 & 0.153 \\
\midrule
MAGVIT~\cite{yu2023magvit}  & 23.70 & 0.846 & 0.144 \\
MAGVIT-v2~\cite{yu2024language} & 26.18 & 0.894 & 0.104 \\
3D-MBQ-VAE~\cite{susladkar2025motionaura} & \uline{29.09} & \uline{0.922} & \uline{0.089} \\
\midrule
\rowcolor{purple!5} \modelname\textbf{(Ours)}& \textbf{29.82} & \textbf{0.942} & \textbf{0.068} \\
\bottomrule
\end{tabular}
}
\label{tab:mcljcv_analysis}
\end{table}
\vspace{-0.3cm}
}

\newcommand{\TableVidSeg}{
\begin{table}[t!]
\centering
\caption{\textbf{Video semantic segmentation results on YouTube-VIS 2021 and OVIS. } Best highlighted with \textbf{bold} and second-best \ul{underlined}.
\textcolor{blue}{\faStar} supervised,
\textcolor{green!60!black}{\faCog} unsupervised, 
\textcolor{purple}{\faRocket} zero-shot methods.
}
\resizebox{\linewidth}{!}{
\begin{tabular}{l >{\centering\arraybackslash}m{1.5cm} c c c c}
\toprule
\multirow{2}{*}{\textbf{Method}} &
\multirow{2}{*}{\textbf{Training}} &
\multicolumn{2}{c}{\textbf{YouTube-VIS 2021}} &
\multicolumn{2}{c}{\textbf{OVIS}} \\
\cmidrule(lr){3-4} \cmidrule(lr){5-6}
 &  & \textbf{mAP ($\uparrow$)} & \textbf{Jaccard ($\uparrow$)} &
\textbf{mAP ($\uparrow$)} & \textbf{Jaccard ($\uparrow$)} \\
\midrule
CLIP-VIS~\cite{zhu2024clip} & \textcolor{blue}{\faStar} & 44.2 & 76.31 & 18.6 & 60.09 \\
VideoCutLER~\cite{wang2024videocutler} & \textcolor{green!60!black}{\faCog}  & 17.1 & 62.23 & -- & -- \\
UVIS~\cite{huang2024uvis} & \textcolor{green!60!black}{\faCog}  & 17.5 & 63.11 & 3.5 & 36.71 \\
\midrule
VideoVae+~\cite{xing2024large} & \textcolor{purple}{\faRocket} & 12.33 & 51.21 & 2.8 & 29.91 \\
LARP~\cite{wang2025larp} & \textcolor{purple}{\faRocket} & 10.52 & 49.37 & 1.7 & 28.45 \\
OmniTokenizer~\cite{wang2024omnitokenizer} & \textcolor{purple}{\faRocket} & \uline{14.54} & \uline{51.12} & \uline{2.8} & \uline{33.27} \\
\midrule
\rowcolor{purple!5} \modelname\textbf{(Ours)} & \textcolor{purple}{\faRocket} & \textbf{24.54} & \textbf{66.56} & \textbf{8.9} & \textbf{49.44} \\
\bottomrule
\end{tabular}
}
\label{tab:vis_results}
\vspace{-0.3cm}
\end{table}}

\newcommand{\TableVidAct}{
\begin{table}[t!]
\centering
\caption{\textbf{Video action localization under the 50\% Seen / 50\% Unseen setup.} Best highlighted with \textbf{bold} and second-best \ul{underlined}. \textcolor{blue}{\faStar} supervised and
\textcolor{purple}{\faRocket} zero-shot methods.}
\resizebox{\linewidth}{!}{
\begin{tabular}{l >{\centering\arraybackslash}m{1.5cm} c c c c}
\toprule
\multirow{2}{*}{\textbf{Method}} &
\multirow{2}{*}{\textbf{Training}} &
\multirow{2}{*}{\textbf{VAE}} &
\textbf{THUMOS14} & \textbf{ActivityNet v1.3} \\ 
 & & &  \textbf{Avg. mAP} ($\uparrow$) & \textbf{Avg. mAP} ($\uparrow$) \\
\midrule
STALE~\cite{nag2022zero} & \textcolor{blue}{\faStar} & \xmark & 22.2 & 20.5 \\
DeTAL~\cite{li2024detal} & \textcolor{blue}{\faStar} & \xmark & 24.1 & 22.4 \\
STOV-TAL~\cite{hyun2025exploring}  & \textcolor{blue}{\faStar} & \xmark & 48.8 & 29.6 \\
\cmidrule(lr){1-5}
STOV-TAL~\cite{hyun2025exploring}  & \textcolor{purple}{\faRocket} & \xmark & 31.5 & 28.0 \\
VideoVae+~\cite{xing2024large} & \textcolor{purple}{\faRocket} & \cmark & 23.12 & 21.37 \\
OmniTokenizer~\cite{wang2024omnitokenizer} & \textcolor{purple}{\faRocket} & \cmark & 23.47 & 22.48 \\
SweetTok~\cite{tan2025sweettok} & \textcolor{purple}{\faRocket} & \cmark & 25.32 & 24.53 \\
LARP~\cite{wang2025larp} & \textcolor{purple}{\faRocket} & \cmark & \uline{27.42} & \uline{25.53} \\
\midrule
\rowcolor{purple!5} \modelname\textbf{(Ours)}& \textcolor{purple}{\faRocket} & \cmark & \textbf{33.17} & \textbf{29.11} \\
\bottomrule
\end{tabular}
}
\label{tab:action_localization}
\vspace{-0.3cm}
\end{table}}

\newcommand{\TableVidUnderstand}{
\begin{table}[t!]
\centering
\caption{\textbf{General video understanding results on MVBench.} Higher scores indicate better overall performance.}
\resizebox{0.6\linewidth}{!}{
\begin{tabular}{l c}
\toprule
\textbf{Methods} & \textbf{Overall} ($\uparrow$) \\
\midrule
InternVL3-78B* & 79.2 \\
Qwen2.5-72B* & 71.3 \\
InternVL3-38B* & 76.0 \\
Qwen2.5VL-7B* & 67.2 \\
Qwen2.5VL-3B & 67.0 \\
\midrule
VILA-U & 81.21 \\
OmniTokenizer & 79.44 \\
LARP & 83.21 \\
\rowcolor{purple!5} \modelname\textbf{(Ours)}& \textbf{86.03} \\
\bottomrule
\end{tabular}
}
\label{tab:mvbench_results}
\end{table}}

\newcommand{\TableVidClass}{
\begin{table}[t!]
\centering
\caption{\textbf{Video classification results on Kinetics-400, Kinetics-600, and Kinetics-700 datasets.} Higher accuracy (\%) indicates better performance.}
\resizebox{\linewidth}{!}{
\begin{tabular}{l c c c c}
\toprule
\multirow{2}{*}{\textbf{Model}} &
\multirow{2}{*}{\textbf{VAE}} &
\multicolumn{3}{c}{\textbf{Kinetics}} \\ 
 & & \textbf{400} & \textbf{600} & \textbf{700} \\
\midrule
InternVL & \xmark & 69.1 & 68.9 & 60.6 \\
InternVideo2 & \xmark & 73.1 & 72.8 & 64.9 \\
VideoPrism-g* & \xmark & 76.4 & -- & -- \\
SigLIP2-g-opt† & \xmark & 69.8 & 67.0 & 61.8 \\
PEcoreG & \xmark & \textbf{76.9} & \textbf{76.1} & \textbf{69.1} \\
\midrule
VideoVae+~\cite{xing2024large} & \cmark & 63.32 & 61.27 & 55.55 \\
OmniTokenizer~\cite{wang2024omnitokenizer} & \cmark & 65.03 & 62.75 & 58.71 \\
SweetTok~\cite{tan2025sweettok} & \cmark & 67.54 & 65.01 & 61.45 \\
LARP~\cite{wang2025larp} & \cmark & 69.27 & 68.52 & 66.89 \\
\rowcolor{purple!5} \modelname\textbf{(Ours)}& \cmark & \textbf{78.43} & \textbf{77.11} & \textbf{74.08} \\
\bottomrule
\end{tabular}
}
\label{tab:video_classification}
\end{table}}

\newcommand{\TableVidUnified}{
\begin{table}[t!]
\centering
\caption{\textbf{Accuracy (\%) on general video understanding and video classification.} Best highlighted with \textbf{bold} and second-best \ul{underlined}.}
\vspace{0.1cm}
\resizebox{\linewidth}{!}{
\begin{tabular}{l c c c c c}
\toprule
\multirow{2}{*}{\textbf{Method}} &
\multirow{2}{*}{\textbf{VAE}} &
\textbf{MVBench} & 
\multicolumn{3}{c}{\textbf{Kinetics}} \\
\cmidrule(lr){3-3}\cmidrule(lr){4-6}
 & & \textbf{Overall} & \textbf{400} & \textbf{600} & \textbf{700} \\
\midrule
InternVL3-78B~\cite{zhu2025internvl3} & \xmark & 79.2 & -- & -- & -- \\
Qwen2.5-72B~\cite{bai2025qwen2} & \xmark & 71.3 & -- & -- & -- \\
InternVL3-38B~\cite{zhu2025internvl3} & \xmark & 76.0 & -- & -- & -- \\
Qwen2.5VL-7B~\cite{bai2025qwen2}  & \xmark & 67.2 & -- & -- & -- \\
Qwen2.5VL-3B~\cite{bai2025qwen2}  & \xmark & 67.0 & -- & -- & -- \\
InternVL~\cite{zhu2025internvl3} & \xmark & -- & 69.1 & 68.9 & 60.6 \\
InternVideo2~\cite{wang2024internvideo2} & \xmark & -- & 73.1 & 72.8 & 64.9 \\
VideoPrism-g~\cite{zhao2024videoprism} & \xmark & -- & 76.4 & -- & -- \\
SigLIP2-g-opt\cite{tschannen2025siglip} & \xmark & -- & 69.8 & 67.0 & 61.8 \\
PEcoreG~\cite{bolya2025perception} & \xmark & -- & 76.9 & 76.1 & 69.1 \\
\midrule
VILA-U~\cite{wu2025vila} & \cmark & 81.21 & -- & -- & -- \\
VideoVae+~\cite{xing2024large} & \cmark & -- & 63.32 & 61.27 & 55.55 \\
OmniTokenizer~\cite{wang2024omnitokenizer} & \cmark & 79.44 & 65.03 & 62.75 & 58.71 \\
SweetTok~\cite{tan2025sweettok} & \cmark & -- & 67.54 & 65.01 & 61.45 \\
LARP~\cite{wang2025larp} & \cmark & \uline{83.21} & \uline{69.27} & \uline{68.52} & \uline{66.89} \\
\midrule
\rowcolor{purple!5} \modelname\textbf{(Ours)}& \cmark & \textbf{86.03} & \textbf{78.43} & \textbf{77.11} & \textbf{74.08} \\
\bottomrule
\end{tabular}
}
\label{tab:unified_video_results}
\end{table}}

\newcommand{\TableAlbs}{
\begin{table}[t!]
\centering
\caption{\textbf{Ablations on \modelnamenc components.}}
\resizebox{\columnwidth}{!}{
\begin{tabular}{lcc}
\toprule
 & COCO-Val & WebVid-10M \\
 & PSNR / SSIM / LPIPS & PSNR / SSIM / LPIPS \\
\midrule
\rowcolor{gray!10} \multicolumn{3}{c}{\textcolor{fire1}{\textbf{1. Component Ablation}}}\\\midrule
\texttt{w/o LaPQ} & 31.41 / 0.831 / 0.101 & 31.47 / 0.799 / 0.118 \\
\texttt{w/o Text Guidance} & 33.43 / 0.861 / 0.081 & 36.02 / 0.833 / 0.082 \\
\texttt{w/o Pyramidal-Q} & 34.02 / 0.859 / 0.082 & 34.02 / 0.839 / 0.094 \\
\midrule
\rowcolor{gray!10} \multicolumn{3}{c}{\textcolor{fire6}{\textbf{2. Quantization($\mathcal{Q}$)-Blocks Ablation}}} \\\midrule
2 Blocks & 33.21 / 0.821 / 0.092 & 33.98 / 0.844 / 0.101 \\
3 Blocks & 34.78 / 0.862 / 0.089 & 35.14 / 0.867 / 0.085 \\
4 Blocks (Default) & 35.72 / 0.879 / 0.066 & 36.05 / 0.885 / 0.071 \\
\midrule
\rowcolor{gray!10}  \multicolumn{3}{c}{\textcolor{fire5}{\textbf{3. Loss Function Ablation}}}\\\midrule
w/o $\mathcal{L}_{\text{drift}}$ & 33.48 / 0.839 / 0.082 & 34.52 / 0.853 / 0.081 \\
w/o $\mathcal{L}_{\text{AR}}$ & 33.42 / 0.842 / 0.079 & 34.01 / 0.844 / 0.079 \\
w/o $\mathcal{L}_{\text{drift}}$ \& $\mathcal{L}_{\text{AR}}$ & 32.17 / 0.832 / 0.093 & 32.32 / 0.831 / 0.092 \\
\midrule
\rowcolor{gray!10} \multicolumn{3}{c}{\textcolor{fire4}{\textbf{4. Codebook Loss Ablation}}}\\\midrule
w/o $\mathcal{L}_{\text{vision-commitment}}$ & 32.88 / 0.819 / 0.097 & 33.45 / 0.839 / 0.101 \\
w/o $\mathcal{L}_{\text{text-cond. alignment}}$ & 33.27 / 0.822 / 0.092 & 34.12 / 0.855 / 0.091 \\
w/o $\mathcal{L}_{\text{text-codebook alignment}}$ & 34.11 / 0.849 / 0.087 & 34.78 / 0.872 / 0.083 \\
\midrule
\rowcolor{gray!10} \multicolumn{3}{c}{\textcolor{fire3}{\textbf{5. Multi-Modal Models}}} \\\midrule
Qwen-2.5 VL~\cite{bai2025qwen2} (Default)  & 35.72 / 0.879 / 0.066 & 36.05 / 0.885 / 0.071 \\
LLaMA-3 8B~\cite{grattafiori2024llama}  & 35.62 / 0.871 / 0.069 & 35.34 / 0.878 / 0.079 \\
Gemma-3 4B~\cite{team2025gemma}  & 35.29 / 0.865 / 0.069 & 35.92 / 0.882 / 0.078 \\
\midrule
\rowcolor{gray!10} \multicolumn{3}{c}{\textcolor{fire2}{\textbf{6. Pretrained VAEs}}} \\\midrule
3D-MBQ-VAE~\cite{susladkar2025motionaura}  & 35.01 / 0.869 / 0.069 & 35.33 / 0.878 / 0.075 \\
CogVideoX-VAE~\cite{yang2025cogvideox}  & 34.92 / 0.861 / 0.069 & 35.12 / 0.873 / 0.080 \\
SVD-VAE~\cite{blattmann2023stable}  & 34.18 / 0.859 / 0.074 & 34.78 / 0.865 / 0.083 \\
Mochi-VAE~\cite{genmo2024mochi}  & 34.95 / 0.864 / 0.071 & 35.06 / 0.873 / 0.076 \\
\midrule
\rowcolor{purple!5} \modelname & 36.05 / 0.885 / 0.071 & 35.72 / 0.879 / 0.066  \\
\bottomrule
\end{tabular}}
\label{tab:ablation_study}
\vspace{-0.4cm}
\end{table}}

\newcommand{\Tabletextv}{
\begin{table}[t!]
\centering
\caption{\textbf{T2V performance on WebVid-10M.} Incorporating \modelnamenc{} (\cmark) consistently improves perceptual quality and semantic alignment compared to base models without it (\xmark).}
\resizebox{0.99\columnwidth}{!}{
\begin{tabular}{lcccc}
\toprule
\multirow{2}{*}{\textbf{Base Model}} & \multirow{2}{*}{\textbf{Type}} & \multicolumn{2}{c}{\textbf{FVD} ($\downarrow$) / \textbf{TC} ($\uparrow$) } \\
\cmidrule(lr){3-4}
& & \textbf{\xmark~\modelname} & \textbf{\cmark~\modelname}\\
\midrule
MotionAura~\cite{susladkar2025motionaura}& Discrete Diffusion & 374 / 204 & \textbf{365 / 246} \\
Open MAGVITv2~\cite{luo2024open}  & AutoRegressive & 433 / 191 & \textbf{411 / 214} \\
Omnigenv2~\cite{wu2025omnigen2}  & AutoRegressive & 398 / 185 & \textbf{377 / 208} \\
\bottomrule
\end{tabular}
}
\label{tab:t2v}
\vspace{-0.3cm}
\end{table}
}

\newcommand{\Tablesupunderstanding}{
\begin{table}[t!]
\centering
\caption{\textbf{Ablation on loss functions.}}
\label{tab:lossund}
\resizebox{0.97\columnwidth}{!}{
\begin{tabular}{lccc}
\hline
 & \textbf{THUMOS14} & \textbf{ActivityNet} & \textbf{MVBench} \\
\hline
\xmark~~$\mathcal{L}_{\text{dift}}$                    & 31.27 & 27.62 & 83.32 \\
\xmark~~$\mathcal{L}_{\text{AR}}$                      & 32.45 & 27.98 & 79.45 \\
\xmark~~$\mathcal{L}_{\text{dino}}$ \& $\mathcal{L}_{\text{AR}}$ & 29.29 & 26.78 & 81.57 \\
\xmark~~$\mathcal{L}_{\text{text-cond.\ alignment}}$   & 30.22 & 27.55 & 83.56 \\
\xmark~~$\mathcal{L}_{\text{vision\_commitment}}$         & 32.67 & 28.21 & 84.23 \\
\xmark~~$\mathcal{L}_{\text{text-codebook\ alignment}}$ & 31.11 & 27.07 & 83.91 \\
\cmark~~\textbf{All losses}                   & \textbf{33.17} & \textbf{29.11} & \textbf{86.03} \\
\hline
\end{tabular}

}
\end{table}}

\newcommand{\Tablesupvq}{
\begin{table*}[t!]
\caption{\textbf{Ablation study of different quantization techniques in \modelnamenc.} Each method is specified by its quantization type, codebook vocabulary size, and embedding dimensionality.}
\label{tab:vqablation}
\centering
\resizebox{0.99\textwidth}{!}{
\begin{tabular}{lccccccccc}
\hline
\multirow{2}{*}{\textbf{Quantization}} 
& \multirow{2}{*}{\textbf{Vocab}} 
& \multirow{2}{*}{\textbf{Dim}} 
& \multicolumn{3}{c}{\textbf{COCO-Val}} 
& \multicolumn{3}{c}{\textbf{WebVid-10M}} 
& \multirow{2}{*}{\textbf{Inf. Time}} \\
\cline{4-9}
& & 
& PSNR ($\uparrow$) & SSIM ($\uparrow$) & LPIPS ($\downarrow$) 
& PSNR ($\uparrow$) & SSIM ($\uparrow$) & LPIPS ($\downarrow$) & \\
\hline
VQ~\cite{van2017neural}                & 4096  & 256 & 31.45 & 0.825 & 0.093 & 32.91 & 0.838 & 0.092 & 409 \\
GVQ~\cite{jang2017categorical}               & 4096  & 256 & 32.25 & 0.836 & 0.089 & 33.34 & 0.842 & 0.089 & 438 \\
LFQ~\cite{yu2024language}              & 32800 & 16  & 34.22 & 0.842 & 0.084 & 33.92 & 0.855 & 0.085 & 419 \\
RVQ~\cite{lee2022autoregressive}             & 8000  & 512 & 33.92 & 0.849 & 0.078 & 34.22 & 0.865 & 0.079 & 489 \\
LaPQ            & 8000  & 512 & 34.45 & 0.855 & 0.073 & 34.98 & 0.871 & 0.076 & 503 \\
RVQ~\cite{lee2022autoregressive}       & 32800 & 16  & 34.78 & 0.869 & 0.076 & 35.27 & 0.879 & 0.074 & 488 \\
\hline
\rowcolor{purple!5} \textbf{LaPQ (Ours)} & 48000 & 16  & \textbf{35.72} & \textbf{0.879} & \textbf{0.066} 
                          & \textbf{36.05} & \textbf{0.885} & \textbf{0.071} & \textbf{492} \\
\hline
\end{tabular}
}
\end{table*}
}

\newcommand{\Tablesupclass}{
\begin{table}[t!]
\caption{\textbf{Class-guided video generation.}}
\label{tab:ucf_class_guided}
\centering
\setlength{\tabcolsep}{4pt}
\small
\resizebox{\linewidth}{!}{%
\begin{tabular}{lcccc}
\toprule
\textbf{Tokenizer} & \textbf{Type} & \textbf{\#Tokens} & \textbf{\#Params (Gen.)} & gFVD ($\downarrow$) \\
\midrule
MAGVIT~\cite{yu2023magvit}            & AR   & 1024 & 306M & 265 \\
MAGVIT-V2~\cite{yu2024language}             & AR   & 1280 & 307M & 109 \\
MAGVIT~\cite{yu2023magvit}            & MLLM & 1024 & 306M & 76  \\
MAGVIT-V2~\cite{yu2024language}             & MLLM & 1280 & 307M & 58  \\
LARP-L~\cite{wang2025larp}            & AR   & 1024 & 632M & 57  \\
CogVideoX~\cite{yang2025cogvideox}    & AR   & 6800 & 9.4B & 626 \\
TATS~\cite{ge2022long}                & AR   & 4096 & 321M & 332 \\
Video-LaVIT~\cite{jin2024video}         & AR   & 512  & 7B   & 280 \\
OmniTok~\cite{wang2024omnitokenizer}  & AR   & 5120 & 650M & 191 \\
LARP-L~\cite{wang2025larp}            & AR   & 1024 & 632M & 99  \\
SweetTok~\cite{tan2025sweettok}       & AR   & 1280 & 1.9B & 65  \\
\hline
\rowcolor{purple!5} \modelname\textbf{(Ours)}                                  & AR   & 1024 & 2.3B & \textbf{51} \\
\bottomrule
\end{tabular}
}
\end{table}}
\setabstractlogo[9mm]{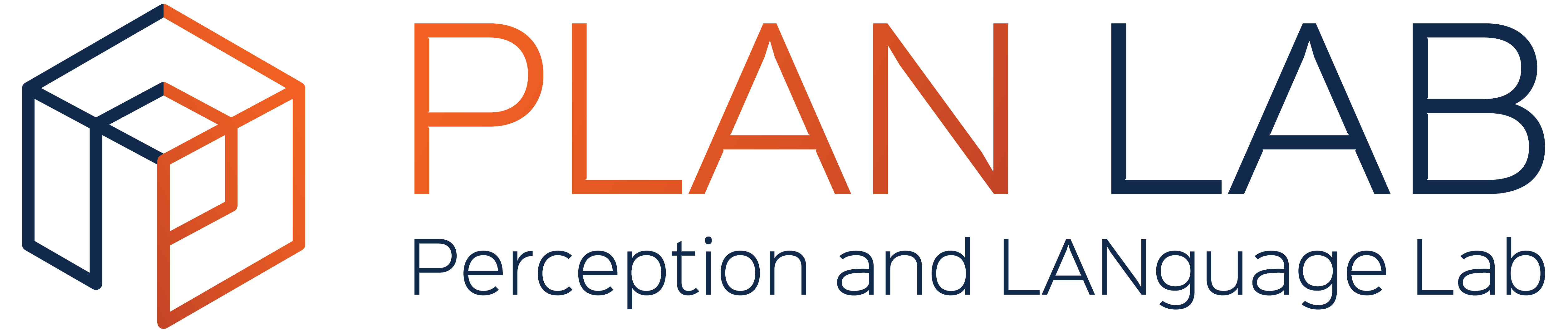} 

\begin{abstract} Discrete video VAEs underpin modern text-to-video generation and video understanding systems, yet existing tokenizers typically learn visual codebooks at a single scale with limited vocabularies and shallow language supervision, leading to poor cross-modal alignment and zero-shot transfer. We introduce \modelname, a language-aligned pyramidal tokenizer that learns semantically structured discrete latents across multiple spatiotemporal resolutions. \modelnamenc builds on a pretrained video VAE and a novel {Language aligned Pyramidal Quantization (LaPQ)} module that discretizes encoder features at several depths using a shared large binary codebook, yielding compact yet expressive video token sequences. To tightly couple visual tokens with language, \modelnamenc jointly optimizes multi-scale text-guided quantization and a global autoregressive objective over the token hierarchy. Across ten benchmarks, \modelnamenc delivers state-of-the-art (SOTA) video reconstruction, consistently improves text-to-video quality, and sets new SOTA zero-shot performance on video segmentation, temporal action localization, and video understanding, scaling robustly to up to 4K/8K resolutions.

\vspace{2mm}
\url{https://plan-lab.github.io/pyratok}
\vspace{-2mm}
\end{abstract}

\maketitle

\section{Introduction}
\label{sec:intro}
In recent years, multimodal video generation has gained significant attention~\cite{kondratyuk2023videopoet, lin2024open, ruan2023mm}. Text-to-video models such as VideoGPT~\cite{yan2021videogpt}, CogVideoX~\cite{yang2025cogvideox}, and OmniGen2~\cite{wu2025omnigen2} are particularly prominent. Most of these models are built on latent diffusion frameworks~\cite{blattmann2023stable, chen2023videocrafter1}, which generate videos in a compact latent space rather than directly modeling high-dimensional pixel distributions, improving efficiency and reducing computational cost. 
Variational Autoencoders (VAEs) are central to this setup. In particular, discrete VAEs~\cite{van2017neural, esser2021taming} have proven especially effective, as their learned codebooks quantize the latent space into discrete tokens, enabling scalable and high-quality video synthesis.\looseness-1

Although discrete VAEs offer strong compression and generation capabilities, their codebooks are typically learned solely from visual data~\cite{van2017neural, yu2024language}. This limits performance on downstream tasks such as text-to-video generation or video understanding, due to the semantic gap between textual input and visual representation. Bridging this gap during downstream training increases convergence time and resource demands. 
Recent works have integrated text supervision directly within VAE architectures~\cite{qu2025tokenflow, zha2025language, guotao2024lg, lin2025toklip, tan2025sweettok}.\looseness-1

However, despite these advances, current methods have few major limitations: (1) They largely capture semantics at a single scale, \ie only after obtaining latent representations from the encoder, which limits their ability to leverage the hierarchical nature of VAEs that model features from low-level spatial details to high-level semantics~\cite{vahdat2020nvae, takida2024hq}, leaving potential for more fine-grained text-video alignment. (2) They typically employ small codebooks (4K–8K tokens), which are sufficient for basic visual patterns but limit the representational capacity of both visual and textual modalities~\cite{yu2024language}. These smaller codebooks hinder effective cross-modal alignment and constrain the expressiveness of text-conditioned video generation models.
(3) Shallow, single-site text alignment causes semantic drift. Most existing methods inject language either \emph{globally} through sequence-level contrastive objectives~\cite{guotao2024lg,lin2025toklip} or \emph{locally} via token-level codebook distillation~\cite{zha2025language}, during codebook learning only. As a result, the learned representations exhibit semantic drift and temporal inconsistency, where local visual tokens fail to remain aligned with global textual intent.

To address the aforementioned limitations, we introduce \modelname, a video tokenizer that leverages a novel \uline{L}anguage \uline{a}ligned \uline{P}yramidal \uline{Q}uantization (\modelnamecpa) to hierarchically encode coarse-to-fine video features using an expressive codebook of large vocabulary.
To bridge visual and text semantics, we introduce a dual semantic alignment strategy that jointly aligns text and video representations via multi-scale quantization and autoregressive refinement. Empirically, \modelnamenc achieves SoTA performance across video generation and various video understanding tasks. 
\modelnamenc surpasses the best prior VAE baseline by +5.75 mAP on temporal action localization, +2.82 on videoQA, and up to +9.16 on video classification. Notably, \modelnamenc is the first VAE to reach SoTA zero-shot video semantic segmentation, outperforming zero-shot and unsupervised methods by up to +10 and +7.0 mAP, respectively.
Fig.~\ref{fig:activation_map} illustrates \modelnamenc's interpretable text-guided cross-modal attention.\looseness-1

\noindent \textbf{Contributions:} In summary, our contributions are:
\begin{itemize}[itemsep=0.5ex, parsep=0pt, topsep=-2.3pt, leftmargin=0.5cm]
 \item We introduce \textbf{\modelnamenc}, a multi-scale semantically aligned Video VAE that couples spatiotemporal quantization with dual semantic alignment, enabling coarse-to-fine understanding and efficient video generation.\looseness-1

\item \modelnamenc leverages \textbf{\modelnamecpa}, a novel language-aligned pyramidal quantization framework, designed to hierarchically encode multi-scale video representations through lateral encoder connections at each stage. Our design enables efficient use of a large $\sim$48K token vocabulary, with up to 95\% codebook utilization.\looseness-1

\item We propose a \textbf{dual semantic alignment} strategy that injects text-conditioned priors at every \modelnamecpa level (\emph{local} alignment) and refines them with an autoregressive objective over the sequence of quantized tokens (\emph{global} alignment). This jointly enforces token-level grounding and sequence-level (temporal and relational) coherence, preventing semantic drift across scales and time.\looseness-1

\item We further introduce a \textbf{hierarchical semantic codebook loss} that ties a shared binary codebook to text embeddings and preserves semantic consistency across pyramid levels through stage-wise KL regularization.\looseness-1
\end{itemize}

\modelnamenc achieves SoTA reconstruction fidelity and downstream performance across 10 diverse video benchmarks, scaling to 4K and 8K resolutions. For example, \modelnamenc is, to our knowledge, the first discrete quantized VAE to demonstrate zero-shot text-guided video segmentation, with up to 2$\times$ improvement in mAP on OVIS over strong baselines.\looseness-1

\section{Related Work}
\label{sec:relatedwork}
\noindent \textbf{Visual Quantized VAEs for Video.}  VAEs have become a cornerstone in video generation~\cite{kondratyuk2023videopoet, lin2024open, singer2023make} and downstream tasks such as text-to-video~\cite{wu2023tune, yang2025cogvideox, zhang2025show} and video understanding~\cite{wang2024omnivid, lin2019tsm, bertasius2021space}, enabling efficient sampling and scalable generation by learning compact latent spaces. A key advance is discrete latent VAEs, introduced in VQ-VAE~\cite{van2017neural}. Unlike continuous VAEs, which map inputs to Gaussian spaces, VQ-VAEs tokenize features into a learnable codebook. This yields structured, non-redundant representations suitable for sequence modeling and scalable training. VQ-GAN~\cite{esser2021taming} adds adversarial training to reduce blur, while ViT-VQGAN~\cite{yu2022vector} replaces CNNs with Vision Transformers~\cite{dosovitskiy2021image} for long-range modeling.\looseness-1

These models have been adapted to video through spatiotemporal extensions. VideoGPT~\cite{yan2021videogpt} introduces a 3D VQ-VAE by replacing 2D CNNs with 3D convolutions to maintain temporal coherence. MAGVITv2~\cite{yu2024language, luo2024open} further improves fidelity via Lookup-Free Quantization (LFQ), enabling substantially larger codebooks with efficient training. More recent tokenizers extend this direction. For instance, OmniTokenizer~\cite{wang2024omnitokenizer} unifies image–video tokenization via a spatial–temporal decoupled design, LARP~\cite{wang2025larp} introduces an autoregressive-friendly latent prior, and 3D-MBQ-VAE~\cite{susladkar2025motionaura} improves efficiency and temporal consistency with mobile inverted blocks and full-frame masking. However, these approaches remain limited in capturing fine-grained spatial details because quantization is performed at a fixed spatial scale.\looseness-1

\noindent \textbf{Text Quantization in VAEs.} While vanilla VQ-VAEs effectively compress visual information, they inherently lack cross-modal alignment, limiting their applicability to tasks requiring semantic consistency, such as text-to-video generation and VideoQA. Early methods like Frozen~\cite{tsimpoukelli2021multimodal} attempted alignment using frozen language models but required large paired datasets. To address this, several image generation methods such as TokLIP~\cite{lin2025toklip}, LG-VQ~\cite{guotao2024lg}, and TokenFlow~\cite{qu2025tokenflow} have proposed unified quantization strategies that embed visual data into language-informed spaces in VAEs.\looseness-1

\FigIntro

Despite significant progress in image generation, only a few methods extend such strategies to video VAEs. For example, VideoVAE+~\cite{xing2024large} integrates captions into the quantization stage using frozen BERT embeddings to align spatiotemporal latents with language semantics. SweetTok~\cite{tan2025sweettok} introduces a motion-aware language codebook with decoupled spatial-temporal tokenization for compact, semantically rich video representations. However, these models typically align semantics at a single resolution, overlooking the hierarchical, coarse-to-fine structure of visual understanding. In contrast, we propose \modelnamenc, a language-enhanced video VAE for video generation and understanding, that introduces multi-scale semantic alignment within discrete latent spaces, enabling joint reasoning over both global context and fine-grained details.\looseness-1
\FigArch

\section{Method}\label{sec:method}
\subsection{Problem Definition}
Given an input video $\mathbf{X} \in \mathbb{R}^{C \times T \times H \times W}$ with $T$ frames, $H \times W$ spatial resolution, and $C$ channels. The goal is to learn a compact latent representation that preserves both spatiotemporal fidelity and semantic correspondence with conditioning text embedding $\mathbf{e_t}$. 
The input video is masked ($\tilde{\mathbf{X}}$) and encoded by $\mathcal{E}n$ to produce latent features $\mathbf{Z}\!=\!\mathcal{E}n(\tilde{\mathbf{X}})$, where $\mathbf{Z} \in \mathbb{R}^{T' \times H' \times W' \times d}$ and $T'\!=\!T/f+1$, $H'\!=\!H/2f$, $W'\!=\!W/2f$ denote compressed temporal and spatial dimensions with compression factor $f$ with $d$ dimensions. Encoded features are discretized through a text-conditioned quantization process $\mathbf{q}\!=\!\mathcal{Q}(\mathbf{Z}, \mathbf{e_t})$, and the decoder reconstructs the video as $\hat{\mathbf{X}}\!=\!\mathcal{D}e(\mathbf{q})$. 
This yields a text-guided video autoencoding objective that learns compact representations for efficient downstream generative modeling.\looseness-1

\subsection{\modelname Architecture}

\subsubsection{Language-aligned Pyramidal Quantization}
Videos exhibit rich structure across multiple spatial and temporal scales, but single-scale quantization methods~\cite{van2017neural, esser2021taming} tend to overfit global patterns or miss fine-grained details. While larger codebooks can improve generation quality~\cite{yu2023magvit}, they introduce prohibitive memory and compute costs. 
To address this, we introduce Language-aligned Pyramidal Quantization (\modelnamecpa), a novel framework that discretizes features at multiple encoder depths via lateral connections, capturing global semantics from deeper layers and local details from shallower ones without high-dimensional codebooks.\looseness-1

In addition, \modelnamecpa aligns both the quantization assignments and codewords with text embeddings, ensuring that each discrete token is informative of the associated language description. This language alignment is essential for text-conditioned video generation and zero-shot video understanding, as it produces a discrete video token space that is natively compatible with multimodal models.

Formally, the encoder $\mathcal{E}n$ processes a masked input video through $L$ hierarchical stages to extract multi-scale spatiotemporal representations 
$\mathbf{F}^{(l)}\!=\!\mathcal{E}n(\mathbf{F}^{(l-1)})$, with $\mathbf{F}^{(0)}\!=\!\tilde{\mathbf{X}}$,
where $\mathbf{F}^{(l)} \in \mathbb{R}^{C_l \times T_l \times H_l \times W_l}$ denotes the feature map at the $l^\text{th}$ stage of the encoder, with progressive downsampling along spatial and temporal dimensions.
To capture both fine and coarse spatiotemporal details, we quantize $\mathbf{Z}$ in a pyramidal manner across encoder depths. Specifically, at each stage $l$, we introduce a Quantization Block $\mathcal{Q}_l$ that receives the current encoder feature $\mathbf{F}^{(l)}$, the previous quantized representation $\mathbf{q}^{(l-1)}$, and the query text embedding $\mathbf{e_t}$ for semantic alignment, producing a new semantically aligned quantized representation $\mathbf{q}^{(l)}$ at stage $l$:
\begin{equation}
\setlength{\abovedisplayskip}{6pt}
\setlength{\belowdisplayskip}{6pt}
\mathbf{q}^{(l)}=\mathcal{Q}_l(\mathbf{q}^{(l-1)}, \mathbf{F}^{(l)}, \mathbf{e_t})
\end{equation}
This hierarchical process enables progressive semantic alignment across $L$ stages. Fig.~\ref{fig:PVQ-VAE-arch} illustrates the whole architecture of \modelnamenc. The internal architecture of $\mathcal{Q}$ is detailed in the following subsection. 

\subsubsection{Dual Semantic Alignment}
We propose a novel alignment strategy to ensure that quantized video tokens remain both locally faithful to visual structure and globally consistent with textual semantics.\looseness-1

\noindent \textbf{\ding{182} Multi-scale Semantic Alignment in Quantization Blocks (local):} In each Quantization Block $\mathcal{Q}_l$ of \modelnamecpa, semantic discretization is performed at a specific encoder depth by integrating visual and text information, capturing semantics across multiple scales. Given encoder features $\mathbf{F}^{(l)}$, we incorporate lateral connections to retain spatial and temporal locality. Semantic context is introduced by attending to the text embedding $\mathbf{e_t}$, extracted from a pretrained VLM, via multi-head self-attention, enabling language-guided modulation of visual features. 
The attended visual–text features are subsequently fused through projection layers, yielding modality-aligned representations suitable for quantization.

To discretize these representations efficiently, we adopt Lookup-Free Quantization (LFQ)~\cite{yu2024language}, which replaces the conventional learned codebook $\mathbf{C}\!\in\!\mathbb{R}^{K \times d}$ with compact binary codewords $\mathbf{C}_v\!=\!\{-1, 1\}^{\log_2 K}$. This eliminates high-dimensional embedding lookups and significantly reduces memory overhead, enabling efficient scaling to a large vocabulary.
The binary codebook $\mathbf{C}_v$ is shared across all $\mathcal{Q}_l$ quantization blocks, ensuring consistency across pyramid levels while minimizing parameter growth.
The codebook is used only during training to compute alignment losses and guide structure. During inference, quantization operates without lookups, preserving the efficiency of LFQ. To jointly optimize quantization and semantic alignment, we introduce a \emph{hierarchical semantic codebook loss}:
\begin{equation}\label{eq:codebook}
\setlength{\abovedisplayskip}{8pt}
\setlength{\belowdisplayskip}{8pt}
\scalebox{0.90}{$
\begin{aligned}
\mathcal{L}_{\text{codebook}}\!=\!\sum_{l=1}^{L} \Bigg[
\underbrace{\left\| \mathbf{q}^{(l)} - \text{sg}(\mathbf{C}_v) \right\|^2}_{\text{ vision-commitment}} + \underbrace{\mathbb{E}\!\left[ -\mathbf{q}^{(l)} \log \mathbf{q}^{(l)} \right]}_{\text{entropy regularization}} \\
+ \underbrace{\mathrm{D_{KL}}\!\left( \mathbf{q}^{(l)} \,\|\, \mathbf{q}^{(l-1)} \right)}_{\text{hierarchical consistency}} 
+ \underbrace{\mathbb{E}_{\mathbf{q}_i \in \mathbf{q}^{(l)}} \left[ \mathrm{D_{KL}}\!\left(\mathbf{q}_i \,\|\, \text{sg}(\mathbf{e_t}) \right)\right]}_{\text{text-conditioned alignment}} \\
 + \underbrace{\mathbb{E}_{\mathbf{c} \in \mathbf{C}_{v}} \mathrm{D_{KL}}\!\left( \mathbf{c} \,\|\, \text{sg}(\mathbf{e_t}) \right)}_{\text{text–codebook alignment}} 
\Bigg].
\end{aligned}
$}
\end{equation}
Here, $\text{sg}(\cdot)$ denotes the stop-gradient operator. The first term encourages vision-commitment by pulling $\mathbf{q}^{(l)}$ toward the binary code vectors $\mathbf{C}_v$, while entropy regularization sharpens the assignments toward near one-hot distributions. The hierarchical KL term enforces hierarchical consistency across quantization levels. The remaining KL terms introduce semantic structure through text-conditioned alignment of assignments and text–codebook alignment of the LFQ codebook.
Together, these terms enable stable multi-scale quantization with strong cross-modal coherence. Fig.~\ref{fig:pca} illustrates this refinement, with deeper stages producing clearer semantic structure. For example, in the first row, later stages reveal more distinct separation of road lanes, vehicles, and background elements.\looseness-1

\FigPCA

\noindent \textbf{\ding{183} Autoregressive Semantic Alignment (global):}
To enforce global semantic consistency between language and discrete latents, we introduce an autoregressive alignment objective over the quantized token sequence. Given a text query $t$, we obtain its embedding $\mathbf{e_t}=\text{VLM}(t)$ and extract discrete tokens from each quantization block using the shared codebook $\mathbf{C}_v$. Tokens from all levels are concatenated with separator tokens $\langle \text{Q-SEP} \rangle$ to retain hierarchical structure, and a start-of-image token $\langle \text{SOI} \rangle$ is prepended after the text. The resulting sequence is fed into the VLM decoder, which autoregressively predicts each visual token conditioned on the text and preceding tokens: $\mathcal{L}_{\text{AR}}\!=\!-\sum_{l=1}^{L} \log p(\mathbf{q}^{(l)} \mid \mathbf{q}^{(<l)}, \mathbf{e_t})$. 
By making visual tokens predictable from the text prefix, this objective encourages the shared codebook to encode globally consistent, language-aligned semantics. The separator tokens retain hierarchical structure while enabling unified sequential modeling, improving both reconstruction quality and latent-space controllability.\looseness-1

\TableRecon

\subsubsection{Pretrained VAE Encoder and LoRA}
\modelnamenc leverages a pretrained video VAE, keeping both encoder $\mathcal{E}n$ and decoder $\mathcal{D}e$ frozen to preserve high-fidelity reconstruction and focus learning on multi-scale semantic alignment. To enable efficient adaptation to high-resolution inputs, we insert LoRA modules~\cite{hu2022lora} into encoder blocks, enabling lightweight feature modulation without modifying pretrained weights.
Text-conditioned supervision can cause latent drift from the pretrained visual manifold. To stabilize adaptation, we add a drift-regularization term that anchors adapted features to a frozen large-scale reference encoder ${En}$: $\mathcal{L}_{\text{drift}}\!=\!\mathrm{D_{KL}}\left( \mathcal{E}n(\tilde{\mathbf{X}}) \middle|\middle| {En}(\tilde{\mathbf{X}}) \right)$, This stabilizes training by preserving alignment with the original visual prior while allowing semantically guided updates.

\subsubsection{Total Objective and Regularization.}
\modelnamenc is trained with a composite loss balancing reconstruction quality, semantic alignment, and quantization consistency $\lambda_{\text{recon}} \mathcal{L}_{\text{recon}}\!+\!\lambda_{\text{codebook}} \mathcal{L}_{\text{codebook}}\!+\!\lambda_{\text{AR}} \mathcal{L}_{\text{AR}}\!+\!\lambda_{\text{drift}} \mathcal{L}_{\text{drift}}$, where  $\lambda_{\text{recon}}$, $\lambda_{\text{codebook}}$, $\lambda_{\text{AR}}$, and $\lambda_{\text{drift}}$ coefficients. The reconstruction loss combines pixel-level and perceptual terms, $\mathcal{L}_{\text{recon}}\!=\!\mathcal{L}_{\text{SSIM}} + \mathcal{L}_{\text{L1}} + \mathcal{L}_{\text{LPIPS}}$, while $\mathcal{L}_{\text{codebook}}$ enforces multi-scale semantic alignment, $\mathcal{L}_{\text{drift}}$ ensures that low-rank adapters do not drift using alignment, and $\mathcal{L}_{\text{AR}}$ promotes autoregressive alignment with the query text.\looseness-1 

\section{Experiments}\label{sec:experiments}
We comprehensively evaluate \modelnamenc on frame reconstruction, text-o-video generation, and a diverse set of multimodal understanding tasks, including zero-shot segmentation, temporal action localization, general video understanding, and text-to-video generation. Evaluations are conducted across 10 real-world benchmarks, such as WebVid-10M~\cite{bain2021frozen}, YouTube-VIS 2021~\cite{yang20213rd}, MVBench~\cite{li2024mvbench}, \etc

\modelnamenc is trained on a large-scale subset of Droplet-10M~\cite{zhang2025dropletvideo} comprising HD videos, augmented with additional HD samples from OpenVid-1M~\cite{nan2025openvid} and ultra-high-resolution (4K/8K) videos with reconstructed captions from UltraVideo~\cite{xue2025ultravideo}. Additional implementation and experimental setup details are provided in the supplementary material.
\FigFrameRecon
\FigTSNE

\subsection{Video Generation Tasks}
\noindent \textbf{Frame Reconstruction.} As shown in Table \ref{tab:quantitative_comparison}, \modelnamenc achieves the best frame reconstruction quality on both WebVid-10M~\cite{bain2021frozen} and COCO-Val~\cite{lin2014microsoft}, surpassing all prior semantic and non-semantic video VAEs. Compared to SweetTok~\cite{tan2025sweettok} and TokLIP~\cite{lin2025toklip}, which also incorporate semantic alignment, \modelnamenc achieves 10.51\% and 14.19\% higher PSNR, and 51.62\% and 56.57\% lower LPIPS, respectively.
SweetTok decouples spatial and temporal tokens but processes them independently, hindering global semantic consistency, while TokLIP enriches visual tokens with CLIP-level~\cite{radford2021learning} semantics but lacks temporal modeling. \modelnamenc overcomes both limitations by combining fine-grained, text-guided quantization at each \modelnamecpa level with a global autoregressive prior that enforces temporal coherence. 
Furthermore, SoTA non-semantic VAEs such as 3D-MBQ-VAE~\cite{susladkar2025motionaura}, CogVideoX~\cite{yang2025cogvideox}, and LARP~\cite{wang2025larp} are also outperformed, highlighting \modelnamenc’s ability to capture text semantics while maintaining high fidelity. 

These trends are clearly reflected in the qualitative results. As shown in Fig.~\ref{fig:video_reconstruction}, \modelnamenc reconstructs legible text in the street scene, crisp leaf textures in the ramen and plant examples, and fine facial structures on the polar bear, whereas all baselines exhibit noticeable blurring or distortion.
The t-SNE visualization in Fig.~\ref{fig:t_sne_plots} further reveals that \modelnamenc{}’s latent space forms compact, well-separated clusters corresponding to coherent semantic categories, evidencing effective multi-scale semantic organization.
\Tabletextv

\noindent \textbf{Text-2-Video (T2V) Generation.} Table~\ref{tab:t2v} and Fig.~\ref{fig:video_generation} show that substituting the native VAEs in MotionAura~\cite{susladkar2025motionaura}, MAGVITv2~\cite{yu2024language, luo2024open}, and OmniGenV2~\cite{wu2025omnigen2} with \modelnamenc{} consistently improves perceptual fidelity, texture sharpness, and text–video semantic alignment. Quantitatively, \modelnamenc{} reduces FVD by 9–22 points and increases TC by 20–27 points across all backbones. Qualitatively (shown in Fig.~\ref{fig:video_generation}), \modelnamenc{} recovers details such as clearer facial structure, and more coherent structure like robotic hand geometry in the OmniGenV2 example.

\FigTV
\subsection{Video Understanding Tasks}
\noindent \textbf{Video Segmentation.}
As shown in Table~\ref{tab:vis_results}, \modelnamenc demonstrates strong zero-shot performance on YouTube-VIS 2021~\cite{yang20213rd} and OVIS~\cite{qi2022occluded}. Compared to the zero-shot SoTA OmniTokenizer~\cite{lin2025toklip}, which lacks explicit text-semantic supervision, \modelnamenc achieves 68.8\% and 30.2\% relative improvements in mAP and Jaccard on YouTube-VIS 2021, and remarkable gains of 217.9\% and 48.6\% on OVIS, respectively. These results underscore the effectiveness of our semantically aligned video representation in enabling robust zero-shot generalization. To the best of our knowledge, \modelnamenc is the first demonstration of zero-shot video semantic segmentation using a language-aligned discrete VAE.\looseness-1

\TableVidSeg
\FigSeg
Compared to unsupervised baselines like VideoCutLER~\cite{wang2024videocutler} and UVIS~\cite{huang2024uvis}, which suffer from motion ambiguity and inconsistent grouping, \modelnamenc{}'s multi-scale text-conditioned quantization achieves coherent segmentation with enhanced spatial–temporal consistency.
Qualitative results in Fig.~\ref{fig:video_seg} further validate these findings. \modelnamenc accurately segments complex multi-object scenes (\eg players, soccer ball, and field) with precise boundaries and strong semantic correspondence between textual and visual cues.
\TableVidAct

\noindent \textbf{Video Action Localization.}
As shown in Table~\ref{tab:action_localization}, \modelnamenc achieves the best zero-shot performance on THUMOS14 and ActivityNet, outperforming the previous zero-shot SoTA LARP~\cite{wang2025larp} by +5.75 mAP and +3.58 mAP, respectively. Although LARP and SweetTok~\cite{tan2025sweettok} incorporate semantics, their alignment remains limited. For instance, SweetTok separates spatial and temporal streams, and LARP lacks explicit text-conditioned supervision. In contrast, \modelnamenc combines multi-scale text-guided quantization with a global autoregressive prior, enabling fine-grained temporal reasoning and stronger cross-modal consistency.\looseness-1

These advantages are evident in Fig.~\ref{fig:video_action}, where \modelnamenc more accurately localizes the baseball bat swing action than others. This design also allows \modelnamenc to surpass supervised approaches such as STALE and DeTAL~\cite{li2024detal}, highlighting the strength of semantically aligned discrete latents for action localization.\looseness-1

\FigAction
\TableVidUnified

\noindent \textbf{General Video Understanding and Classification.}
As shown in Table~\ref{tab:unified_video_results}, \modelnamenc achieves SoTA performance on both the MVBench~\cite{li2024mvbench} and Kinetics benchmarks~\cite{kay2017kinetics}. Specifically, our model attains an overall accuracy of 86.03\% across diverse video understanding tasks on MVBench. Furthermore, it demonstrates substantial improvements of 13.22\%, 12.54\%, and 10.75\% over LARP~\cite{wang2025larp} on the Kinetics-400, -600, and -700 benchmarks, respectively. \modelnamenc surpass prior VAE-based and large-scale non-VAE foundation models, including InternVL3-78B~\cite{zhu2025internvl3}, Qwen2.5-VL-7B~\cite{bai2025qwen2}, and VideoPrism-g~\cite{zhao2024videoprism}. This performance gain stems from \modelnamenc{}’s multi-scale text-guided quantization, which offers stronger semantic grounding and temporal coherence. By contrast, although SweetTok~\cite{tan2025sweettok} and LARP~\cite{wang2025larp} incorporate semantic cues, their limited text–video alignment constrains temporal reasoning.
Within VAE-based methods, \modelnamenc further outperforms VILA-U~\cite{wu2025vila}, OmniTokenizer~\cite{wang2024omnitokenizer}, and VideoVAE+~\cite{xing2024large}, demonstrating the effectiveness of language-conditioned quantized representations. The consistent gains across understanding and classification tasks highlight \modelnamenc{}’s capability as a unified, semantically grounded video representation model with robust zero-shot generalization.\looseness-1

\FigCodebook
\subsection{Ablations}
Fig.~\ref{fig:codebook_vs_loss} and Table~\ref{tab:ablation_study} present ablations on key \modelnamenc{} components, including codebook size, loss configuration, the presence of pyramidal and recurrent quantization modules, the number of quantization blocks, and variations in the multimodal encoder or pretrained video VAE.\looseness-1

\noindent \textbf{Codebook Size.} As shown in Fig.~\ref{fig:codebook_vs_loss}, increasing codebook size and dimensionality consistently improves reconstruction and perceptual quality. Larger and higher-dimensional codebooks provide a richer latent space, enabling finer feature representation and reducing quantization error. However, performance gains saturate beyond 80K vocab size, suggesting a trade-off between model capacity and efficiency.

\noindent \textbf{Component Ablation.} Removing \modelnamecpa leads to the largest degradation across all metrics, highlighting the importance of hierarchical language-aligned quantization. Excluding text guidance noticeably weakens semantic grounding, reducing both fidelity and perceptual quality. Dropping the pyramidal-Q design similarly harms performance, confirming the effectiveness of multi-scale quantization.

\noindent \textbf{Quantization-Blocks.} Performance improves consistently as the number of $\mathcal{Q}$ blocks increases, with four blocks yielding the best results. This shows that deeper quantization hierarchies enhance semantic representation and reconstruction fidelity by capturing both coarse and fine visual details.
\TableAlbs

\noindent \textbf{Loss Functions.} Excluding $\mathcal{L}_{\text{drift}}$ or $\mathcal{L}_{\text{AR}}$ weakens semantic coherence and structure preservation, while removing both leads to the largest performance drop. This confirms that feature-level alignment and variance regularization jointly stabilize semantic learning and reconstruction.\looseness-1

\noindent \textbf{Codebook Loss.}
Without vision-commitment, assignments become unstable, whereas without text-conditioned alignment, semantic guidance weakens. Removing text–codebook alignment disrupts global semantic structure, producing the largest degradation. This demonstrates all three terms are crucial for stable and semantically coherent quantization.

\noindent \textbf{Multimodal Models.} Using different vision-language encoders demonstrates the generality of \modelnamenc. Qwen2.5-VL achieves the best overall performance, while LLaMA-3 and Gemma-3 variants maintain competitive results.\looseness-1

\noindent \textbf{Pretrained VAEs.} Substituting the pretrained backbone shows that \modelnamenc maintains consistent improvements across encoders. The Wan 2.2 VAE~\cite{wan2025wan} (default) delivers the best results, but strong performance with 3DMBQ-VAE, CogVideoX, and Mochi-VAE confirms the robustness and transferability of the proposed semantic quantization design.

\section{Conclusion}\label{sec:conclusion}

\noindent We introduce \modelnamenc{}, a language-aligned pyramidal video tokenizer that performs multi-scale vector quantization with a shared large binary codebook. Our dual semantic alignment couples text-conditioned, per-level quantization with a global autoregressive objective, producing semantically consistent discrete latents. \modelnamenc{} delivers state-of-the-art 4K/8K reconstruction and strong zero-shot transfer on video segmentation, temporal action localization, VideoQA, and video classification. Compatibility studies show consistent gains across vision–language encoders and diverse VAE backbones. Ablations confirm the necessity of the pyramidal path and RVQ, the benefit of four quantization blocks, and the contributions of the autoregressive and drift terms, as well as codebook alignment losses. These results establish \modelnamenc{} as a practical, general-purpose tokenizer for modern video–language systems.\looseness-1

\bibliographystyle{plainnat}
\bibliography{main}

\newpage
\appendix
\section{Theoretical Analysis of Language-aligned Pyramidal Quantization}
\label{app:theory}
We analyze the behavior of the Language-aligned Pyramidal Quantization (LaPQ) objective and the conditions under which the model avoids posterior collapse.
Let $\theta$ denote all trainable parameters.
LaPQ is composed of smooth losses (reconstruction, codebook, autoregressive, and drift), each of which is an expectation over the training distribution $\mathcal{D}$ of video--text pairs $(\mathbf{X},t)$, \ie
$\mathcal{L}_{\theta}\!=\!\mathbb{E}_{(\mathbf{X},t)\sim\mathcal{D}} \big[\ell(\theta;\mathbf{X},t)\big].$  All LaPQ modules (LoRA layers, AR head, LFQ quantizers, \etc) use differentiable operations, so $\mathcal{L}_{\theta}$ is a smooth, lower-bounded deep-network objective.

\noindent \textbf{Why LaPQ Mitigates Posterior Collapse.}
At LaPQ level $l$, let $\mathbf{q}^{(l)}\!=\!\mathcal{Q}_l(\mathbf{q}^{(l-1)}, \mathbf{F}^{(l)}, \mathbf{e_t})$ be the (soft) assignment distribution, where $\mathbf{F}^{(l)}$ are encoder features and $\mathbf{e_t}$ is the text embedding extracted from the text $t$.
LaPQ at level $l$ is \emph{collapsed} if there exists a fixed distribution $\bar{\mathbf{q}}^{(l)}$ s.t. $ \mathbf{q}^{(l)}\!\equiv\!\bar{\mathbf{q}}^{(l)} \text{ for all } (\mathbf{X},t)\sim\mathcal{D}.$
A \emph{fully collapsed} LaPQ posterior satisfies this for all
levels $l=1,\dots,L$. 
Assume the following conditions:\looseness-1
\begin{enumerate}[itemsep=0ex, parsep=0pt, topsep=-2.3pt, leftmargin=0.5cm]
    \item \textbf{Data non-degeneracy:} The data distribution $\mathcal{D}$ is non-degenerate, \ie
    there exist $(\mathbf{X},t)$ and $(\mathbf{X}',t')$ 
    s.t. the corresponding optimal reconstruction outputs under reconstruction loss $\mathcal{L}_{\mathrm{recon}}$ differ.
    \item \textbf{Decoder injectivity:} For any two distinct latent code sequences $\mathbf{q} \neq \mathbf{q}'$, the decoder produces distinct reconstructions $\mathcal{D}e(\mathbf{q}) \neq \mathcal{D}e(\mathbf{q}')$.
    \item \textbf{Model expressiveness:} For any measurable mapping
    $(\mathbf{X},t)\mapsto \mathbf{q}^{(1:L)}$, realizable via encoder features $\mathbf{F}^{(l)}$ and text embedding $\mathbf{e_t}$, there exists a parameter $\theta$ that realizes it to arbitrary precision.
\end{enumerate}
\begin{proposition}[Non-optimality of Collapsed LaPQ Posteriors]
\label{prop:noncollapse}
Any fully collapsed LaPQ posterior $ \mathbf{q}^{(l)}\equiv \bar{\mathbf{q}}^{(l)}$ cannot minimize the LaPQ objective.
\end{proposition}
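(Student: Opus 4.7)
The plan is a proof by contradiction that chains the three hypotheses into a strict improvement over any fully collapsed minimizer. Suppose $\theta^{*}$ minimizes $\mathcal{L}_\theta$ and yet induces $\mathbf{q}^{(l)}(\mathbf{X},t)\equiv\bar{\mathbf{q}}^{(l)}$ for every $l$ and almost every $(\mathbf{X},t)\sim\mathcal{D}$. Then the concatenated code sequence $\bar{\mathbf{q}}=(\bar{\mathbf{q}}^{(1)},\ldots,\bar{\mathbf{q}}^{(L)})$ is a single deterministic vector independent of the input, and the decoder output $\mathcal{D}e(\bar{\mathbf{q}})$ is a fixed image $\bar{\mathbf{X}}$ across the entire training distribution.

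First, I would convert this rigidity into a strict lower bound on the reconstruction loss. By data non-degeneracy (assumption 1) there exist two pairs whose $\mathcal{L}_{\text{recon}}$-optimal targets differ, and by decoder injectivity (assumption 2) these distinct optimal targets can only be realized by distinct code sequences, which collapse forbids. Since $\mathcal{L}_{\text{recon}}=\mathcal{L}_{\text{SSIM}}+\mathcal{L}_{\text{L1}}+\mathcal{L}_{\text{LPIPS}}$ is a proper loss, the best constant predictor cannot be simultaneously optimal for two distinct targets, yielding a gap $\delta>0$ depending only on $\mathcal{D}$ such that $\mathcal{L}_{\text{recon}}(\theta^{*})\;\geq\;\inf_{\theta}\mathcal{L}_{\text{recon}}(\theta)+\delta$.

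Next, I would use model expressiveness (assumption 3) to build a competitor $\theta'$ that reclaims this gap. Realize the measurable map that sets $\mathbf{q}^{(l)}(\mathbf{X},t)$ to the LFQ discretization of the pretrained encoder features $\mathcal{E}n(\tilde{\mathbf{X}})$ at stage $l$ with $\mathbf{e_t}$ injected through $\mathcal{Q}_l$, and leave the LoRA adapters at zero so that $\mathcal{L}_{\text{drift}}(\theta')=0$. This attains the frozen VAE's reconstruction floor while $\mathcal{L}_{\text{codebook}}$ and $\mathcal{L}_{\text{AR}}$ admit a uniform upper bound in terms of the LFQ bit length and the VLM vocabulary size, which suffices to produce $\mathcal{L}_{\theta'}<\mathcal{L}_{\theta^{*}}$ and contradict optimality of $\theta^{*}$.

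The main obstacle is this last inequality: the collapsed $\theta^{*}$ may in fact achieve smaller hierarchical-consistency KL and AR losses than $\theta'$, since a constant posterior is trivially self-consistent across levels and effortlessly predictable by the autoregressive head, so I must argue that $\lambda_{\text{recon}}\delta$ dominates the combined increase in the remaining terms. I would handle this either by working within the paper's implicit coefficient regime, or, more robustly, by leveraging the text-conditioned alignment term $\mathbb{E}_{\mathbf{q}_{i}\in\mathbf{q}^{(l)}}[\mathrm{D_{KL}}(\mathbf{q}_{i}\,\|\,\text{sg}(\mathbf{e_t}))]$ in \eqref{eq:codebook}: under collapse this KL can be simultaneously minimized by at most one $\mathbf{e_t}$, so whenever $\mathcal{D}$ contains two pairs with distinct text embeddings (a mild strengthening of assumption 1), this term contributes a \emph{second}, independent positive gap, and the proof then goes through without any tuning of the loss weights $\lambda$.
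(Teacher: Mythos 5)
Your overall strategy is the same as the paper's: use data non-degeneracy and decoder injectivity to show that any constant (collapsed) code sequence forces a strictly suboptimal reconstruction loss, invoke model expressiveness to realize a non-collapsed competitor, and then argue that the remaining loss terms do not undo the reconstruction gain. Where you differ is in the last step, and your diagnosis there is in fact sharper than the paper's own: the paper simply asserts that the hierarchical-KL, entropy, and AR terms ``can be made no worse'' by the non-collapsed competitor (and even suggests the collapsed AR loss is large because the tokens carry no information), whereas you correctly note that a constant, near one-hot token sequence zeroes the hierarchical-consistency and entropy penalties and is trivially predictable by the autoregressive head, so those terms actually favor collapse.

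However, your proposed fix does not close this gap. Exhibiting a second suboptimality gap for $\theta^{*}$ in the text-conditioned alignment term (valid once $\mathcal{D}$ contains two distinct $\mathbf{e_t}$) only shows the collapsed solution is suboptimal for that term in isolation; it does not produce a single competitor $\theta'$ that simultaneously realizes both gains while not paying more elsewhere. The sticking point remains $\mathcal{L}_{\mathrm{AR}}$: after optimizing the AR head, the competitor's AR loss is lower-bounded by the conditional entropy of its tokens given $\mathbf{e_t}$, and any dependence of the tokens on $\mathbf{X}$ beyond what the text determines --- which is exactly what your reconstruction gain requires --- makes this strictly positive, whereas the collapsed $\theta^{*}$ attains essentially zero AR loss. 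So without either an explicit coefficient condition (the gain $\lambda_{\text{recon}}\,\delta$ dominating the AR, entropy, and hierarchical-KL increments) or an added assumption that $\mathbf{e_t}$ essentially determines the optimal reconstruction on the support of $\mathcal{D}$, the claim that the proof ``goes through without any tuning of the loss weights'' is overstated. To be fair, the paper's proof is informal at precisely this point; if you adopt your first option and state the coefficient regime explicitly, your argument becomes a rigorous version of the paper's, but as written the final step is a genuine gap.
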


\begin{proof}
\vspace{-0.4cm}
Consider any parameter vector $\theta_{\mathrm{c}}$ that yields a fully
collapsed posterior. Then, by definition, every quantizer output distribution $\mathbf{q}^{(l)}$ is constant across inputs, hence the decoder input (the discrete code sequence $\mathbf{q}_c$) is also constant. Hence, all reconstructions are equal to $\hat{\mathbf{X}}_{\mathrm{c}}=\mathcal{D}e(\mathbf{q}_\mathrm{c})$. Then, the reconstruction loss $\mathcal{L}_{\mathrm{recon}}(\theta_{\mathrm{c}})$ is the expected reconstruction loss under a \emph{constant} prediction, \ie $\mathcal{L}_{\text{recon}}(\theta_{\mathrm{c}})
= \mathbb{E}_{(\mathbf{X},t)\sim\mathcal{D}}\big[\ell_{\text{recon}}(\hat{\mathbf{X}}_{\mathrm{c}},\mathbf{X})\big]$.
By the non-degeneracy of $\mathcal{D}$ and standard properties of
$L_1$/SSIM/LPIPS reconstructions, there exists a non-constant mapping
$\mathbf{X}\mapsto \hat{\mathbf{X}}(\mathbf{X})$ that achieves strictly lower expected reconstruction error than any constant
prediction. Using the model expressiveness assumption, we can approximate such a mapping with some parameter vector $\theta_{\mathrm{nc}}$ that yields non-collapsed assignments $\mathbf{q}^{(l)}$ and reconstructions $\hat{\mathbf{X}}(\mathbf{X})$.
Therefore $\mathcal{L}_{\text{recon}}(\theta_{\mathrm{nc}})
< \mathcal{L}_{\text{recon}}(\theta_{\mathrm{c}}).$
We now inspect the remaining terms in the objective.

\noindent \textbf{(i) Hierarchical KL and entropy terms.}
For a fully collapsed posterior, the hierarchical KL terms
$\mathrm{D_{KL}}(\mathbf{q}^{(l)}\|\mathbf{q}^{(l-1)})$ vanish only if all levels share exactly the same constant distribution; otherwise, they incur a positive penalty. Moreover, the entropy term $\mathbb{E}\!\left[ -\mathbf{q}^{(l)} \log \mathbf{q}^{(l)} \right]$ is minimized by near one-hot distributions. A fully collapsed solution that is both constant and sharply peaked is incompatible with representing
the variability in $\mathbf{X}$ and induces suboptimal hierarchical penalties.

\noindent \textbf{(ii) Text-conditioned and AR terms.}
For a collapsed posterior, assignments $\mathbf{q}^{(l)}$ are independent of the text embedding $\mathbf{e_t}$, \ie if $\mathbf{q}^{(l)}$ is constant, it cannot match varying text embeddings. Consequently, the text-conditioned KL terms
$\mathrm{D_{KL}}\!\left(\mathbf{q}_i \,\|\, \text{sg}(\mathbf{e_t}) \right)$ for $\mathbf{q}_i \in \mathbf{q}^{(l)}$ and the global
text--codebook alignment terms cannot be minimized across distinct texts.
Similarly, the autoregressive loss $\mathcal{L}_{\mathrm{AR}}$ cannot exploit visual or textual information because the discrete tokens do not depend on $(\mathbf{X},t)$. By contrast, a non-collapsed posterior can strictly reduce both.

Combining all pieces, $\mathcal{L}(\theta_{\mathrm{nc}}) < \mathcal{L}(\theta_{\mathrm{c}})$
since $\mathcal{L}_{\mathrm{recon}}$ is strictly lower and the remaining
terms can be made no worse, and typically strictly better, by making
assignments depend on $(\mathbf{X},t)$ while respecting regularizers.
Thus $\theta_{\mathrm{c}}$ cannot be a global minimizer of $\mathcal{L}$.\looseness-1
\vspace{-0.3cm}
\end{proof}
Proposition~\ref{prop:noncollapse} states that any fully collapsed
LaPQ posterior is suboptimal under the proposed objective, provided
natural structural assumptions on the data and model capacity. Therefore, gradient-based training of LaPQ is driven toward stationary points that preserve dependent discrete representations, thereby mitigating posterior collapse and encouraging high-utilization codebooks.\looseness-1

\FigSupsegone
\FigSupsegtwo

\FigSupvideounderstandVUone
\FigSupvideounderstandVUtwo
\FigSupvideounderstandVUthree

\FigSupactionone
\FigSupactiontwo

\FigSupFrameTVone

\FigSupFrameTVtwo

\FigSupHDTV

\FigSupFramebig
\FigSupvideoone
\FigSupvideotwo
\FigSupvideothree
\FigSupvideofour

\FigSupmagvae
\FigSupomnivae
\FigSupmotionvae

\newpage

\section{Additional Results}\label{app:results}
\subsection{Zero-shot Video Segmentation}\label{app:seg}
Given an input video and a natural language text $t$, we leverage the language-aligned discrete representation produced by \modelnamenc{} to obtain zero-shot, text-guided spatio-temporal masks. Specifically, we first pass the video through the frozen \modelnamenc{} encoder and its Language-aligned Pyramidal Quantization (LaPQ) hierarchy and extract the quantized features from the last quantization block, denoted by $\mathbf{q}^{(L)} \in \mathbb{R}^{T' \times H' \times W' \times d}$, which capture high-level, text-aligned semantics at a compressed spatio-temporal resolution. We then decompose the input text into a set of semantic units (typically content words or short phrases), $\{w_1,\dots,w_K\}$, and obtain a language embedding $\mathbf{e}_{w_k}$ for each unit using the same vision--language model employed during \modelnamenc{} training. For every semantic unit $w_k$, we compute a similarity score between $\mathbf{e}_{w_k}$ and each token in $\mathbf{q}^{(L)}$ (\eg via cosine similarity in the shared embedding space), yielding a token-level relevance map $\mathbf{S}_{w_k}^{\text{tok}}(t',h',w')$. This relevance map is then upsampled to the original video resolution, following the encoder downsampling pattern (or via decoder-aligned projection), to produce a dense per-pixel score volume $\mathbf{S}_{w_k}(x,y,t)$ for each semantic unit. We treat these volumes as unary potentials in a fully connected 3D Conditional Random Field (CRF) defined over the spatio-temporal lattice $(x,y,t)$, with pairwise terms encouraging spatial smoothness aligned to image edges and temporal consistency across adjacent frames. Running mean-field inference in this 3D-CRF refines the raw scores into a binary segmentation mask $\mathbf{M}_{w_k}(x,y,t) \in \{0,1\}$ that assigns each pixel in each frame to semantic unit $w_k$. Repeating this procedure iteratively over all semantic units in the prompt yields a set of
word-level, zero-shot, text-guided segmentation masks that are both spatially precise and temporally coherent across the video.\looseness-1

We compare our language-guided tokenizer with OmniTokenizer~\cite{wang2024omnitokenizer}, LARP~\cite{wang2025larp}, and VideoVAE+~\cite{xing2024large}, on diverse scenes in Fig.~\ref{fig:seg_sup_one} and novel-category examples in Fig.~\ref{fig:seg_sup_two}. Existing tokenizers often yield coarse, blob-like masks with strong label confusion: OmniTokenizer and LARP tend to over-smooth object boundaries and merge adjacent instances (\eg bus and road, trees and background), while VideoVAE+ frequently misses thin structures such as bike frames, surfboards, and traffic signs, or hallucinates spurious regions in uniform areas. These methods also struggle with rare or fine-grained concepts, leading to incomplete segmentation of small objects (\eg cat ears, surfboard tips) and inconsistent labeling across the image. In contrast, \modelnamenc{} produces masks that are both sharper and more semantically aligned with the ground truth, accurately separating foreground from background and preserving thin structures. Fig.~\ref{fig:seg_sup_two} further demonstrates strong zero-shot generalization: \modelnamenc{} cleanly segments unseen categories such as millennium falcon, tordelli, golden retriever, Pikachu, and axolotl, and simultaneously grounds multiple text queries (\eg ``golden retriever / puppy / grass field / vegetation'') in the correct regions, highlighting that our language-aligned tokens carry richer semantic information than prior VAE-based tokenizers.\looseness-1

\subsection{Video Question Answering}\label{app:vqa}
For all question answering results, we adopt Qwen2.5-VL-3B~\cite{bai2025qwen2} as the default
vision--language (VLM) backbone to generate open-ended answers from our video
representations. Given an input clip, we first encode the video with our proposed
\modelnamenc{} VAE and extract the discrete representations from \emph{all}
quantization blocks. These multi-scale features are projected into the language embedding space and prepended to the question tokens, yielding a unified conditioning sequence for the autoregressive decoder. The Qwen2.5-VL-3B model then performs conditional text generation to produce the final answer.
All VQA inferences are executed using the Text Generation Inference (TGI)
pipeline from HuggingFace~\cite{huggingface_tgi_docs}, which provides a stable and reproducible
deployment for our qualitative analysis.\looseness-1

Furthermore, across Fig.~\ref{fig:video_understand_supp_vu_one} to Fig.~\ref{fig:video_understand_supp_vu_three}, we compare \modelnamenc{} against Qwen2.5-3B, VideoVAE+, OmniTokenizer, and LARP on diverse video scenarios, including action sequences (helicopter crash, motorcycle chase, aircraft destruction), transformation events (monster emergence, firetruck-to-robot), and emotional interactions (a surprise proposal). The lower-capacity baselines (Qwen2.5-3B and VideoVAE+) often produce vague or partially incorrect explanations, while  OmniTokenizer and LARP capture events more reliably but still miss finer details. \modelnamenc{} consistently provides the most accurate, complete, and context-aware interpretations across all scenarios, demonstrating stronger temporal reasoning, causal understanding, and fine-grained visual grounding compared to competing models.\looseness-1

\subsection{Action Localization}\label{app:action}
We tackle temporal action localization in long, untrimmed videos by directly operating in the discrete latent space of \modelnamenc{}.  
Given a video of $N$ RGB frames and a textual description of the target action, we first encode every frame with our pyramidal tokenizer.  
Empirically, we observe that $\mathbf{q}^{(1)}$ offers the best trade-off between semantic expressiveness and temporal resolution: it preserves subtle motion cues (\eg arm swing before an arrow release, the instant of impact in a punch, see Fig.~\ref{fig:action_one}) that are strongly smoothed out in deeper levels.  
We therefore use $q^{(1)}$ as our frame-level features.
For each frame $t$, we spatially pool the tokens $\mathbf{q}^{(1)}_t$ (mean-pooling over space) to obtain a compact frame descriptor $\mathbf{v}_t \in \mathbb{R}^d$.  
The textual query is encoded by the same language backbone used for \modelnamenc{}’s cross-modal training, producing a normalized embedding $\mathbf{z} \in \mathbb{R}^d$.  
We compute cosine similarity scores $s_t\!=\!\langle \mathbf{v}_t, \mathbf{z} \rangle$ for all frames, which yield a dense text–video alignment signal over time.\looseness-1

To robustly localize an action interval, we evaluate similarities in a sliding-window fashion.  
The video is partitioned into overlapping chunks $(t, t+K-1)$ of length $K{=}25$ frames (with stride 1 in all experiments).  
For each chunk we aggregate the frame scores, $S_t\!=\!\frac{1}{K} \sum_{i=t}^{t+K-1} s_i$, resulting in a 1D confidence trajectory $\{S_t\}_{t=1}^{N-K+1}$ that reflects how strongly the query is grounded in each temporal neighborhood.  
We then decode this trajectory into contiguous segments using a longest-connected-sequence algorithm:  
(i) we threshold $S_t$ at a fixed confidence $\tau$ to obtain a binary sequence;  
(ii) identify all maximally connected high-confidence segments; and  
(iii) select the segment with the highest average score as the predicted action interval. For multi-action queries, we iteratively remove the selected interval and repeat, merging overlapping segments when necessary. The resulting segments define our temporal action predictions.\looseness-1

Fig.~\ref{fig:action_one} and Fig.~\ref{fig:action_two} visualize localized action segments for different tokenizers on several challenging examples.  
For each text query, the ground-truth (GT) segment is shown in blue, and the predictions obtained from VideoVAE~\cite{xing2024large},+, SweetTok~\cite{tan2025sweettok}, LARP~\cite{wang2025larp}, and \modelnamenc{} are displayed as colored bars beneath.  
The baselines consistently exhibit temporally diffuse and fragmented activations: their similarity signals tend to fire on visually similar but semantically off-target frames, producing multiple short segments or systematically shifted intervals.  For instance, in Fig.~\ref{fig:action_one}, in the clip  “A girl shoots an arrow”, both VideoVAE\,+ and SweetTok activate broadly over the whole sequence and fail to concentrate probability on the actual release moment, while LARP on several disjoint intervals before and after the shot. In contrast, \modelnamenc{} yields a single, compact segment that tightly aligns with the GT span around the arrow release.  
A similar pattern appears for text query “A person fires a shotgun”, where baseline tokenizers localize earlier or later segments, whereas \modelnamenc{} localizes correctly.\looseness-1

The advantages of our fine-grained features are even more evident for actions with multiple sub-events.  
In Fig.~\ref{fig:action_one} example “An MMA fighter knocks down his opponent with a punch to the face”  and  “…with a kick to the face”, the motion unfolds rapidly and is preceded by visually similar feints.  
VideoVAE\,+ and SweetTok tend to spread confidence over the entire exchange, leading to overly long or misaligned segments, while LARP often localizes only part of the motion (\eg the wind-up but not the impact).  
\modelnamenc{}, by contrast, localizes a concise window centered around the decisive contact, closely matching the GT.  
In Fig.~\ref{fig:action_two}, for  “A person performs two overhead presses”, \modelnamenc{} produces two high-confidence video segments that track both overhead press repetitions, whereas baselines either miss the second repetition or merge the two into one coarse interval.  
For complex, extended actions such as  “A man and a woman engage in sword fighting”  and  “Three missiles are launched from a desert”, baseline tokenizers again show scattered activations, localizing short segments around high-motion frames or transient explosions, and resulting in under-coverage of the GT.  
In contrast, \modelnamenc{} yields more accurate localization.\looseness-1

\subsection{Text-2-Video Generation}\label{app:t2v}
To assess the usefulness of our tokens for generative modeling, we couple \modelnamenc{} with a conditional video decoder built on Qwen-2.5VL~\cite{bai2025qwen2}. Concretely, we treat the text encoder of Qwen-2.5VL as a frozen condition network and fine-tune its video decoder to autoregressively predict \modelnamenc{} codes. Given a textual prompt, we first encode the prompt into language features, which are injected into a transformer-based decoder that models the joint distribution over all spatio–temporal tokens from our four quantizers. The decoder predicts the next token conditioned on the text and all previously generated tokens, until a full sequence of discrete video codes is obtained. These codes are then passed through the \modelnamenc{} VAE decoder to synthesize the final video. Thanks to \modelnamenc{}'s compact yet expressive representation, this pipeline can generate videos at 20 FPS with resolutions up to 4K. \looseness-1

Fig.~\ref{fig:frame_tv_one} and Fig.~\ref{fig:frame_tv_two} show qualitative comparisons on text-to-video generation where we keep the Qwen-2.5VL decoder architecture fixed and only swap the underlying tokenizer. OmniTokenizer and LARP tend to under-utilize fine-grained textual cues, often missing localized attributes such as the “two egg halves’’ in the ramen bowl or the “motion blur on pedestrians’’ in the neon street scene, and producing over-smoothed or distorted structures in complex compositions like the tree city and Mars spaceport. 
SweetTok better preserves global layout but still struggles with high-frequency details and subtle style descriptors (\eg HDR interior lighting, crisp spray around the polar bear), leading to muted textures and inconsistent object shapes.\looseness-1 

In contrast, \modelnamenc{} yields samples that more faithfully reflect both global scene descriptions and fine-grained phrases in the prompts. The additional objects specified in the text appear at the correct locations, motion-related cues are rendered more plausibly, and material and lighting properties (glossy chocolate surface, bioluminescent foliage, cinematic city glow) are captured with higher fidelity. Fig.~\ref{fig:HDTV} further illustrates 4K text-to-video generation for a 3-second clip, where \modelnamenc{} renders fine-grained details and maintains sharp structures, demonstrating that our multi-scale quantization supports high-resolution, text-aligned video synthesis.

\subsection{High-resolution Frame Reconstruction}\label{app:recon}
We further evaluate \modelnamenc{} on 4K frame reconstruction in Fig.~\ref{fig:frame_reconcs_sup}. At this resolution, prior tokenizers struggle to preserve fine structures and high-frequency textures. VideoVAE+~\cite{xing2024large} produces strong over-smoothing: the coral branches, tree leaves, and fur on the buffalo become noticeably blurred, and small objects such as distant boats and fire lamps nearly vanish in the zoomed-in crops. OmniTokenizer~\cite{wang2024omnitokenizer} improves sharpness but introduces ringing and aliasing along high-contrast boundaries (\eg the product watch edges and mountain silhouettes), and often exhibits color bleeding in specular regions. SweetTok~\cite{tan2025sweettok} and LARP~\cite{wang2025larp} retain more detail yet still suffer from blocky artifacts on repetitive textures (grass, foliage, brick walls) and inconsistent reconstruction of tiny highlights, such as reflections on the watch bezel and lights on the night harbor.
In contrast, our \modelnamenc{} reconstructions remain consistently crisp and coherent. Objects across all scenes—from coral polyps and reef fish to product shots and distant architectural details—retain sharp contours and clean textures without haloing. Fine-grained elements such as fur strands, leaf veins, and small fruits are faithfully preserved, demonstrating that our pyramidal tokenization scales effectively to ultra-high resolutions while avoiding the blurring and aliasing present in prior methods.

In qualitative video reconstruction comparisons (Figs.~\ref{fig:video_recon_one}–\ref{fig:video_recon_four}), existing tokenizers show consistent limitations across diverse scenes. TokenFlow~\cite{qu2025tokenflow} and SweetTok often oversmooth high-frequency content, causing foliage, clothing textures, and facial details to blur, and small or thin structures to distort or disappear; they also introduce blocky artifacts under large motion. LARP better preserves local contrast but frequently produces ringing around boundaries and unstable illumination, leading to flickering highlights and shadows. MotionAura~\cite{susladkar2025motionaura} improves temporal smoothness yet still suffers from identity drift in talking-head sequences and ghosting around fast movements, reducing perceptual realism. Moreover, as previous methods were trained on low-resolution data, their high-resolution reconstructions exhibit temporal artifacts such as frame stuttering. In contrast, our 4K-trained \modelnamenc{} preserves high-frequency detail and temporal coherence, producing smooth and stable video. \looseness-1

\TableCompression

\subsection{Adapting Pretrained T2V Priors with \modelnamenc{}}\label{app:t2vprior} 
We further study whether \modelnamenc{} can serve as a drop-in tokenizer for existing text-to-video priors. To this end, we replace the original VAE/tokenizer in three pretrained models, \ie Open source version of MAGVIT-V2~\cite{luo2024open, yu2024language} and OmniGenV2~\cite{wu2025omnigen2} (autoregressive priors) and MotionAura~\cite{susladkar2025motionaura} (discrete diffusion prior), and fine-tune only the prior on 10k clips from OpenVid-1M~\cite{nan2025openvid} so that it models \modelnamenc{} codes. Under identical prompts and sampling hyper-parameters, and across all architectures, using the native tokenizer leads to typical failure modes: colors and exposure drift over time, geometry “breathes’’ (\eg wobbling backgrounds and object contours), high-frequency details such as dough surface texture or water droplets quickly collapse into smooth blobs, and object semantics are weakly preserved (\eg inconsistent shape of the claw-machine robot or citrus slices). 
After swapping in \modelnamenc{}, the same priors produce videos that are both more semantically aligned with the prompts and markedly more temporally consistent.\looseness-1

In \Cref{fig:our_mag}, MAGVITv2+\modelnamenc{} maintains stable neon lighting in the arcade, preserves the dough’s volume and hand pose across frames, and keeps the boiling dumplings sharp with coherent bubble motion. In \Cref{fig:our_omni}, OmniGenV2+\modelnamenc{} yields crisp tree trunks and facial details with reduced frame-to-frame jitter, while the splashing juice exhibits smoother, physically plausible trajectories. 

Similarly, MotionAura+\modelnamenc{} in \Cref{fig:our_motion} suppresses diffusion-induced flicker in backgrounds. These improvements indicate that \modelnamenc{}’s multi-scale discrete representation reduces quantization artifacts and exposes a more structured latent space, making it easier for both autoregressive and diffusion priors to model long-range spatio-temporal dependencies and maintain object identity over time, even with minimal fine-tuning data.

\section{Ablations and Additional Analyses}\label{app:ablations}
\subsection{Video Compression}
As reported in Table~\ref{tab:mcljcv_analysis}, \modelnamenc attains the lowest LPIPS and competitive PSNR/SSIM on MCL-JCV~\cite{wang2016mcl} at a bitrate of 0.034, surpassing traditional codecs like HEVC~\cite{sullivan2012overview} and VCC~\cite{bross2021overview} in perceptual fidelity (LPIPS) by preserving fine texture and temporal coherence through semantically guided quantization.\looseness-1

\Tablesupclass

\Tablesupvq

\subsection{Video Generation}
We evaluate our tokenizer and generator on class-guided video generation using the UCF-101~\cite{soomro2012ucf101} dataset. Given a target action class, the model is conditioned on the class label and asked to synthesize a short video clip from scratch. This setting measures not only low-level visual fidelity (appearance, motion smoothness) but also whether the generated sequence is semantically consistent with the requested action category.\looseness-1

We compare \modelnamenc{} against a broad set of video generative models that rely on different tokenizers and generator architectures, including MAGVIT/MAGVIT-V2, LARP-L, CogVideo, TATS, Video-LaVIT, OmniTok, and SweetTok. For all methods, we report the generative Fréchet Video Distance (gFVD), where lower values indicate better alignment with the distribution of real videos. As shown in \Cref{tab:ucf_class_guided}, our method achieves the lowest gFVD on UCF-101, improving upon the strongest prior tokenizer by a substantial margin. These results indicate that our representation is better suited for high-quality, temporally coherent video synthesis, and that scaling the generator on top of our tokens directly translates into stronger video generation performance.

\subsection{Ablation on VQ Techniques}
\Cref{tab:vqablation} presents an ablation of the quantization module in \modelnamenc{}, where each row corresponds to a different way of discretizing the encoder features, specified by its quantization type, vocabulary size, and embedding dimensionality. The simple single-codebook baseline VQ~\cite{van2017neural} (4096 / 256), with vocal size of 4096 and a dimension of 256, yields the weakest reconstruction quality on both COCO~\cite{lin2014microsoft} and WebVid~\cite{bain2021frozen}, confirming that a single global codebook is insufficient to capture the variability of natural image–video data. Introducing a group structure in GVQ~\cite{jang2017categorical} (4096 / 256) slightly improves PSNR and SSIM, and reduces LPIPS; however, the gains are modest because each group still operates with a relatively small shared codebook. The lookup-free single-block variant, LFQ~\cite{yu2024language} (32800 / 16), increases the effective vocabulary while reducing the per-code dimension, resulting in a clear improvement in PSNR and SSIM, and a lower LPIPS, indicating that finer local code assignment is beneficial.\looseness-1
 
Residual quantization with a higher-dimensional code space, RVQ~\cite{lee2022autoregressive} (8000 / 512), further reduces distortion over vanilla VQ, and replacing the residual codebook with our latent product quantizer, LaPQ (8000 / 512), yields another consistent improvement, showing that decomposing the latent channels into product codebooks makes better use of the same vocabulary size. When we combine residual modeling with LFQ-style factorization, RVQ (32800 / 16) achieves even better performance, but our full LaPQ (Ours, 48000 / 16) achieves the best performance across all metrics on both validation sets, with the highest PSNR/SSIM and lowest LPIPS, while incurring only a small increase in inference time compared to simpler schemes. Overall, results demonstrate that LaPQ’s combination of lookup-free factorization and product–residual coding provides a significantly more expressive and distortion-resilient discrete representation than standard VQ, GVQ, LFQ, or RVQ under comparable computational budgets.\looseness-1

\subsection{Codebook Utilization  vs.\ Resolution}
We further analyze how the effectiveness of our tokenizer scales with input resolution by measuring the percentage of active codewords at different spatial resolutions (see Fig.~\ref{fig:codebook_resolution}). As the resolution increases from $240$p to $4320$p, codebook utilization rises monotonically from $55.23\%$ to $97.12\%$, indicating that higher-resolution inputs naturally excite a richer subset of the learned vocabulary rather than collapsing to a small set of frequently used tokens. In particular, utilization already exceeds $79\%$ at $1080$p and surpasses $90\%$ in the $4$K regime ($2160$p and $4320$p), suggesting that the proposed pyramidal design can effectively exploit the larger spatial support to express more diverse and fine-grained semantics. This trend confirms that our discrete latent space remains expressive and well-populated as we scale to high-resolution video, which is critical for both faithful reconstruction and downstream video-language understanding tasks.\looseness-1

\FigSupcodebookutio

\subsection{Ablation on Losses for Video Understanding}
We ablate each component of the training objective on three video understanding benchmarks: THUMOS14~\cite{idrees2017thumos} and ActivityNet v1.3~\cite{caba2015activitynet} for temporal action localization, and MVBench for video question answering (Table~\ref{tab:lossund}).
With the full objective, \modelnamenc{} achieves $33.17/29.11$ Avg.\ mAP on THUMOS14/ActivityNet and $86.03$ mAP on MVBench.
Removing the drift regularizer $\mathcal{L}_{\text{drift}}$, which anchors the adapted encoder to the pretrained VAE manifold, degrades performance by $1.90/1.49$ mAP on THUMOS14/ActivityNet and by $2.71$ points on MVBench, indicating that maintaining a stable latent space is important for robust transfer across both localization and QA.\looseness-1

The autoregressive alignment loss $\mathcal{L}_{\text{AR}}$ has a different effect: dropping it leads to a relatively small drop on temporal localization ($0.72/1.13$ mAP), but causes a pronounced $6.58$-point decline on MVBench.
This suggests that sequence-level token modeling is especially critical for high-level video reasoning, where the model must integrate information over longer temporal horizons.
When we remove both the DINO-guided visual loss and the autoregressive loss ($\mathcal{L}_{\text{dino}} + \mathcal{L}_{\text{AR}}$), performance drops most severely on TAL (by $3.88$ and $2.33$ mAP on THUMOS14 and ActivityNet, respectively) and by $4.46$ points on MVBench, highlighting the complementarity between discriminative visual supervision and global token prediction.\looseness-1
\Tablesupunderstanding

We further study the codebook-related objectives, as described in \Cref{eq:codebook}.
Ablating the text-conditioned alignment term $\mathcal{L}_{\text{text-cond.}}$ reduces performance by $2.95/1.56$ mAP on THUMOS14/ActivityNet and by $2.47$ points on MVBench, while removing the text--codebook alignment $\mathcal{L}_{\text{text-codebook}}$ yields a similar degradation ($2.06/2.04$ mAP and $2.12$ points).
These results confirm that both local token--text alignment and global codeword--text alignment are necessary to maintain semantically structured latents that generalize well across detection and QA tasks.
In contrast, dropping the vision-comment loss $\mathcal{L}_{\text{vision\_commitment}}$ produces the smallest degradation (at most $0.50/0.90$ mAP on THUMOS14/ActivityNet and $1.80$ points on MVBench), suggesting that, for downstream understanding, the semantic shaping of the codebook is more critical than the pure vision commitment penalty.
Overall, the complete loss formulation is consistently superior, validating our multi-part objective for unified video understanding.\looseness-1

\section{Implementation Details}\label{app:impl}
\modelnamenc is implemented using the pretrained Wan 2.2L~\cite{wan2025wan} video VAE as the backbone to ensure high-fidelity visual reconstruction. We initialize the encoder with pretrained WAN-2.2 weights, while the LaPQ module and decoder are randomly initialized. Both the encoder and decoder of Wan 2.2L are kept frozen to preserve the pretrained visual quality. To encourage the model to capture long-range temporal dependencies and motion continuity, we temporally mask 30\% of frames and apply cosine-based spatial masking on each frame following~\cite{he2022masked}.  

To enable efficient adaptation to our multi-scale semantic learning objective without full fine-tuning, we incorporate LoRA adapters~\cite{hu2022lora} with rank 16 and alpha 32 into all encoder blocks. 
These adapters provide lightweight parameterization while preserving the representational capacity of the backbone. For text conditioning, we employ the Qwen2.5-VL (3B)~\cite{bai2025qwen2}, referred as pretrained VLM in main paper, to extract semantically rich textual embeddings that guide both the quantization and the multimodal semantic alignment. Loss weights are set to $\lambda_{\text{recon}}{=}2.5$, $\lambda_{\text{codebook}}{=}2.5$, $\lambda_{\text{AR}}{=}1.5$, and $\lambda_{\text{drift}}{=}0.6$.  To reduce memory footprint and accelerate training, we apply VAE tiling for latent-space tokenization and quantize the alignment VLM to AWQ INT-4~\cite{lin2024awq}. In \modelnamenc{}, {En}($\cdot$) refers to the frozen {DINOv3}~\cite{dinov3} encoder, which serves as a strong pretrained visual encoder. It is used to provide stable, high-quality visual features that anchor adaptation and prevent drift from the pretrained visual manifold.\looseness-1

All baselines are trained under identical dataset settings to ensure fair comparison. The average prompt length during training is $\sim$60 tokens. Training is conducted in three progressive stages, each designed to incrementally strengthen multimodal alignment and visual–temporal consistency.

\noindent \textbf{Stage 1 — Self-Supervised Pretraining.}
In the first stage, we perform self-supervised pretraining focused on language alignment. 
Input spatial resolutions vary from $512 \times 512$ up to $2048 \times 2048$, and we train across multiple aspect ratios, including $1{:}1$, $4{:}3$, $3{:}2$, $16{:}9$, and $2{:}1$.  
For temporal modeling, the number of frames ranges from $16{+}1$ to $96{+}1$, where the additional frame denotes the conditioning key frame. This stage establishes robust cross-modal grounding and spatial-temporal coherence.

\noindent \textbf{Stage 2 — Text–Visual Token Alignment.}
The second stage incorporates text–visual token alignment through the pretrained {Qwen-2.5-VL (3B)} backbone. We maintain the same spatial and temporal configurations as Stage~1 for training stability. 
This stage refines the alignment between linguistic tokens and visual embeddings, enhancing the semantic consistency of multimodal representations.

\noindent \textbf{Stage 3 — Full-Scale Fine-Tuning.}
In the final stage, the model is exposed to multi-resolution and multi–aspect-ratio inputs, ranging from $128 \times 128$ to $4096 \times 4096$, covering the same aspect ratios ($1{:}1$, $4{:}3$, $3{:}2$, $16{:}9$, $2{:}1$). The number of frames is kept consistent with previous stages.  
Due to increased resolution and GPU memory demands, the batch size is reduced from 4~$\rightarrow$~2 per GPU. This stage optimizes the model with both alignment loss and a frame-level retention loss computed using {DINOv3}~\cite{dinov3}, ensuring long-range temporal retention and fine-grained visual correspondence.\looseness-1

All training stages are optimized using {AdamW}~\cite{loshchilov2019decoupled} with an initial learning rate of $1\times10^{-5}$ and a cosine annealing scheduler. Gradient accumulation steps are kept constant across stages. We train on a cluster of 128$\times$NVIDIA A100  (80~GB) GPUs. The total number of optimization steps is 30K for Stage~1, 60K for Stage~2, and 180K for Stage~3.

\FigSuptrainingplot

\section{Datasets}\label{app:data}
To comprehensively train, validate, and evaluate \modelnamenc{}, we employ a diverse collection of large-scale video--text datasets spanning various resolutions, domains, and task-specific settings. 

\subsection{Training Datasets}

\textbf{Droplet-10M~\cite{zhang2025dropletvideo} (Subset).}
We curate a subset of the {Droplet-10M} dataset, consisting of approximately {4--5 million HD videos (720p)}. 
This subset serves as the foundation for pretraining, providing broad coverage of human activities, natural scenes, and diverse motion patterns, and a dense caption distribution and consistent temporal dynamics, crucial for learning fine-grained video--text alignment.
To ensure data quality and maintain high spatial fidelity, only videos at 720p or higher resolution are retained.

\noindent \textbf{OpenVid-1M~\cite{nan2025openvid} (300K Subset).}
We supplement training with {300K high-quality video--caption pairs} sampled from {OpenVid-1M}. Only HD videos are selected to maintain visual consistency. This subset contributes to expanding linguistic diversity and contextual variation, improving open-domain caption understanding and cross-modal reasoning.

\noindent \textbf{UltraVideo~\cite{xue2025ultravideo} (40K with Reconstructed Captions).}
To enrich representation at extreme resolutions, we incorporate {40K ultra--high-definition videos (4K and 8K)} from the {UltraVideo} dataset. 
Since many of these videos lack high-quality textual descriptions, we generate captions using a multimodal LLM pipeline. 
This enables the model to learn from high-fidelity visual data and supports scalability to higher-resolution downstream applications.

\subsection{Testing and Validation Datasets}

\textbf{OpenVid-1M~\cite{nan2025openvid}  (Test Split).}
We employ {100K samples} from the {OpenVid-1M} test split for evaluating generalization to unseen open-domain video--text pairs. 
This ensures consistency with the distribution of the training data while validating model generalization under identical data conditions.\looseness-1

\noindent \textbf{WebVid-10M~\cite{bain2021frozen} (Validation)  and COCO~\cite{lin2014microsoft} (Validation).} 
For generative evaluation, we follow the {WebVid-10M} and COCO-Val validation protocols. For class-guided video generation, we further evaluate on the \textbf{UCF-101}~\cite{soomro2012ucf101} dataset.

\noindent \textbf{MCL-JCV~\cite{wang2016mcl} (Compression Validation).}
To assess the effectiveness of our video compression and reconstruction, we employ the MCL-JCV benchmark. This dataset provides a controlled setup for evaluating perceptual quality and rate--distortion tradeoffs under varying compression levels.

To evaluate generalization beyond supervised training, we test the model under zero-shot conditions across diverse downstream video understanding tasks.
For zero-shot action localization, we evaluate on \textbf{ActivityNet}~\cite{caba2015activitynet} and \textbf{THUMOS14}~\cite{idrees2017thumos}, which contain diverse and complex activities. 
For zero-shot video segmentation, we benchmark on \textbf{YouTube-VIS 2021}~\cite{yang20213rd}  and \textbf{OVIS}~\cite{qi2022occluded}. Both datasets present challenging dynamic scenes with multiple interacting objects and frequent occlusions. For video classification, we utilize \textbf{Kinetics}~\cite{kay2017kinetics}, while for VideoQA, we adopt \textbf{MVBench}~\cite{li2024mvbench}, a comprehensive multi-task benchmark covering spatiotemporal reasoning, action understanding, and commonsense interpretation.\looseness-1

\end{document}